\newcommand{\iid}{\stackrel{\mathrm{\tiny{i.i.d.}}}{\sim}}
\def\ddefloop#1{\ifx\ddefloop#1\else\ddef{#1}\expandafter\ddefloop\fi}
\def\ddef#1{\expandafter\def\csname bb#1\endcsname{\ensuremath{\mathbb{#1}}}}
\def\ddef#1{\expandafter\def\csname c#1\endcsname{\ensuremath{\mathcal{#1}}}}
\def\ddef#1{\expandafter\def\csname s#1\endcsname{\ensuremath{\mathscr{#1}}}}
\def\ddef#1{\expandafter\def\csname v#1\endcsname{\ensuremath{\boldsymbol{#1}}}}
\def\ddef#1{\expandafter\def\csname v#1\endcsname{\ensuremath{\boldsymbol{\csname #1\endcsname}}}}
\def\balign#1\ealign{\begin{align}#1\end{align}}
\def\baligns#1\ealigns{\begin{align*}#1\end{align*}}
\def\balignat#1\ealign{\begin{alignat}#1\end{alignat}}
\def\balignats#1\ealigns{\begin{alignat*}#1\end{alignat*}}
\def\bitemize#1\eitemize{\begin{itemize}#1\end{itemize}}
\def\benumerate#1\eenumerate{\begin{enumerate}#1\end{enumerate}}
\newenvironment{talign*}
 {\csname align*\endcsname}
 {\endalign}
\newenvironment{talign}
 {\csname align\endcsname}
 {\endalign}
\def\balignst#1\ealignst{\begin{talign*}#1\end{talign*}}
\def\balignt#1\ealignt{\begin{talign}#1\end{talign}}
\let\originalleft\left
\let\originalright\right
\renewcommand{\left}{\mathopen{}\mathclose\bgroup\originalleft}
\renewcommand{\right}{\aftergroup\egroup\originalright}
\def\tinycitep*#1{{\tiny\citep*{#1}}}
\def\tinycitealt*#1{{\tiny\citealt*{#1}}}
\def\tinycite*#1{{\tiny\cite*{#1}}}
\def\smallcitep*#1{{\scriptsize\citep*{#1}}}
\def\smallcitealt*#1{{\scriptsize\citealt*{#1}}}
\def\smallcite*#1{{\scriptsize\cite*{#1}}}
\def\mbb#1{\mathbb{#1}}
\newcommand{\norm}[1]{\left\lVert#1\right\rVert}
\theoremstyle{plain}
\def\R{\mathbb{R}}
\def\N{\mathbb{N}}
\def\<{\left\langle} % Angle brackets
\def\>{\right\rangle}
\def\norm#1{\left\|{#1}\right\|} % A norm with 1 argument
\def\polylog{\operatorname{polylog}}
\def\E{\mbb{E}} % Expectation symbol
\def\P{\mbb{P}} % Probability symbol
\newcommand{\Exp}[1]{\operatorname{exp}\left({#1}\right)} % Exponential 
\DeclareSymbolFont{rsfs}{U}{rsfs}{m}{n}
\DeclareSymbolFontAlphabet{\mathscrsfs}{rsfs}
\newcommand{\He}{\mathsf{He}}
\newtheorem{theo}{Theorem}
\newtheorem{defi}[theo]{Definition}
\newtheorem{prop}[theo]{Proposition}
\newtheorem{theorem}{Theorem}
\newtheorem{lemma}[theo]{Lemma}
\newtheorem{proposition}[theo]{Proposition}
\newtheorem*{theorem*}{\textbf{Theorem}}
\renewenvironment{proof}{\noindent\textbf{Proof.}\hspace*{.3em}}{\qed\\}
\newenvironment{proof-sketch}{\noindent\textbf{Proof Sketch}
  \hspace*{1em}}{\qed\bigskip\\}
\newenvironment{proof-idea}{\noindent\textbf{Proof Idea}
  \hspace*{1em}}{\qed\bigskip\\}
\newenvironment{proof-of-lemma}[1][{}]{\noindent\textbf{Proof of Lemma {#1}}
  \hspace*{1em}}{\qed\\}
\newenvironment{proof-of-theorem}[1][{}]{\noindent\textbf{Proof of Theorem {#1}}
  \hspace*{1em}}{\qed\\}
\newenvironment{proof-attempt}{\noindent\textbf{Proof Attempt}
  \hspace*{1em}}{\qed\bigskip\\}
\newtheorem{assumption}{Assumption}
\newenvironment{proofof}[1][{}]{\par\noindent{\bf Proof of {#1}. }  }{\qed\bigskip}   
\renewcommand{\contentsname}{Table of Contents}
\newcommand{\citep}[1]{\cite{#1}}
\newcommand{\citet}[1]{\cite{#1}}
\renewcommand{\paragraph}{%
  \@startsection{paragraph}{4}%
  {\z@}{1.6ex \@plus 1ex \@minus .2ex}{-1em}%
  {\normalfont\normalsize\bfseries}%
}
\title{
Neural network learns low-dimensional polynomials with SGD \\ near the information-theoretic limit}
\author{
Jason D.~Lee\thanks{Princeton University. \texttt{jasonlee@princeton.edu}.} ,\,\,
Kazusato Oko\thanks{University of California, Berkeley and RIKEN AIP. \texttt{oko@berkeley.edu}.},\,\,
Taiji Suzuki\thanks{University of Tokyo and RIKEN AIP. \texttt{taiji@mist.i.u-tokyo.ac.jp}.} ,\,\,
Denny Wu\thanks{New York University and Flatiron Institute. \texttt{dennywu@nyu.edu}.}
\vspace{-2.5mm}
}
\begin{document}
\etocdepthtag.toc{mtchapter}
\etocsettagdepth{mtchapter}{subsection}
\etocsettagdepth{mtappendix}{none}

\maketitle 

\vspace{-2mm}

\begin{abstract}

We study the problem of gradient descent learning of a single-index target function $f_*(\boldsymbol{x}) = \textstyle\sigma_*\left(\langle\boldsymbol{x},\boldsymbol{\theta}\rangle\right)$ under isotropic Gaussian data in $\mathbb{R}^d$, 
where the unknown link function $\sigma_*:\mathbb{R}\to\mathbb{R}$ has information exponent $p$ (defined as the lowest degree in the Hermite expansion). Prior works showed that gradient-based training of neural networks can learn this target with $n\gtrsim d^{\Theta(p)}$ samples, and such complexity is predicted to be necessary by the correlational statistical query lower bound. 
Surprisingly, we prove that a two-layer neural network optimized by an SGD-based algorithm (on the squared loss) learns $f_*$ with a complexity that is not governed by the information exponent. Specifically, for arbitrary polynomial single-index models, we establish a sample and runtime complexity of $n \simeq T = \Theta(d\!\cdot\! \mathrm{polylog} d)$, where $\Theta(\cdot)$ hides a constant only depending on the degree of $\sigma_*$; this dimension dependence matches the information theoretic limit up to polylogarithmic factors. More generally, we show that $n\gtrsim d^{(p_*-1)\vee 1}$ samples are sufficient to achieve low generalization error, where $p_* \le p$ is the \textit{generative exponent} of the link function. Core to our analysis is the reuse of minibatch in the gradient computation, which gives rise to higher-order information beyond correlational queries. 

\end{abstract}

\allowdisplaybreaks

\section{Introduction}
\label{sec:intro}

Single-index models are a classical class of functions that capture low-dimensional structure in the learning problem. To efficiently estimate such functions, the learning algorithm should extract the relevant (one-dimensional) subspace from high-dimensional observations; hence this problem setting has been extensively studied in deep learning theory \cite{bai2019beyond,ba2022high,bietti2022learning,mousavi2023neural,mahankali2024beyond,wang2024nonlinear}, to examine the adaptivity to low-dimensional targets and benefit of representation learning in neural networks (NNs) optimized by gradient descent (GD). 
In this work we study the learning of a single-index target function under isotropic Gaussian data:
\begin{align}
    y_i = f_*(\vx_i) + 	\varsigma_i, \quad 
    f_*(\vx_i) = \sigma_*(\langle\vx_i,\vtheta\rangle), 
    \quad
    \vx_i\iid\cN(0,\vI_d), 
    \label{eq:teacher}
\end{align}
where $\varsigma_i$ is i.i.d.~label noise, $\vtheta\in\R^d$ is the direction of index features, and we assume the link function $\sigma_*:\R\to\R$ has information exponent $p\in\N_+$ defined as the index of the first non-zero coefficient in the Hermite expansion (see Definition~\ref{def:information-exponent}). 
 
Equation \eqref{eq:teacher} requires the estimation of the one-dimensional link function $\sigma_*$ and the relevant direction $\vtheta$; it is known that learning is information theoretically possible with $n\gtrsim d$ training examples \cite{dudeja2024statistical,damian2024computational}. 
Indeed, when $\sigma_*$ is polynomial, such statistical complexity can be achieved up to logarithmic factors by a tailored algorithm that exploit the structure of low-dimensional target \cite{chen2020learning}. 
On the other hand, for gradient-based training of two-layer NNs, existing works established a sample complexity of $n\gtrsim d^{\Theta(p)}$
\cite{arous2021online,bietti2022learning,damian2023smoothing}, which presents a gap between the information theoretic limit and what is computationally achievable by (S)GD. Such a gap is also predicted by the correlational statistical query (CSQ) lower bound \cite{damian2022neural,abbe2023sgd}, which roughly states that for a CSQ algorithm to learn (isotropic) Gaussian single-index models using less than exponential compute, 
a sample size of $n\gtrsim d^{p/2}$ is necessary. 

Although CSQ lower bounds are frequently cited to imply a fundamental barrier of learning via SGD (with the squared/correlation loss), strictly speaking, the CSQ model does not include empirical risk minimization with gradient descent, due to the non-adversarial noise and existence of non-correlational terms in the gradient computation. 
Very recently, \cite{dandi2024benefits} exploited higher-order terms in the gradient update arising from the reuse of the same training data, and showed that for certain link functions with high information exponent ($p>2$), two-layer NNs may still achieve weak recovery (i.e., nontrivial overlap with $\vtheta$) after two GD steps with $\Theta(d)$ batch size. 
While this presents evidence that GD-trained NNs can learn $f_*$ beyond the sample complexity suggested by the CSQ lower bound, the weak recovery statement in \cite{dandi2024benefits} may not translate to statistical guarantees; moreover, the class of functions where SGD can achieve vanishing generalization error is not fully characterized, as only a few specific examples of link functions are discussed.

\begin{figure}[t]
% \vspace{-3.5mm}
\centering
\begin{minipage}[t]{0.48\linewidth}
\centering
{\includegraphics[height=0.72\textwidth]{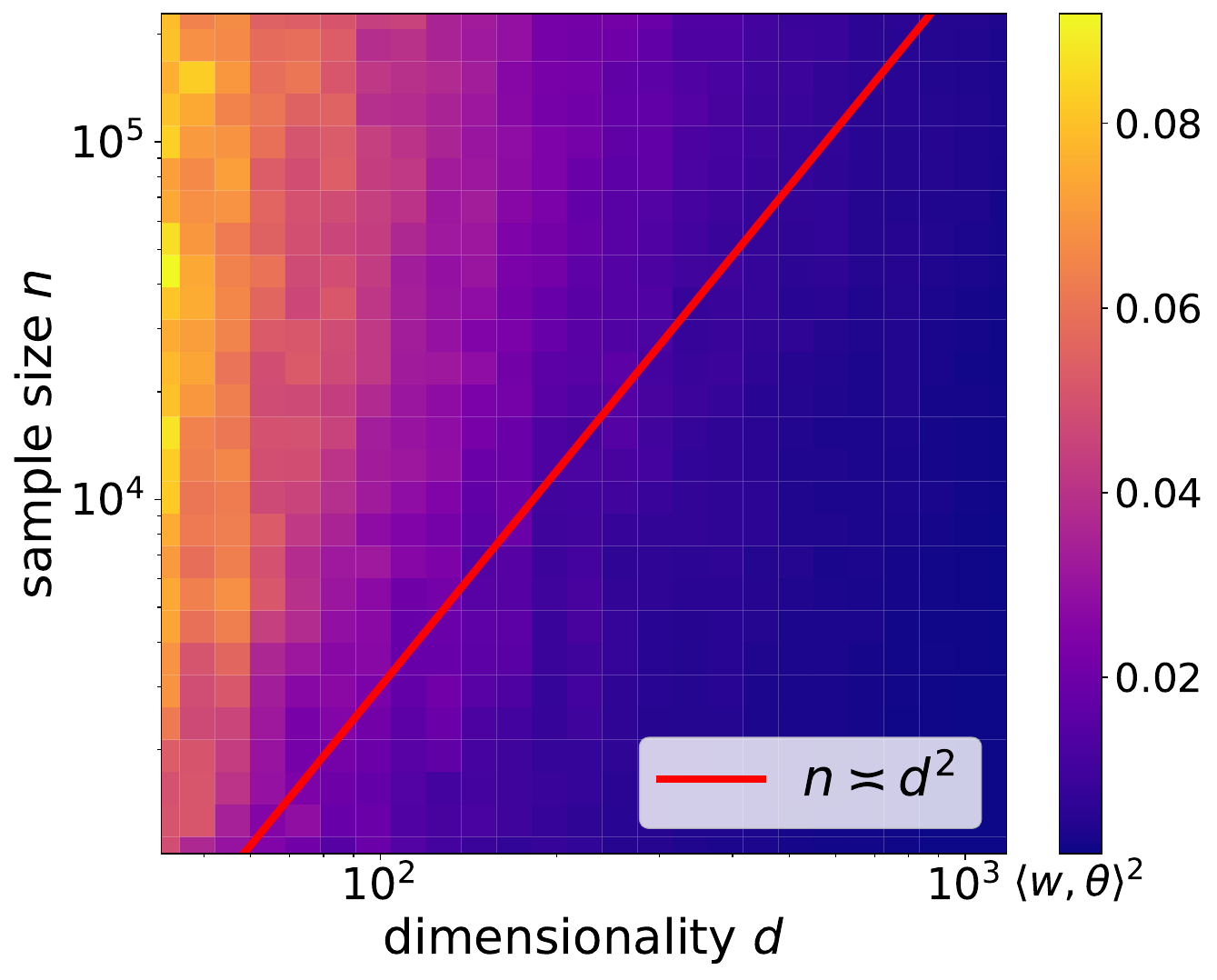}}  \\
\small (a) Online SGD (weak recovery). 
\end{minipage}%
\begin{minipage}[t]{0.48\linewidth}
\centering 
{\includegraphics[height=0.72\textwidth]{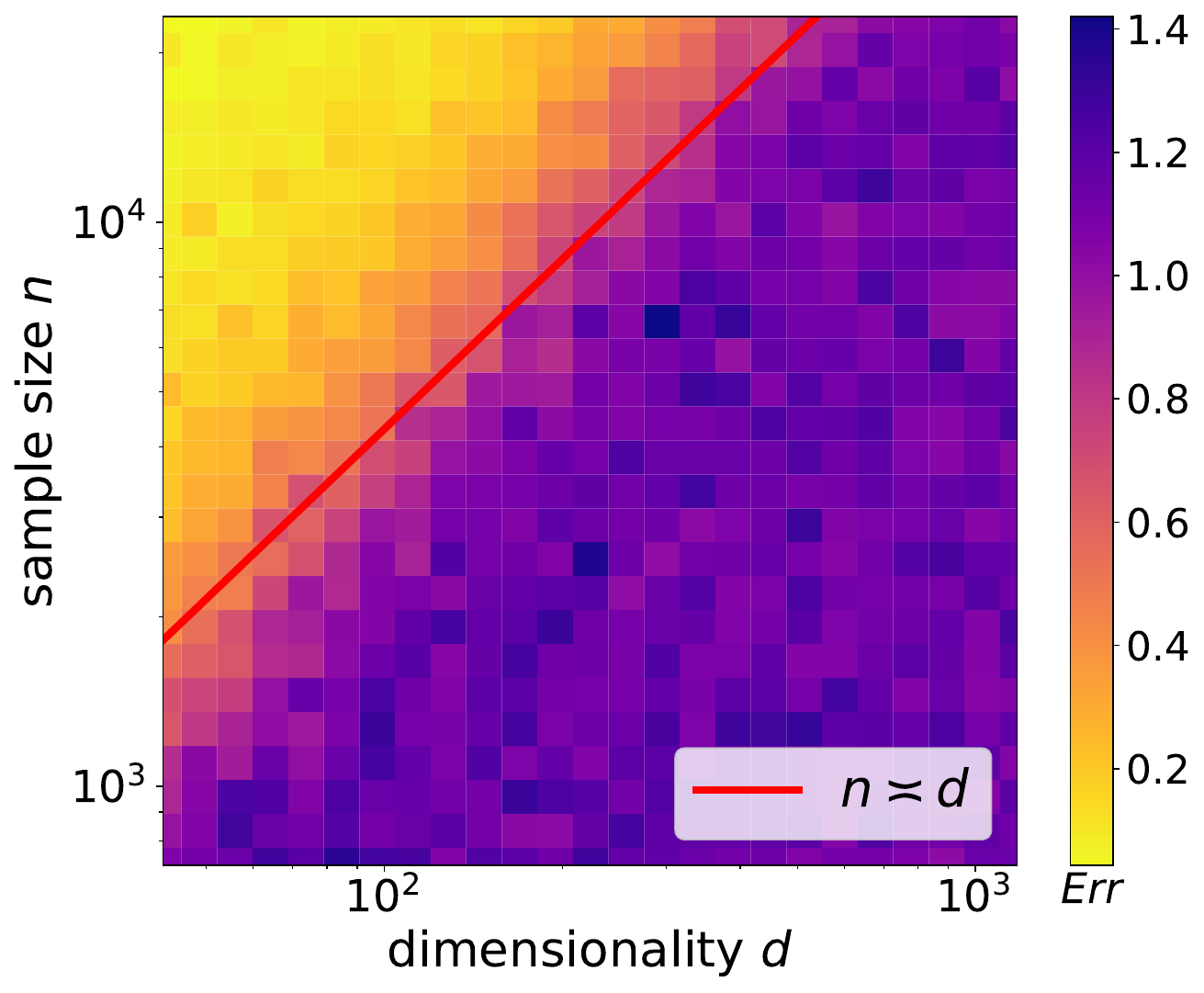}} \\ 
\small (b) Same-batch GD (generalization error). 
\end{minipage}%
\caption{\small We train a ReLU NN \eqref{eq:student} with $N=1024$ neurons using SGD (squared loss) with step size $\eta=1/d$ to learn a single-index target $f_*(\vx) = \He_3(\langle\vx,\vtheta\rangle)$; heatmaps are values averaged over 10 runs. $(a)$ online SGD with batch size $B=8$; $(b)$ GD on the same batch of size $n$ for $T=2^{14}$ steps. For online SGD we only report weak recovery (i.e., averaged overlap between neuron $\vw$ and target $\vtheta$) since the test error does not drop.
}   
% \vspace{-1mm}
\label{fig:intro}  
\end{figure}  

Given the existence of (non-NN) algorithms that learn any single-index polynomials in $n=\tilde{O}(d)$ samples \cite{chen2020learning} regardless of the information exponent $p$, and more generally, non-CSQ algorithms with a sample complexity surpassing the CSQ lower bound \cite{damian2024computational}, it is natural to ask if gradient-based training of NNs can achieve similar statistical efficiency for this function class. Motivated by observations in \cite{dandi2024benefits} that SGD with reused data may break the ``curse of information exponent'', we aim to address the question: 
\begin{center}
    \textit{Can NN optimized by SGD with reused batch learn single-index $f_*$ beyond the CSQ lower bound? 
    And for polynomial $\sigma_*$, can learning succeed near the information-theoretic limit $n\simeq d$?} 
\end{center}

Empirically, the separation between one-pass (online) and multi-pass SGD is clearly observed in Figure~\ref{fig:intro}, where we trained the same two-layer ReLU neural network to learn a single-index polynomial with information exponent $p=3$. We see that SGD with reused data (Figure~\ref{fig:intro}(b)) reaches low test error using roughly $n\simeq d$ samples, whereas online SGD fails to achieve even weak recovery with much larger sample size $n=\Omega(d^2)$. 
Our main contribution is to establish this improved statistical complexity for two-layer NNs trained by a variant of SGD with reused training data.

\subsection{Our Contributions}

We answer the above question in the affirmative by showing
that SGD training (with the squared loss) on a natural class of shallow NNs can achieve small generalization error using polynomial compute and a sample complexity that is not governed by the information exponent, if we employ a layer-wise optimization procedure (analogous to that in \cite{ba2022high,damian2022neural,abbe2023sgd}) and reuse of the same minibatch. The core insight is that SGD can implement a full statistical query (SQ) algorithm that goes beyond CSQ, despite the correlational structure of the squared loss. Our main finding is summarized by the following theorem. 

\begin{theorem*}[informal] 
A shallow NN with $N=\tilde{\Theta}_d(1)$ neurons can learn arbitrary single-index models up to small population loss: $\E_{\vx}[(f_{\vTheta}(\vx) - f_*(\vx))^2] = o_{d,\P}(1)$, if we employ an SGD-based algorithm (with reused training data) to minimize the squared loss objective, with a sample and runtime complexity of $n,T = \tilde{\Theta}_d(d^{(p_*-1)\vee 1})$, where $p_*$ is the \textit{generative exponent} of the link function $\sigma_*$. 
\end{theorem*}

Note that the generative exponent \cite{damian2024computational} is defined as the \textit{minimum} information exponent of the link function $\sigma_*$ after arbitrary $L^2$ transformation, and hence by definition $p_*\le p$  (equality is achieved by the identity transformation). 
We make the following remarks on our main result. 
\begin{itemize}[leftmargin=*,topsep=1mm,itemsep=1mm]
\item We know that $p_*\le 2$ for arbitrary \textit{polynomial} link functions. Therefore, the theorem suggests that NN + SGD with reused batch can learn single-index polynomials with a sample complexity $n=\tilde{O}_d(d)$ which is information theoretically optimal up to polylogarithmic factors, hence matching the efficiency of SQ algorithms tailored for low-dimensional polynomial regression \cite{chen2020learning}.
\item For non-polynomial $\sigma_*$ with high generative exponent $p_*>2$, our sample complexity $n\gtrsim d^{p_*-1}$ can be interpreted as an SQ version of the online SGD result in \cite{arous2021online}. Since the information exponent $p$ can be arbitrarily larger than the generative exponent $p_*$, our main theorem disproves a conjecture in \cite{abbe2023sgd} stating that $n\asymp d^{p/2}$ is the optimal sample complexity for empirical risk minimization with SGD on the squared loss / correlation loss.  
\item A key observation in our analysis is that with suitable activation function, SGD with reused batch can go beyond correlational queries and implement (a subclass of) SQ algorithms. This enables polynomial transformations to the labels that reduce the information exponent, and therefore optimization can escape the high-entropy ``equator'' at initialization in polylogarithmic time. 
\end{itemize}
Upon completion of this work, we became aware of the preprint \cite{arnaboldi2024repetita}
showing weak recovery (for polynomial targets with $p_*\le 2$) with similar sample complexity, also by exploiting the reuse of training data. Our work was conducted independently and simultaneously.

\begin{figure}[t]
% \vspace{-4mm}
\centering
\begin{minipage}{\textwidth}
\begin{tikzpicture}[x=\textwidth/18]
\usetikzlibrary{arrows.meta}
  % Define the style for the timeline and markers
  \tikzset{
    timeline/.style={very thick, line width=1.8pt, -{Triangle[width=7.5pt,length=10pt]}},
    major tick/.style={thick, line width=1.5pt},
    minor tick/.style={thin, line width=1pt},
    tick label/.style={above, outer sep=2mm, scale=0.6}, % Adjusted the scale for text to fit better
    node style/.style={align=center}
  }

  % Draw the timeline
  \draw[timeline] (0,0) -- (15.92,0);
  % Add minor ticks and labels
  \draw[minor tick] (0, -0.15) -- (0, 0.15);
  % \node[tick label] at (1, 0.1) {1};
  \draw[minor tick] (1, -0.15) -- (1, 0.15);
  \draw[minor tick] (6.8, -0.15) -- (6.8, 0.15);
  \draw[minor tick] (11.2, -0.15) -- (11.2, 0.15);
  \draw[minor tick] (14.6, -0.15) -- (14.6, 0.15);

  % Labels below timeline
  \node[node style,font=\scriptsize] at (-1, 0) {Information \\ theoretic limit};
  \node[node style,font=\small] at (1, 0.7) {
  SGD + batch reuse [{\color{red!50!black}\textbf{This work}}] \\
  SQ algorithm \cite{chen2020learning}};
  \node[node style,font=\small] at (6.8, 0.7) {Smoothed SGD \cite{damian2023smoothing} \\ CSQ lower bound \cite{damian2022neural}};
  \node[node style,font=\small] at (11.2, 0.7) {One-pass SGD \\ \cite{arous2021online}};
  \node[node style,font=\small] at (14.6, 0.7) {Kernel methods \\ \cite{ghorbani2019linearized}};

  % Academic references 
  \node[align=center, font=\normalsize] at (0, -0.41) {$d$};
  \node[align=center, font=\normalsize] at (1, -0.42) {$\tilde{\Theta}(d)$};
  \node[align=center, font=\normalsize] at (6.8, -0.42) {$\tilde{\Theta}(d^{p/2})$};
  \node[align=center, font=\normalsize] at (11.2, -0.42) {$\tilde{\Theta}(d^{p-1})$};
  \node[align=center, font=\normalsize] at (14.6, -0.42) {$\Theta(d^{q})$}; 
\end{tikzpicture}
\caption{\small Complexity of learning single-index model where the link function $\sigma_*$ is a degree-$q$ polynomial with information exponent $p$. For the CSQ lower bound, we translate the tolerance to sample complexity using the i.i.d.~concentration heuristic $\tau\approx n^{-1/2}$. 
We restrict ourselves to algorithms using polynomial compute; this excludes the sphere-covering procedure in \cite{damian2024computational} or exponential-width neural network in \cite{bach2017breaking,takakura2024mean}.
}
\label{fig:main}
\end{minipage}
% \vspace{-1mm}
\end{figure}

\section{Problem Setting and Prior Works}
\label{sec:setting}

\paragraph{Notations.}
$\|\cdot\|$ denotes the $\ell_2$ norm for vectors and the $\ell_2\to\ell_2$ operator norm for matrices. 
$O_d(\cdot)$ and $o_d(\cdot)$ stand for the big-O and little-o notations, where the subscript highlights the asymptotic variable $d$ and suppresses dependence on $p,q$; we write $\tilde{O}(\cdot)$ when (poly-)logarithmic factors are ignored. $\cO_{d,\P}(\cdot)$ (resp.~$o_{d,\P}(\cdot)$) represents big-O (resp.~little-o) in probability as $d\to\infty$. 
$\Omega(\cdot),\Theta(\cdot)$ are defined analogously. 
$\gamma$ is the standard Gaussian distribution in $\R$. 
We denote the $L^2$-norm of a function $f$ with respect to the data distribution (which will be specified) as $\norm{f}_{L^2}$. 
For $g:\R\to\R$, we denote $g^i$ as its $i$-th exponentiation, and $g^{(i)}$ is the $i$-th derivative. 
We say an event happens \textit{with high probability} when the failure probability is bounded by $\Exp{-C\log d}$ for large constant $C$.

\subsection{Complexity of Learning Single-index Models}
\label{sec:single-index}

We aim to learn a single-index model \eqref{eq:teacher} where the link function $\sigma_*:\R\to\R$ has information exponent $p$ defined as follows \cite{dudeja2018learning,arous2021online}.
\begin{defi}[Information exponent]
Let $\{\He_j\}_{j=0}^{\infty}$ denote the normalized Hermite polynomials. The information exponent of $g\in L^2(\gamma)$, denoted by $\mathrm{IE}(g):=p\in\N_+$, is the index of the first non-zero Hermite coefficient of $g$, that is, given $g(z) = \sum_{i=0}^\infty\alpha_i \He_i(z)$, $p := \min\{i \!>\! 0: \alpha_i \!\neq\! 0\}$.
\label{def:information-exponent}
\end{defi}
By definition, when $\sigma_*$ is a degree-$q$ polynomial, we always have $p \le q$. 
Note that $f_*$ contains $\Theta(d)$ parameters to be estimated, and hence \textit{information theoretically} $n\gtrsim d$ samples are both sufficient and necessary for learning \cite{mondelli2018fundamental,barbier2019optimal,damian2024computational}; however, the sample complexity achieved by different (polynomial time) algorithms depends on structure of the link function.  
\begin{itemize}[leftmargin=*,topsep=1mm,itemsep=1mm]
    \item \textbf{Kernel Methods.} Rotationally invariant kernels cannot adapt to the low-dimensional structure of single-index $f_*$ and hence suffer from the curse of dimensionality \cite{yehudai2019power,ghorbani2019linearized,donhauser2021rotational,ba2022high}. 
    By a standard dimension argument \citep{kamath2020approximate,hsu2021approximation,abbe2022merged}, we know that in the isotropic data setting, kernel methods (including neural networks in the lazy regime \cite{jacot2018neural,chizat2018note}) require $n\gtrsim d^q$ samples to learn degree-$q$ polynomials in $\R^d$. 
    \item \textbf{Gradient-based Training of NNs.} While NNs can easily approximate a single-index model \cite{bach2017breaking}, the sample complexity of gradient-based learning established in prior works typically scales as $n\gtrsim d^{\Theta(p)}$: in the well-specified setting, \cite{arous2021online} proved a sample complexity of $n=\tilde{\Theta}(d^{p-1})$ for online SGD, which is later improved to $\tilde{\Theta}(d^{p/2})$ by a smoothed objective \cite{damian2023smoothing}; as for the misspecified setting, \cite{bietti2022learning,dandi2023learning} showed that $n\gtrsim d^{p}$ samples suffice, and in some cases a $\tilde{\Theta}(d^{p-1})$ complexity is achievable \cite{abbe2023sgd,oko2024learning}. Consequently, at the information-theoretic limit ($n\asymp d$), existing results can only cover the learning of low information exponent targets \cite{abbe2022merged,berthier2023learning,ba2023learning}. 
    This exponential dependence on $p$ also appears in the CSQ lower bounds \cite{damian2022neural,abbe2022merged}, which is often considered to be indicative of the performance of SGD learning with the squared loss (see Section~\ref{sec:CSQ}). 
\end{itemize}

\paragraph{Statistical Query Learners.} 
If we do not restrict ourselves to correlational queries, the sample complexity of learning \eqref{eq:teacher} can be drastically improved. Specifically, for polynomial $\sigma_*$, \cite{chen2020learning} gave an SQ algorithm that achieves low generalization error in $n=\tilde{O}(d)$ samples, which is near the information-theoretic limit; the key ingredient is to construct nonlinear transformations to the labels that lowers the information exponent to $2$; similar preprocessing also appeared in context of phase retrieval \citep{mondelli2018fundamental,barbier2019optimal}. Such transformations do not belong to CSQ, but can be utilized by a full SQ learner to enhance the statistical efficiency. Recently, \cite{damian2024computational} introduced the \textit{generative exponent} which governs the complexity of SQ algorithms. 
\begin{defi}[Generative exponent] 
The generative exponent (GE) of $g\in L^2(\gamma)$ is defined as the lowest information exponent (IE) after arbitrary $L^2$ transformation, that is, 
\begin{align}
    p_* =: \mathrm{GE}(g) = \inf_{\cT\in L^2(P_y)} \mathrm{IE}(\cT\circ g). 
\end{align}
\end{defi}
The generative exponent is the smallest information exponent obtained by all possible label transformations. By definition we always have $p^*\le p$, and the gap between the two indices can be arbitrarily large; for example, for the Hermite polynomials we have $\mathrm{IE}(\He_k) = k$ whereas $\mathrm{GE}(\He_k)\le 2$. 

\cite{damian2024computational} established a sample complexity lower bound of $n = \Omega(d^{\nicefrac{p_*}{2}\vee 1})$ for full SQ learners with polynomial compute (assuming $\tau\approx n^{-1/2}$), and obtained matching upper bound by a tensor partial-trace algorithm. Our goal is to show that SGD training of two-layer neural network can also achieve a sample and runtime complexity that scales with $n \simeq d^{\Theta(p_*)}$, where the dimension dependence is governed by the generative exponent $p_*$ instead of the information exponent $p$.

\subsection{Can Gradient Descent Go Beyond Correlational Queries?}
\label{sec:CSQ}

\paragraph{Correlational statistical query.} 
A statistical query (SQ) learner \citep{kearns1998efficient,reyzin2020statistical}  accesses the target $f_*$ through noisy queries $\tilde{\phi}$ with error tolerance $\tau$: 
$|\tilde{\phi} - \E_{\vx,y}[\phi(\vx,y)]|\le\tau$. Lower bound on the performance of SQ algorithm 
is a classical measure of computational hardness. 
In the context of gradient-based optimization, an often-studied subclass of SQ is the \textit{correlational} statistical query (CSQ) \citep{bshouty2002using} where the query is restricted to (noisy version of) $\E_{\vx,y}[\phi(\vx)y]$. To see the connection between CSQ and SGD, consider the gradient of expected squared loss for one neuron $f_{\vw}(\vx)$: 
\begin{align}
    \nabla_{\vw} \E_{\vx,y}(f_{\vw}(\vx) - y)^2 
    \propto
    - \E_{\vx,y} [\underbrace{y \cdot \nabla_{\vw} f_{\vw}(\vx)}_{\text{correlational query}}] \,+\, \E_{\vx} [\underbrace{f_{\vw}(\vx)\cdot \nabla_{\vw} f_{\vw}(\vx)}_{\text{\!\!\!\!can be evaluated without $y$\!\!\!\!}}]. 
\end{align}
One can see that information of the target function is encoded in the correlation term in the gradient.  
To infer the statistical efficiency of GD in the empirical risk minimization setting, we replace the population gradient with the empirical average $\nabla_{\vw} (\frac{1}{n}\sum_{i=1}^n(f_{\vw}(\vx_i) - y_i)^2)$, and heuristically equate the CSQ tolerance $\tau$ with the scale of i.i.d.~concentration error $n^{-1/2}$. 

For the Gaussian single-index model class with information exponent $p$, \cite{damian2022neural} proved a lower bound stating that a CSQ learner either has access to queries with tolerance $\tau\lesssim d^{-p/4}$, or exponentially many queries are needed to learn $f_*$ with small population loss. Using the heuristic $\tau\approx n^{-1/2}$, this suggests a sample complexity lower bound $n\gtrsim d^{p/2}$ for polynomial time CSQ algorithm. This lower bound can be achieved by a landscape smoothing procedure \cite{damian2023smoothing} (in the well-specified setting), and is conjectured to be optimal for empirical risk minimization with SGD \cite{abbe2023sgd}.

\paragraph{SGD with reused data.} As previously discussed, the gap between SQ and CSQ algorithms primarily stems from the existence of label transformations that decrease the information exponent. While such transformation cannot be utilized by a CSQ learner, \cite{dandi2024benefits} argued that they may arise from two consecutive gradient updates using the same minibatch. For illustrative purposes, consider one neuron $f_{\vw}(\vx) = \sigma(\langle\vx,\vw\rangle)$ updated by two GD steps using the same data point $(\vx,y)$, starting from zero initialization $\vw^{0}=\mathbf{0}$ (we focus on the correlational term in the loss for simplicity): 
\begin{align}
    \!\!\!
    \vw^{2} =&~ \vw^{1} + \eta\cdot y\sigma'(\langle\vx,\vw^{1}\rangle) \vx =  \eta\sigma'(0)  \underbrace{y\cdot\vx}_{\text{\!\!\!CSQ term\!\!\!}} +~ \eta \underbrace{y\sigma'(\eta\sigma'(0) \norm{\vx}^2 \cdot y) \vx}_{\text{non-CSQ term}}. \label{eq:second-step} 
\end{align}
Under appropriate learning rate scaling $\eta\cdot\norm{\vx}^2=\Theta(1)$, one can see that in the second gradient step, the label $y$ is transformed by the nonlinearity $\sigma'$, even though the loss function itself is not modified.  
Based on this observation, \cite{dandi2024benefits} showed that if the non-CSQ term in \eqref{eq:second-step} reduces the information exponent to $1$, then \textit{weak recovery} (i.e., nontrivial overlap between the first-layer parameters $\vw$ and index features $\vtheta$) can be achieved after two GD steps with $n=\Theta(d)$ samples. 
% More details can be found in Section~\ref{sec:SGD-transformation}.

\subsection{Challenges in Establishing Statistical Guarantees}

Importantly, the analysis in \cite{dandi2024benefits} does not lead to concrete learnability guarantees for the class of single-index polynomials for the following reasons: 
$(i)$ it is not clear if an appropriate nonlinear transformation that lowers the information exponent can always be extracted from SGD with reused data, and $(ii)$ the weak recovery guarantee may not translate to a sample complexity for the trained NN to achieve small generalization error. 
We elaborate these technical challenges below. 

\paragraph{SGD decreases information exponent.} To show weak recovery, \cite[Definition 3.1]{dandi2024benefits} assumed that the student activation $\sigma$ can reduce the information exponent of the labels to $1$; 
while a few examples are given, the existence of such transformations in SGD is not guaranteed:  
\begin{itemize}[leftmargin=*,topsep=1mm,itemsep=1mm]
    \item The label transformation employed in prior SQ algorithms \cite{chen2020learning} is based on thresholding, which reduces the information exponent to $2$ for any polynomial $\sigma_*$; however, isolating such function from SGD updates on the squared loss is challenging. Instead, we make use of monomial transformation which can be extracted from SGD via Taylor expansion. 
    \item If the link function satisfies $p_*\ge 2$, its information exponent after arbitrary nonlinear transformation is at least $2$; such functions are predicted not be not learnable by SGD in the $n\asymp d$ regime \cite{dandi2024benefits}. To handle this setting, we analyze the SGD update up to $\mathrm{poly}(d)$ time, at which a nontrivial overlap can be established by a Grönwall-type argument similar to \cite{arous2021online}. For $p_* = 2$, this recovers results on phase retrieval when $\sigma_*(z) = z^2$ which requires $n = \Omega( d\log d)$ samples. 
\end{itemize} 

\paragraph{From weak recovery to sample complexity.} Note that weak recovery (i.e., $|\langle\vw,\vtheta\rangle| > \varepsilon$ for some small constant $\varepsilon>0$) is generally insufficient to establish low generalization error of the trained NN. Therefore, we need to show that starting from a nontrivial overlap, subsequent gradient steps can achieve \textit{strong recovery} of the index features (i.e., $|\langle\vw,\vtheta\rangle| > 1-\varepsilon$), despite the link misspecification. After the first-layer parameters align with the target function, we train the second-layer parameters with SGD to learn the link function $\sigma_*$ with the aid of random bias units \cite{damian2022neural}.

\section{Learning Polynomial $f_*$ in Linear Sample Complexity}
\label{label:result}

We first consider the setting where $\sigma_*$ is polynomial with degree $q$ specified as follows. 
\begin{assumption}
The target function is given as $f_*(\vx) = \sigma_*(\langle\vx,\vtheta\rangle)$, where the link function $\sigma_*:\R\to\R$ admits the Hermite decomposition    $\sigma_*(z) = \sum_{i=p}^q \alpha_i\He_i(z)$. 
\label{assump:teacher}
\end{assumption}
For single-index polynomials, we do not expect a computational-to-statistical gap under the SQ class \cite{chen2020learning} --- indeed, we will establish learning guarantees near the information theoretic limit $n\asymp d$. 

\subsection{Training Algorithm}

We train the following two-layer network with $N$ neurons using SGD to minimize the squared loss: 
\begin{align}
    f_{\vTheta}(\vx) = \frac{1}{N}\sum_{j=1}^N a_j\sigma_j(\langle\vx,\vw_j\rangle + b_j), 
    \label{eq:student}
\end{align}
where $\vTheta=(\vw_j,a_j,b_j)_{j=1}^N$ are trainable parameters, and $\sigma_j:\R\to\R$ is the activation function defined as the sum of Hermite polynomials up to degree $C_\sigma$: $\sigma_j(z):= \sum_{i=0}^{C_\sigma}\beta_{j,i} \He_i(z)$, where $C_\sigma$ only depends on the degree of link function $\sigma_*$. Note that we allow each neuron to have a different nonlinearity as indicated by the subscript in $\sigma_j$; this subscript is omitted when we focus on the dynamics of one single neuron. 
Our SGD training procedure is described in Algorithm~\ref{alg:main}, and below we outline the key ingredients of the algorithm. 
\begin{itemize}[leftmargin=*,topsep=1mm,itemsep=1mm]
    \item Algorithm~\ref{alg:main} employs a layer-wise training strategy common in the recent feature learning theory literature \cite{damian2022neural,ba2022high,bietti2022learning,abbe2023sgd,mousavi2023gradient}, where in the first stage, we optimize the first-layer parameters $\{\vw_j\}_{j=1}^N$ with normalized SGD to learn the low-dimensional latent representation (index features $\vtheta$), and in the second phase, we train the second-layer $\{a_j\}_{j=1}^N$ to fit the unknown link function $\sigma_*$.
    \item The most crucial part in Phase I of Algorithm~\ref{alg:main} is the reuse of the same minibatch in the gradient computation. Specifically, we sample a fresh batch of training examples in \textit{every two GD steps}; this enables us to extract non-CSQ terms from two consecutive gradient updates outlined in \eqref{eq:second-step}. 
    \item We introduce an \textit{interpolation step} between the current and previous iterates with hyperparameter $\xi$ 
    to stabilize the training dynamics; this resembles a negative momentum often seen in optimization algorithms \cite{allen2018katyusha,zhang2019lookahead}; the role of this interpolation is discussed in Section~\ref{sec:SGD-transformation}. We use a projected gradient update $\tilde{\nabla}_{\vw} \cL(\vw) = (\vI_d-\vw^{2t}{\vw^{2t}}^\top)\nabla_{\vw}\cL(\vw)$ for steps $2t$ and $2t+1$, where $\nabla_{\vw}$ is the Euclidean gradient; similar use of projection also appeared in \citep{damian2023smoothing,abbe2023sgd}.
\end{itemize}

\begin{algorithm}[t]
\SetKwInOut{Input}{Input}
\SetKwInOut{Output}{Output}
\SetKwBlock{StageOne}{Phase I: normalized SGD on first-layer parameters}{}
\SetKwBlock{StageTwo}{Phase II: SGD on second-layer parameters}{}
\SetKw{Initialize}{Initialize}
	
\Input{Step sizes $\eta^t$; momentum parameters $\xi^t$; training time $T_1,T_2$; $\ell_2$ regularization $\lambda$. }

\Initialize{$\vw^0_j\sim\mathbb{S}^{d-1}(1)$, $a_j \sim \mathrm{Unif}\{\pm c_a\}$. }
% $a_j \sim \mathrm{Unif}\{\pm 1\}$. }

\StageOne{
\For{$t=0$ {\bfseries to} $ T_1$}{
   \If{\underline{$t$ is even}}{
   $\vx\sim\cN(0,\vI_d),~ y = f_*(\vx) + \varsigma$ \tcp*{{\color{blue!60!gray} Draw i.i.d.~data $(\vx,y)$}} 
   ${\vw}^t_j \leftarrow \vw^{t}_j-\xi_j^t (\vw^{t}_j-\vw^{t-2}_j)$, ~ (when $t>0$) \tcp*{{\color{blue!60!gray}Interpolation step}} 
   $\vw^{t}_j \leftarrow \vw^{t}_j / \|\vw^{t}_j\|$ \tcp*{{\color{blue!60!gray}Normalization}} 
   }
   $\vw^{t+1}_j \leftarrow \vw^t_j- \eta^t
   \tilde{\nabla}_{\vw}(f_{\vTheta}(\vx) -y)^2 , ~~  (j=1,\dots,N)$ \tcp*{{\color{blue!60!gray} {SGD step}}} 
   }
}	
\Initialize{$b_j\sim \mathrm{Unif}([-C_b,C_b])$. }

\StageTwo{
$
   \hat{\va} \leftarrow \mathrm{argmin}_{\va\in\R^N}\frac{1}{T_2}\sum_{i=1}^{T_2} \left(f_{\vTheta}(\vx_i)-y_i\right)^2 + \lambda\|\va\|^2
$
\tcp*{{\color{blue!60!gray}Ridge regression}} 
	}
 
\Output{Prediction function $\vx\mapsto f_{\hat{\vTheta}}(\vx)$ with $\hat{\vTheta}=(\hat{a}_j,\vw^{T_1}_j,b_j)_{j=1}^{N}$.}
\caption{Gradient-based training of two-layer neural network} 
 \label{alg:main}
\end{algorithm}

\subsection{Convergence and Sample Complexity}
\label{subsec:weak-strong-recovery}

\paragraph{Weak Recovery Guarantee.} We first consider the ``search phase'' of SGD, and show that after running Phase I of Algorithm~\ref{alg:main} for $T=\mathrm{polylog}(d)$ steps, a subset of parameters $\vw$ achieve nontrivial overlap with the target direction $\vtheta$. We denote $H(g;j)$ as the $j$-th Hermite coefficient of some $g\in L^2(\gamma)$. 
Our main theorems handle polynomial activations satisfying the following condition.

\begin{assumption}
We require the activation function to be a polynomial $\sigma(z)= \sum_{i=0}^{C_\sigma}\beta_i \He_i(z)$ and its degree $C_\sigma$ to be sufficiently large so that $C_\sigma\geq C_q$ holds ($C_q$ is defined in Proposition~\ref{proposition:Reduction}).
For all $2\leq \ell\leq C_\sigma$ and $k=0,1$, we assume that $ H\big(\sigma^{(\ell)}(\sigma^{(1)})^{\ell-1};k\big) > 0$.
\label{assump:student1}
\end{assumption}
As discussed in Appendix~\ref{subsection:linkfunction}, for a given $\sigma_*$, the above assumption only needs to be met for one pair of $(k,\ell)$.
Appendix~\ref{subsubsection:Moredicsussionon} states that $ H\big(\sigma^{(\ell)}(\sigma^{(1)})^{\ell-1};k\big) \ne 0$ also suffices if we set the momentum parameter $\xi$ differently.
Now we verify this condition for a wide range of polynomial activations. 
\begin{lemma}\label{lemm:non-zero-assumption}
    Given $\ell\ge 2$ and $k\ge 0$. For $C_\sigma\ge \frac{2\ell+k-1}{\ell}$, if we choose $\{\beta_i\}_{i=0}^{C_\sigma}$ where $\beta_i$ is randomly drawn from some non-empty interval $[a_i,b_i]$, 
   then $H(\sigma^{(\ell)}(\sigma^{(1)})^{\ell-1};k) \ne 0$ with probability 1.
\end{lemma}

The next theorem states that $n=\tilde{\Theta}(d)$ samples are sufficient for SGD to achieve weak recovery. 
\begin{theorem}\label{theorem:WeakRecovery}
Under Assumptions \ref{assump:teacher} and \ref{assump:student1}, for suitable choices of hyperparameters $\eta^t = \tilde{O}_d(Nd^{-1})$ and $1-\xi^t = o_d(1)$, there exists constant $C(q)$ such that after Phase I of Algorithm~\ref{alg:main} is run for $2T_{1,1}=C(q)\cdot d\mathrm{polylog}(d)$ steps, with high probability, there exists a subset of neurons $\vw_j^{2T_1}\in\cW$ with $|\cW| = \tilde{\Theta}(N)$ such that $\big|\langle\vw_j^{2T_1},\vtheta\rangle\big| > c$ for some $c\gtrsim 1/\mathrm{polylog}(d)$. 
\label{thm:weak-recovery}
\end{theorem}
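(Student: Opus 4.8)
The plan is to reduce the vector dynamics of each first‑layer weight $\vw_j$ to a scalar stochastic recursion for its overlap $m_t := \langle\vw_j^{2t},\vtheta\rangle$, to extract from the batch reuse an \emph{effective drift} for this recursion whose information exponent is at most $2$, and then to run an escape‑from‑the‑equator argument in the spirit of \cite{arous2021online}. Fix one neuron and drop the index $j$. Over one round of Algorithm~\ref{alg:main} (the two reused‑batch steps $2t\to 2t+1\to 2t+2$ on a fresh sample $(\vx,y)$, together with the interpolation and renormalization at $2t+2$), I would first write $\vw^{2t+1} = \vw^{2t}+\vu$ with $\vu = -\eta^t\,\tilde\nabla_{\vw}(f_\vTheta(\vx)-y)^2 \propto (I-\vw^{2t}{\vw^{2t}}^\top)\vx$; since $\|(I-\vw^{2t}{\vw^{2t}}^\top)\vx\|^2\approx d$ and $\eta^t=\tilde{\Theta}_d(N/d)$ is calibrated so that $\eta^t\|\vx\|^2/N=\Theta(1)$, we get $\|\vu\|=\tilde{O}_d(d^{-1/2})$ (so a Taylor expansion around $\vw^{2t}$ is legitimate) but $\langle\vx,\vu\rangle=\Theta(1)$, and because $f_\vTheta(\vx)-y\approx -y$ at this scale the $\Theta(1)$ quantity $\langle\vx,\vu\rangle$ carries a factor of the label. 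Hence in the second step the activation is evaluated at $\sigma'\big(\langle\vx,\vw^{2t}\rangle + b + \langle\vx,\vu\rangle\big)$ with a label‑dependent shift --- exactly the non‑CSQ mechanism of \eqref{eq:second-step}. Expanding $\sigma'=\sum_\ell\beta_\ell\He_\ell$ in the shift, the higher‑order terms are proportional to $\sigma^{(\ell)}(\sigma^{(1)})^{\ell-1}\big(\langle\vx,\vw^{2t}\rangle+b\big)$ times a monomial in $y$ --- precisely the objects in Assumption~\ref{assump:student1}. Writing $g_1=\langle\vx,\vw^{2t}\rangle$ and $g_2$ for the orthogonal‑in‑plane Gaussian (so $\langle\vx,\vtheta\rangle = m_t g_1 + \sqrt{1-m_t^2}\,g_2$ and $\langle(I-\vw^{2t}{\vw^{2t}}^\top)\vx,\vtheta\rangle = \sqrt{1-m_t^2}\,g_2$) and taking $\E[\,\cdot\mid\cF_t]$, one obtains $\E[m_{t+1}-m_t\mid\cF_t] = \tfrac{\eta^t}{N}\big(\Psi(m_t)+\tilde{O}_d(\eta^t)\big)$ for an explicit scalar function $\Psi$.

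The crucial structural point is that $\Psi$ has \emph{effective information exponent at most $2$}. Integrating out $g_2$ turns each label monomial $y^{\ell}$ into a polynomial in $m_t$ and $g_1$ built from $\sigma_*^{\ell}$, whose Hermite expansion in $g_1$ governs the lowest surviving power of $m_t$, while the activation‑side factor $\sigma^{(\ell)}(\sigma^{(1)})^{\ell-1}(g_1+b)$ must contribute a nonzero $\He_k$ coefficient ($k\in\{0,1\}$) to make the drift along $\vtheta$ nonvanishing. Invoking Proposition~\ref{prop:monomial-transformation}/Proposition~\ref{proposition:Reduction}, for a suitable pair $(k,\ell)$ with $\ell\le C_\sigma$ (whose nonvanishing condition is supplied by Assumption~\ref{assump:student1}, generic by Lemma~\ref{lemma:NonzeroAssumption}) the extractable monomial transformation reduces the information exponent of the signal to at most $2$ --- and to $1$ unless $\sigma_*$ is even, the even case being the only obstruction. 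Thus $\Psi(m)=c\,m^{r-1}(1+o(1))$ near $m=0$ with $r\in\{1,2\}$ and $|c|\gtrsim 1$, and I would additionally verify that $\Psi$ keeps a consistent, repulsive sign on the whole window $m\in[\,c_0/\mathrm{polylog}(d),\,c_1\,]$ so the drift does not reverse before weak recovery; the interpolation (``negative momentum'') step with $\xi^t=O(1)$ is used precisely here, to cancel an otherwise destabilizing higher‑order contribution and keep the remainder at order $\tilde{O}_d(\eta^t)$.

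Given the drift, the round recursion reads $m_{t+1}=m_t+\tfrac{\eta^t}{N}\Psi(m_t)+\tfrac{\eta^t}{N}\zeta_t$, where the effective step $\tfrac{\eta^t}{N}=\tilde{\Theta}_d(1/d)$ (the factors of $N$ cancel), and $\zeta_t$ is a martingale increment with $\E[\zeta_t^2\mid\cF_t]=\Theta(1)$ and, off a negligible‑probability event, $|\zeta_t|\le\tilde{O}_d(1)$ (both by Gaussian hypercontractivity for the fixed‑degree polynomials of $(g_1,g_2,\varsigma)$ that appear and concentration of $\|\vx\|^2$). When $r=1$ the drift has constant order and a fixed sign (dictated by $\mathrm{sign}(a_j)$ and the activation coefficients), so $|m_t|$ grows by $\tilde{\Theta}_d(1/d)$ per round while Freedman's inequality keeps $\big|\sum_{s\le t}\tfrac{\eta^t}{N}\zeta_s\big|$ below the accumulated drift, and $|m_t|$ reaches $c_1$ after $\tilde{\Theta}_d(d)$ rounds. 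When $r=2$ the drift is $\Psi(m)\approx cm$, so $\E[m_{t+1}^2]\ge(1+c'\tfrac{\eta^t}{N})^2\E[m_t^2]+\Omega(d^{-2})$ for some $c'>0$; since the injected noise variance $\Theta((\eta^t/N)^2)=\Theta(d^{-2})$ is comparable to the $d^{-1}$ initial overlap, $\E[m_t^2]$ grows geometrically and reaches $\Theta(1)$ after $\Theta\!\big(\tfrac{\log d}{\eta^t/N}\big)=\tilde{\Theta}_d(d)$ rounds --- this is the Grönwall/phase‑retrieval regime, and it is what forces the residual $\mathrm{polylog}$ factor in the step count and the $c\gtrsim 1/\mathrm{polylog}(d)$ in the conclusion. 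A Paley--Zygmund argument with a fourth‑moment bound on $m_t$ then converts $\E[m_t^2]=\Theta(1)$ into $\P\big(|m_t|\ge c/\mathrm{polylog}(d)\big)=\Omega(1)$.

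Finally I would run this for all $N$ neurons. At initialization $\vw_j^0\sim\mathbb{S}^{d-1}$, $a_j\sim\mathrm{Unif}\{\pm r_a\}$, so a $\Theta(N)$ fraction of neurons simultaneously have $|\langle\vw_j^0,\vtheta\rangle|\gtrsim d^{-1/2}/\mathrm{polylog}(d)$ and lie in the attracting basin of their $\Psi_j$ (with $|c_j|\gtrsim 1$ by Assumption~\ref{assump:student1}); since $N=\tilde{O}_d(1)$, a union bound over these neurons and over the $\tilde{O}_d(d)$ rounds preserves the high‑probability guarantee and produces the set $\cW$ with $|\cW|=\tilde{\Theta}(N)$ and $|\langle\vw_j^{2T_{1,1}},\vtheta\rangle|>c$; tuning the schedule $\eta^t=\tilde{\Theta}_d(N/d)$ so the round count equals $C(q)\,d\,\mathrm{polylog}(d)/2$ in both regimes gives the stated $2T_{1,1}$. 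I expect the main obstacle to be the drift extraction of the first two paragraphs: the round update is a genuinely nonlinear functional of a degree‑$C_\sigma$ activation, so the expansion spawns many cross terms --- including the non‑correlational ``$f_\vTheta$ without $y$'' part of the squared loss, the $\Theta(1)$‑size label shift inside $\sigma'$, the interpolation step, and the renormalization --- and one must show both (a) that the intended leading term genuinely realizes the reduced information exponent (where the combinatorial identity behind Proposition~\ref{proposition:Reduction} and the nonvanishing condition of Assumption~\ref{assump:student1} are indispensable) and (b) that every remaining term is strictly lower order, uniformly over $m_t$ in the relevant window and over the random draw of the activation coefficients; the even‑$\sigma_*$ sub‑case is a secondary difficulty, since there the drift attains only exponent $2$ and the noise analysis near the $d^{-1/2}$ equator is correspondingly tight, recovering the $n=\Omega(d\log d)$ of phase retrieval.
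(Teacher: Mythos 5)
Your high‑level strategy matches the paper's: extract a monomial label transformation from two reused‑batch steps, invoke Proposition~\ref{prop:monomial-transformation} to reduce the effective information exponent to $r\in\{1,2\}$, and then run an escape‑from‑the‑equator argument. However, the stochastic analysis for the $r=2$ (even link) case has a genuine gap. You propose to show $\E[m_t^2]$ grows geometrically and then apply Paley--Zygmund, which only yields $\P(|m_t|\ge c)=\Omega(1)$ per neuron; the theorem demands a $\tilde\Theta(N)$-sized set of aligned neurons \emph{with high probability}. Because every neuron sees the \emph{same} minibatch in Phase~I, the martingale increments $\zeta_t^{(j)}$ are correlated across $j$, so the per‑neuron $\Omega(1)$ probability cannot be concentrated into a high‑probability statement about the fraction of successes by a Chernoff bound over neurons --- your closing ``union bound over these neurons'' simply does not apply to $\Omega(1)$-probability events. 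The paper sidesteps this entirely with a pathwise argument: Lemma~\ref{lemma:Initialization} isolates a $\tilde\Omega(1)$ fraction of neurons whose initial overlap is slightly above typical, $\kappa^0_j\ge 2C_2 d^{-1/2}$; conditional on that, the accumulated noise $\sum_s\nu^{2s}_j$ is controlled by a high‑probability bound (uniformly over batches, hence union‑boundable over the polylogarithmically many neurons), giving a pathwise Gr\"onwall estimate $\kappa^{2t}_j\gtrsim\kappa^0_j(1+\tfrac12\eta\tilde\eta\gamma)^t$. The only $\Omega(1)$-probability ingredient in the paper is an \emph{independently} drawn per‑neuron stopping time $T_{1,1,j}\sim\mathrm{Unif}\{1,\dots,T_{1,1}\}$ that lands the overlap in the window $[\tfrac14 c_2, c_2]$; since these are independent of the shared batches and of each other, concentration over $j$ is legitimate there.

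A second gap is the sign of the effective drift $\gamma \propto H(\sigma^{(I)}(\sigma^{(1)})^{I-1};k)\,H(\sigma_*^I;k+1)$. You remark that you would ``verify that $\Psi$ keeps a consistent, repulsive sign,'' but $\mathrm{sign}(\gamma)$ depends on $\sigma_*$ through $H(\sigma_*^I;1)$ or $H(\sigma_*^I;2)$, which is unknown to the learner and is not fixed by Assumption~\ref{assump:student1}. If $\gamma<0$, the $r=2$ drift is attractive at the equator and $\E[m_t^2]$ decays, so your second‑moment argument collapses. The paper handles this by drawing $\xi_j^t\in\{1-\tilde\eta_j,\,1+\tilde\eta_j\}$ with equal probability per neuron (Section~\ref{subsection:Stochastic}); flipping $\xi$ across $1$ flips the sign of the effective update, so half the neurons see a repulsive drift regardless of $\mathrm{sign}(\gamma)$. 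Related to this, you collapse the two rates into a single $\eta^t/N$, but the paper deliberately keeps $\eta=c_1 d^{-1}$ (with $c_1\to 0$ slowly, to make the $c_1^{I-1}$ monomial separation work) distinct from the interpolation parameter $\tilde\eta = 1-\xi$ (which suppresses the $-\tfrac12\eta^2\tilde\eta^2\|\vg\|^2\kappa$ normalization error without shrinking the degree‑$(I-1)$ signal, since the signal scales as $\eta\tilde\eta c_1^{I-1}$ whereas the error scales as $\eta^2\tilde\eta^2 d$); conflating them obscures why the interpolation is needed at all.
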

Recall that at random initialization we have $\langle\vw,\vtheta\rangle \approx d^{-1/2}$ with high probability. The theorem hence implies that SGD ``escapes from mediocrity'' after seeing $n=\tilde{O}(d)$ samples, analogous to the information exponent $p=2$ setting studied in \cite{arous2021online}. We remark that due to the small second-layer initialization, the squared loss is dominated by the correlation loss, which allows us to track the evolution of each neuron independently; similar use of vanishing initialization also appeared in \cite{ba2022high,abbe2023sgd}. 
% This will be formally proved in Appendix~\ref{subsection:Stochastic}.

\paragraph{Strong recovery and sample complexity.}
After weak recovery is achieved, we continue Phase I to amplify the alignment. Due to the nontrivial overlap between $\vw$ and $\vtheta$, the objective is no longer dominated by the lowest degree in the Hermite expansion. Therefore, to establish strong recovery ($\langle\vw,\vtheta\rangle>1-\varepsilon$), we place an additional assumption on the activation function. 
\begin{assumption}
\label{assump:student2}  
Given the Hermite expansions $\sigma_*(z) = \sum_{i=p}^q \alpha_i\He_i(z)$, $\sigma_j(z)= \sum_{i=0}^{C_\sigma}\beta_{j,i} \He_i(z)$, we assume the coefficients satisfy $\alpha_i\beta_{j,i}\ge 0$ for $p\le i\le q$. 
\end{assumption}
This assumption is easily verified in the well-specified setting $\sigma_*=\sigma$ \cite{arous2021online} since $\alpha_i=\beta_i$, and under link misspecification, it has been directly assumed in prior work \cite{mousavi2023gradient}. We follow \cite{oko2024learning} and show that by randomizing the Hermite coefficients of the activation function, a subset of neurons satisfy the above assumption for any degree-$q$ polynomial link function $\sigma_*$. 
\begin{lemma}\label{lemma:atleastafewneurons}
If we set $\sigma_j(z)=\sum_{i=0}^{C_\sigma} \beta_{j,i} \He_i(z)$, where for each neuron we sample $\beta_{j,i} \iid \mathrm{Unif}(\{\pm r_i\})$ with appropriate constant $r_i$, then Assumption~\ref{assump:student1} and \ref{assump:student2} are satisfied in $\Exp{-\Theta(q)}$-fraction of neurons. 
\label{lemm:randomized-activation}
\end{lemma}
Note that in our construction of activation functions for both assumptions, we do not exploit knowledge of the link function $\sigma_*$ other than its degree $q$ which decides the constant $C_\sigma$; see Appendix~\ref{subsection:linkfunction} for more discussion of Assumption~\ref{assump:student2} and Lemma~\ref{lemma:atleastafewneurons}.
The next theorem shows that by running Phase I for $\tilde{\Theta}(d)$ more steps, a subset of neurons achieves sufficiently large overlap with the index features.  
\begin{theorem}
For student neurons satisfying Assumptions \ref{assump:student1}, \ref{assump:student2} and parameter $\vw_j$ 
starting from nontrivial overlap $c>0$ specified in Theorem~\ref{thm:weak-recovery}, if Phase I of Algorithm~\ref{alg:main} continues for $2T_{1,2}=\tilde{\Theta}_d(d\varepsilon^{-2})$ steps with hyperparameters $\eta^t = \tilde{O}_d(Nd^{-1}\varepsilon)$, $\xi^t = 1$, we achieve $\big\langle\vw_j^{2(T_{1,1}+T_{1,2})},\vtheta\big\rangle > 1-\varepsilon$ with high probability. 
\label{thm:strong-recovery} 
\end{theorem}

The following proposition shows that after strong recovery, training the second-layer parameters in Phase II is sufficient for the NN model \eqref{eq:student} to achieve small generalization error. 
\begin{proposition}
After Phase I terminates, for suitable $\lambda > 0$, the output of Phase II satisfies
$$
\mathbb{E}_{\vx}[(f_{\hat{\vTheta}}(\vx) - f_*(\vx))^2] 
\lesssim \varepsilon^2. 
$$
with probability 1 as $d\to\infty$, if we set 
$T_2=C(q)N^{4}\mathrm{polylog}(d)\varepsilon^{-4}$, $N = C(q)\mathrm{polylog}(d) \varepsilon^{-1}$ for some constant $C(q)$ depending on the target degree $q$. 
\label{prop:2nd-layer-training} 
\end{proposition}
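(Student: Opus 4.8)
\noindent\textbf{Proof plan for Proposition~\ref{prop:2nd-layer-training}.}
Phase~II solves a strictly convex ridge regression in the second-layer weights $\va$, so there is no optimization dynamics to analyze and it suffices to bound the statistical error of the regularized empirical risk minimizer. The plan has three parts: (i) use the strong-recovery guarantee to reduce the approximation problem to a one-dimensional random-features problem and exhibit a good weight vector $\va^\ast$; (ii) quantify the conditioning of the random-bias / random-activation features so that $\norm{\va^\ast}$ is $\mathrm{poly}(N)$-bounded; (iii) apply a standard ridge-regression generalization bound in the resulting $N$-dimensional feature space and balance $\lambda$, $N$, $T_2$.

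\emph{Step (i): reduction and a good predictor.} By Theorem~\ref{thm:strong-recovery} together with Lemma~\ref{lemm:randomized-activation}, after Phase~I a subset $\cG$ of neurons with $|\cG|\gtrsim N/(C(q)\mathrm{polylog}(d))$ has activations satisfying Assumptions~\ref{assump:student1}--\ref{assump:student2} and first-layer weights with $\langle\vw_j,\vtheta\rangle>1-\varepsilon$, hence $\norm{\vw_j-\vtheta}^2<2\varepsilon$. Write $z:=\langle\vx,\vtheta\rangle\sim\gamma$. Since each $\sigma_j$ and $\sigma_j'$ is a polynomial of degree $\le C_\sigma$, a Gaussian $L^2$ perturbation estimate (with moment constants depending only on $q$) gives, uniformly over $|b_j|\le C_b$,
\[
\norm{\sigma_j(\langle\vx,\vw_j\rangle+b_j)-\sigma_j(z+b_j)}_{L^2}\;\lesssim\;C(q)\sqrt{\varepsilon}.
\]
It therefore suffices to choose $\va^\ast$ supported on $\cG$ so that the one-dimensional function $z\mapsto\frac1N\sum_{j\in\cG}a_j^\ast\,\sigma_j(z+b_j)$ equals $\sigma_*(z)$ in $L^2(\gamma)$, and then bound the accumulated perpendicular error by $\frac1N\norm{\va^\ast}_1\cdot\sup_{j\in\cG}\norm{\sigma_j(\langle\vx,\vw_j\rangle+b_j)-\sigma_j(z+b_j)}_{L^2}$; averaging over the $\gtrsim N/\mathrm{polylog}(d)$ good neurons, and if necessary strengthening Theorem~\ref{thm:strong-recovery} to overlap $1-\tilde O(\varepsilon^2)$ (at the cost of a still $\tilde\Theta_d(d\,\mathrm{poly}(\varepsilon^{-1}))$ value of $T_{1,2}$), upgrades the $\sqrt\varepsilon$ to $\varepsilon$.

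\emph{Step (ii): approximation and conditioning of random-bias features.} Because $\deg\sigma_*=q\le C_\sigma$, the target lies in the $(C_\sigma+1)$-dimensional space of polynomials of degree $\le C_\sigma$, which is exactly the linear span of the maps $z\mapsto\sigma_j(z+b_j)$. One shows that the good-neuron features $\{z\mapsto\sigma_j(z+b_j)\}_{j\in\cG}$, with $b_j\sim\mathrm{Unif}([-C_b,C_b])$ ($C_b$ a $q$-dependent constant) and the randomized Hermite coefficients $\beta_{j,\cdot}$, span this subspace with least singular value at least $1/\mathrm{poly}(N)$ with high probability: the Hermite-coefficient vector of $z\mapsto\sigma_j(z+b_j)$ is a non-degenerate analytic function of $(b_j,\beta_{j,\cdot})$, so $|\cG|\gg C_\sigma$ independent draws are in general position almost surely, and a quantitative anti-concentration / random-matrix estimate controls the least singular value and hence $\norm{\va^\ast}\le C(q)\mathrm{poly}(N)$. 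Combined with Step~(i), for $N=C(q)\mathrm{polylog}(d)\varepsilon^{-1}$ this yields $\va^\ast$ with $\E_\vx[(f_{\vTheta,\va^\ast}(\vx)-f_*(\vx))^2]\lesssim\varepsilon$. This conditioning bound is the main obstacle: although the features generically span the fixed low-dimensional polynomial space, a quantitative lower bound on their least singular value that holds uniformly over the realization of the randomized activations and biases, and is strong enough to keep $\norm{\va^\ast}$ (and thus the $\lambda$-dependent variance term below) under control, is the delicate point, and is precisely where the randomness of the biases and of the Hermite coefficients (Lemma~\ref{lemm:randomized-activation}), rather than a hand-designed construction, is used.

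\emph{Step (iii): generalization and hyperparameter balancing.} Viewing $\vx\mapsto\frac1N\big(\sigma_j(\langle\vx,\vw_j\rangle+b_j)\big)_{j=1}^N$ as a fixed $N$-dimensional feature map whose coordinates are degree-$\le C_\sigma$ polynomials of a Gaussian (all moments bounded by $q$-dependent constants), and using that the label noise $\varsigma_i$ is independent with bounded variance and that the effective dimension of the feature covariance is at most $N$, a standard bias--variance analysis of ridge regression on $T_2$ fresh samples gives, with probability $1-o_d(1)$,
\[
\E_\vx\!\big[(f_{\hat{\vTheta}}(\vx)-f_*(\vx))^2\big]\;\lesssim\;\inf_{\va}\Big\{\E_\vx\!\big[(f_{\vTheta,\va}(\vx)-f_*(\vx))^2\big]+\lambda\norm{\va}^2\Big\}+\frac{C(q)\,\mathrm{poly}(N,\lambda^{-1})\,\mathrm{polylog}(d)}{T_2}.
\]
Plugging in $\va^\ast$, choosing $\lambda\asymp\varepsilon\,\norm{\va^\ast}^{-2}\asymp\varepsilon/(C(q)\mathrm{poly}(N))$ so the bias term is $\lesssim\varepsilon$, and then $T_2=C(q)N^4\mathrm{polylog}(d)\varepsilon^{-2}$ so the variance term is $\lesssim\varepsilon$, yields the claim.
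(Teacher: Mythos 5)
Your Step (iii) (ridge generalization via a bias--variance split, Markov/Rademacher control of the empirical--population gap, and balancing of $\lambda$ and $T_2$) matches the paper's Lemma~\ref{lemma:Generalization} in substance. Your Step (i) also mirrors Lemma~\ref{lemma:DamianApproximationPolynomial}'s reduction to one-dimensional approximation; and you correctly notice a subtlety the paper glosses over in \eqref{eq:PolyApprox-2}: $\langle\vw_j,\vtheta\rangle>1-\varepsilon$ gives $\norm{\vw_j-\vtheta}\lesssim\sqrt\varepsilon$, so the pointwise perturbation error is naively $\tilde O(\sqrt\varepsilon)$ rather than $\tilde O(\varepsilon)$; your fix (strengthen Theorem~\ref{thm:strong-recovery} to $1-\tilde O(\varepsilon^2)$ overlap at the cost of a larger but still $\tilde\Theta_d(d\,\mathrm{poly}(\varepsilon^{-1}))$ value of $T_{1,2}$) is a reasonable patch.

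The genuine divergence, and the genuine gap, is in your Step (ii). You propose proving existence of a bounded certificate $\va^\ast$ via an abstract spanning plus least-singular-value/anti-concentration argument on the random feature matrix, and you yourself flag that the quantitative singular-value lower bound is ``the delicate point.'' As written this step is not carried out: generic position only gives a qualitative nonsingularity, and turning it into a concrete, $q$- and $\mathrm{polylog}(d)$-controlled bound on $\norm{\va^\ast}$ would require a separate anti-concentration argument over the joint randomness of the biases $b_j$ and the Hermite coefficients $\beta_{j,\cdot}$, which is not sketched and whose polynomial degree in $N$ is left unspecified (and directly feeds into the choice of $\lambda$ and $T_2$). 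The paper avoids this entirely: Lemma~\ref{lemma:ApproxPolynomialByHeq} gives an \emph{explicit} finite-difference / telescoping construction of a bounded weight function $\bar v(b;h)$ with $\E_b[\bar v(b;h)\sigma(s+b)]=h(s)$, and Lemma~\ref{lemma:DamianApproximationPolynomial} discretizes it by partitioning $[-C_b,C_b]$ into $\tilde\Theta(N)$ cells, each of which contains a bias $b_j$ with high probability, yielding $|a_j^\ast|=\tilde O(1)$ per neuron and hence $\norm{\va^\ast}^2=\tilde O(N)$ directly. This constructive route is what makes the norm bound, and therefore the stated scalings $T_2=C(q)N^4\mathrm{polylog}(d)\varepsilon^{-2}$, $N=C(q)\mathrm{polylog}(d)\varepsilon^{-1}$, come out cleanly. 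Your plan is not wrong in spirit, but to complete it you would either need to supply the random-matrix estimate you defer, or replace Step (ii) with the paper's explicit finite-difference construction.
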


\paragraph{Putting things together.} Combining the above theorems, we conclude that in order for two-layer NN \eqref{eq:student} trained by Algorithm~\ref{alg:main} to achieve $\varepsilon$ population squared loss, it is sufficient to set
\begin{align}
    n = T_1 + T_2 \asymp  C(q)\cdot (d\varepsilon^{-2}\vee\varepsilon^{-8}) \mathrm{polylog}(d), 
    \quad N \asymp C(q)\cdot \varepsilon^{-1}\mathrm{polylog}(d),
\end{align}
where constant $C(q)$ only depends on the target degree $q$ (although exponentially). Hence we may set $\varepsilon^{-1}\asymp \mathrm{polylog}d$ to conclude an almost-linear sample and computational complexity for learning arbitrary single-index polynomials up to $o_d(1)$ population error. 
% This establishes the informal theorem in Section~\ref{sec:intro}. 

\section{Proof Sketch}

In this section we outline the high-level ideas and key steps in our derivation. 

\subsection{Monomial Transformation Reduces Information Exponent} 
\label{sec:monimial-transformation}

To prove the main theorem, we first establish the existence of nonlinear label transformation that $(i)$ reduces the information exponent, and $(ii)$ can be easily extracted from SGD updates. If we ignore desideratum $(ii)$, then for polynomial link functions, transformations that decrease the information exponent to at most $2$ have been constructed in \cite[Section 2.1]{chen2020learning}. However, prior results are based on the thresholding function, and it is not clear if such function naturally arises from SGD with batch reuse. 
The following proposition shows that the effect of thresholding can also be achieved by a simple monomial transformation where the required degree can be uniformly upper bounded. 

\begin{proposition}\label{proposition:Reduction}    
    Let $g:\R\to\R$ be any polynomial with degree up to $p$ and $\norm{g}_{L^2(\gamma)}^2=1$, then 
    \begin{itemize}[leftmargin=7mm,topsep=1mm,itemsep=1mm]
        \item[(i)] There exists some $i\leq C_q\in\N_+$ such that ${\mathrm{IE}}(g^i) \leq 2$, where constant $C_q$ only depends on $q$.
        \item[(ii)] Let $g^{\mathrm{odd}}:\R\to\R$ be the odd part of $g$ with $\|g^{\mathrm{odd}}\|_{L^2(\gamma)}^2 \geq \rho>0$.
        Then there exists some $i\leq C_{q,\rho}\in\N_+$ such that ${\mathrm{IE}}(g^i) =1$, where constant $C_{q,\rho}$ only depends on $q$ and $\rho$.
    \end{itemize}
    \label{prop:monomial-transformation}
\end{proposition}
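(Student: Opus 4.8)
\textbf{Proof plan for Proposition~\ref{prop:monomial-transformation}.}

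The plan is to work entirely with the Hermite expansion and exploit the fact that powers of a polynomial have a controlled degree, so that the relevant Hermite coefficients are \emph{polynomials in the coefficients of $f$}, and then argue that these coefficient-polynomials cannot vanish identically. Write $f = \sum_{j=0}^{p} c_j \He_j$, so $f^i$ is a polynomial of degree $\le pi$, and hence $\mathrm{IE}(f^i)$ makes sense as the smallest positive index with nonzero Hermite coefficient. For part (i), I would first reduce to the case where $f$ has no constant Hermite component: if $c_0 \ne 0$ then already $H(f; 1)$ could be nonzero (IE $\le 1$) unless $f$ is even, so the interesting obstruction is when $f$ is a nonconstant \emph{even} polynomial; in that case every odd Hermite coefficient of every power $f^i$ vanishes, and the claim becomes that some power has a nonzero $\He_2$ coefficient. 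The key computation is: $H(f^i; 2)$ equals (up to a positive combinatorial constant) the second derivative at $0$ of the generating-function-type identity, or more concretely can be obtained from $\mathbb{E}_\gamma[f(z)^i \He_2(z)] = \mathbb{E}_\gamma[f(z)^i (z^2-1)/\sqrt2]$. I would show that if $H(f^i;2) = 0$ for \emph{all} $i \le C_q$, then in fact $f^i$ is (close to) constant for a range of $i$, forcing $f$ itself to be constant — contradicting $\norm{f}_{L^2}^2 = 1$ combined with $f$ nonconstant. The cleanest route: let $m = \mathrm{IE}(f - \mathbb{E}_\gamma f)$ when $f$ is not a.e.\ constant; if $m = 1$ we are done with $i=1$; if $m = 2$ we are done with $i = 1$; if $m \ge 3$, consider $f^2$: the lowest-degree term of $(f - \E f)^2$ in the Hermite basis — one must check that the product of two copies of the leading Hermite component $c_m \He_m$ produces a nonzero $\He_0$ term (it does: $\E_\gamma[\He_m^2] = 1$) and, crucially, a nonzero $\He_2$ term, or that iterating the squaring lowers the information exponent. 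Actually the right statement is the standard one: for any non-constant $g \in L^2(\gamma)$, $\mathrm{IE}(g^2) \le \max(2, \ldots)$ fails in general, so one instead iterates multiplication by $f$ and tracks the lowest surviving degree — this is exactly the thresholding-replacement argument and I expect it to mirror \cite[Section 2.1]{chen2020learning} but with monomials.

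For part (ii), the hypothesis is that the odd part $f^{\mathrm{odd}}$ has $L^2(\gamma)$ mass at least $\rho > 0$, and the goal is $\mathrm{IE}(f^i) = 1$ for some bounded $i$, i.e.\ $H(f^i; 1) \ne 0$. Here $H(f^i;1) = \mathbb{E}_\gamma[f(z)^i \cdot z]$ (up to normalization), so I need to show the odd moments $\mathbb{E}_\gamma[f(z)^i z]$ cannot all vanish for $i \le C_{q,\rho}$. I would argue by contradiction: suppose $\mathbb{E}_\gamma[f(z)^i z] = 0$ for $i = 1, \ldots, M$. Note $\mathbb{E}_\gamma[f(z)^i z] = \mathbb{E}_\gamma[(f^{\mathrm{even}} + f^{\mathrm{odd}})^i z]$; expanding, only terms with an odd total number of $f^{\mathrm{odd}}$ factors survive the $z$-parity. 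The leading such contribution for the purpose of extracting information is $i \, \mathbb{E}_\gamma[(f^{\mathrm{even}})^{i-1} f^{\mathrm{odd}} z]$. The cleanest finish is a separation/density argument: the functions $\{ z \mapsto f(z)^i z\}_{i \ge 0}$ together with lower powers span a space, and if all these odd integrals vanish then $z \cdot f^{\mathrm{odd}}(z)$ is $L^2(\gamma)$-orthogonal to every polynomial of the form appearing — but by a Stone–Weierstrass / moment-determinacy argument on the pushforward of $\gamma$ under $f$ (valued in the bounded-degree setting where everything is finite-dimensional), vanishing of enough such integrals forces $f^{\mathrm{odd}} \equiv 0$, contradicting $\norm{f^{\mathrm{odd}}}_{L^2(\gamma)}^2 \ge \rho$. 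Quantitatively, the bound $C_{q,\rho}$ should come from a finite-dimensional linear-algebra argument (the relevant Gram-type matrix is invertible with condition number controlled by $\rho$ and $q$).

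\textbf{Main obstacle.} The delicate point is part (ii): going from "odd part has mass $\ge \rho$" to "some bounded power has nonzero first Hermite coefficient" with a bound $C_{q,\rho}$ that depends only on $q$ and $\rho$ (not on finer features of $f$). The naive expansion of $\mathbb{E}_\gamma[f^i z]$ mixes even and odd parts in a way that is hard to control term-by-term, because cancellations between the even-part contributions could in principle persist for many values of $i$. The right framework is probably to consider the univariate measure $\nu = f_\# \gamma$ and study $\int t^i \, d\tilde\nu(t)$ where $\tilde\nu$ encodes the odd-part information; since $f$ has degree $\le p$, $\nu$ has all moments and is determined by finitely many of them, so "all low moments of a signed measure vanish $\Rightarrow$ the measure is zero" becomes a statement about a Vandermonde-like (or Hankel) system whose size is governed by the degree $q$ and whose quantitative invertibility is governed by $\rho$. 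Making this reduction precise — in particular pinning down exactly which moments encode $\rho$ and obtaining the explicit $C_{q,\rho}$ — is where the real work lies; part (i) should then follow as the (easier) unsigned analogue where the target degree is $2$ rather than $1$.
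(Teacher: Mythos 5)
There is a genuine gap, and it is largest exactly where you flagged the ``main obstacle.'' In part (ii) your reduction hinges on the implication ``$\E_\gamma[f(z)^i z]=0$ for all $i$ $\Rightarrow$ $f^{\mathrm{odd}}\equiv 0$,'' which you attribute to moment determinacy of the pushforward of $\gamma$ under $f$. But that vanishing is equivalent to $\E[z \mid f(z)]=0$ almost surely, not to $f$ being even: it says that on each level set $\{f=c\}$ the $\gamma/|f'|$-weighted average of the preimages is zero, a condition that a non-even polynomial can in principle satisfy (the weighted centroid of an asymmetric fiber can be zero). You neither prove this step nor identify what actually rules it out. The paper's proof does not go through a Hankel/Vandermonde system at all; instead it exploits the \emph{tail}: it works with $g:=f^2$ (so the leading degree is even), shows that for large $|t|$ the even part of $g$ dominates while the odd part has a definite sign, deduces that the two preimages of a large level $c$ are asymmetric about $0$ in a way that the Gaussian weight favors one sign, and concludes via an explicit thresholding bound that $\E[(g/3\tau(g))^i t]>\beta\,\P[\beta\le t\le -\alpha]-O((2/3)^i)$ is eventually positive. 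Your proposed algebraic route also does not obviously produce a bound $C_{q,\rho}$ depending only on $(q,\rho)$; the paper gets uniformity from a separate ingredient you don't have, namely compactness of the set of degree-$\le q$ unit-norm polynomials combined with Gaussian hypercontractivity to control the tail of the series $\sum_i (H(f^i;k)/(2^{i/2}(2i-1)^{iq/2}))^2$.

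Part (i) of your proposal is too underdetermined to count as an argument: you oscillate between ``if $H(f^i;2)=0$ for all small $i$ then $f$ is nearly constant'' (unjustified) and ``iterate multiplication by $f$ and track the lowest surviving degree'' (not carried out, and not obviously monotone in the information exponent) before deferring to \cite{chen2020learning}. The paper's actual argument for (i) is the clean thresholding estimate
$\E[(f/2\tau(f))^i(t^2-1)]\ge 3\,\P[|f|\ge 2\tau(f)]-2^{-i}$
for even $i$, where $\tau(f)=\max_{|t|\le 2}|f(t)|$ ensures that $|f|\ge\tau(f)$ forces $|t|>2$ and hence $t^2-1\ge 3$; this is positive for large $i$, and the uniform bound $C_q$ again comes from compactness plus hypercontractivity. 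So the correct core ideas you are missing are: (a) a thresholding lower bound in place of an algebraic identity, (b) the passage to $g=f^2$ in part (ii) so the tail is dominated by an even-degree term, and (c) the compactness-plus-hypercontractivity mechanism that turns a per-$f$ existence statement into a uniform $C_q$ (resp.\ $C_{q,\rho}$).
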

The proof can be found in Appendix~\ref{app:transformation}. 
We make the following remarks. 
\begin{itemize}[leftmargin=*]
    \item The proposition implies that for any polynomial link function that is not even, there exists $i\in\N_+$ only depending on the degree of $\sigma_*$ such that raising the function to the $i$-th power reduces the information exponent to $1$ ($p_*=1$). For even $\sigma_*$, the information exponent after arbitrary transformation is at least $2$ ($p_*=2$), which can also be attained by monomial transformation. Furthermore, we provide a \textit{uniform} upper-bound on the required degree of transformation $i$ via a compactness argument. 
    \item The advantage of working with monomial transformations is that they can be obtained from two GD steps on the same training example, by Taylor expanding the activation $\sigma'$. In Section \ref{sec:SGD-transformation}, we build upon this observation to show that Phase I of Algorithm \ref{alg:main} achieves weak recovery using $n \gtrsim d\,\mathrm{polylog}(d)$ samples.  
\end{itemize}

\paragraph{Intuition behind the analysis.} 
Our proof is inspired by \cite{chen2020learning} which introduced a (non-polynomial) label transformation that reduces the information exponent of any degree-$q$ polynomial to at most $2$. 
To prove the existence of monomial transformation for the same purpose, we first show that for a fixed link function $\sigma_*$, there exists some $i$ such that the $i$-th power of the link function has information exponent $2$, which mirrors the transformation used in \cite{chen2020learning}. Then, we make use of the compactness of the space of link functions to define a test function and obtain a uniform bound on $i$. As for the polynomial transformation for non-even functions, we exploit the asymmetry of $\sigma_*$ to further reduce the information exponent to 1.

\subsection{SGD with Batch Reuse Implements Polynomial Transformation} 
\label{sec:SGD-transformation}
Now we present a more formal discussion of \eqref{eq:second-step} to illustrate how polynomial transformation can be utilized in batch reuse SGD. 
We let $\eta^t \equiv \eta$. 
% Here we let ignore polylogarithmic factors.
When one neuron $f_{\vw}(\vx) = \sigma(\langle\vx,\vw\rangle)$ is updated by two GD steps using the same sample $(\vx,y)$, starting from $\vw^{0}:=\vomega$, the alignment with $\vtheta$ becomes
\begin{align}
  &\langle\vtheta,\vw^{2}\rangle 
  = \left\langle\vtheta,\big[\vw^{1} + \eta\cdot y\sigma'(\langle\vx,\vw^{1}\rangle) \vx\big]\right\rangle
 % \\ & =   \vtheta^\top \big[\vw^{(0)}+\eta y\sigma'(\vomega^\top \vx) \vx +~ \eta y\sigma'(\vomega^\top \vx+\eta \norm{\vx}^2 
 % y\sigma'(\vomega^\top \vx) ) \vx\big].
 = \langle\vtheta,\vomega\rangle + \\
 & \eta \bigg[\textstyle y\sigma'(\langle\vomega,\vx\rangle) \langle\vtheta,\vx\rangle
 + \sum_{i=0}^{C_\sigma-1} \underbrace{\textstyle(\eta \|\vx\|^2)^{i}y^{i+1} (i!)^{-1}(\sigma'(\langle\vomega,\vx\rangle))^{i}\sigma^{(i+1)}(\langle\vomega,\vx\rangle)\langle\vtheta,\vx\rangle}_{=:\psi_i}\bigg]
   \label{eq:two-steps-2}.
\end{align}
We take $\eta \leq c_\eta d^{-1}$ with a small constant $c_\eta$ so that $\eta \|\vx\|^2 \ll 1$ with high probability. 
Crucially, the strength of each term in \eqref{eq:two-steps-2} can vary depending on properties of the unknown link function $\sigma_*$. 
Hence a careful analysis is required to ensure that a suitable monomial transformation is always singled out from the gradient. 
We establish the following lemma on the evolution of alignment. 
\begin{lemma}
Under the assumptions per Theorem~\ref{theorem:WeakRecovery}, the following holds for generative exponent $p_*=1,2$:
\begin{align}
      \langle\vtheta,\vw^{2(t+1)}\rangle \geq \langle\vtheta,\vw^{2t}\rangle + c_\eta^I  c_{\xi}c_\sigma d^{-\frac{p_*}{2}\lor 1}(\kappa^{2t})^{p_*-1}
        + c_\eta  c_{\xi} d^{-\frac{p_*}{2}\lor 1}\nu^{2t}.
\end{align}
\end{lemma}
See Lemma~\ref{lemma:PopulationPoly} for the formal version. 
For $p_*=1$, taking expectation immediately yields that weak recovery within $(\eta (1-\xi) \gamma)^{-1}=O(d)$ steps. 
For $p_*=2$, $\langle\vtheta,\vw_j^{2t}\rangle=:\kappa^t$ can be approximated by a differential equation $\frac{\mathrm{d}\kappa^t}{\mathrm{d}t} = \eta(1-\xi) \gamma \kappa^t$.
Solving this yields $\kappa^t = \kappa^0 \exp(\eta(1-\xi) \gamma t) \approx d^{-\frac12}\exp(\eta(1-\xi) \gamma t)$, and weak recovery is obtained within $t \lesssim (\eta (1-\xi) \gamma)^{-1} \cdot \log d = O(d\log d)$ steps, similar to the analysis in \cite{arous2021online}.

\paragraph{Why interpolation is needed.}
In our setting, the signal strength may not dominate the error from discarding the effect of normalization. In prior analyses for online SGD, given the gradient $-\vg$ and projection $P_{\vw}=\vI_d-\vw\vw^\top$, 
the spherical gradient changes the alignment as 
$\langle\vtheta,\vw^{t+1}\rangle=\big\langle\vtheta,\frac{ \vw^t + \eta P_{\vw} \vg}{\|\vw^t+\eta P_{\vw}\vg\|}\big\rangle\geq \langle\vtheta,\vw^t\rangle + \eta \langle\vtheta,\vg\rangle -\frac12 \eta^2 \|\vg\|^2 \langle\vtheta, \vw^t\rangle+\text{(negligible terms)}$, see \citep{arous2021online,damian2023smoothing}. 
Here $\eta \langle\vtheta,\vg\rangle$ corresponds to the signal, and $-\frac12 \eta^2 \|\vg\|^2 \langle\vtheta, \vw^t\rangle$ comes from normalization. 
Thus, taking $\eta$ sufficiently small, the normalization error shrinks faster than the signal.
However, in our case the signal shrinks at the rate of $c_\eta^I$ (recall that $\eta = c_\eta d^{-1}$), and hence taking a smaller step may not improve the signal-to-noise ratio when the degree of transformation $I$ is large. 
The interpolation step in Algorithm~\ref{alg:main} reduces the effect of normalization without shrinking the signal too much, by ensuring $\vw^{2(t+1)}$ stays close to $\vw^{2t}$. In particular, by setting $\xi = 1-\tilde{\eta}$, we see that the signal is affected by a factor of $\tilde{\eta}$ whereas the normalization error shrinks by $\tilde{\eta}^2$; this allows us to boost the signal-to-noise ratio by taking $\tilde{\eta}$  small.

\subsection{Analysis of Phase II and Statistical Guarantees}

Once strong recovery is achieved for the first-layer parameters, we turn to Phase II and optimize the second-layer with $\ell_2$ regularization. Since the objective is strongly convex, gradient-based optimization can efficiently minimize the empirical loss. 
In Appendix~\ref{subsection:Second}, the learnability guarantee follows from standard analysis analogous to that in \cite{abbe2022merged,damian2022neural,ba2022high}, where we construct a ``certificate'' second-layer $\va^*\in\R^N$ that achieves small loss and small norm: 
$$
\textstyle \E_{\vx}\left(f_*(\vx) - \frac{1}{N}\sum_{j=1}^N a^*_j\sigma_j\big(\langle{\vw_j^{T_1}}, \vx\rangle+b_j\big)\right)^2 \le \varepsilon^*,\quad \|\va^*\| \lesssim r^*, 
$$
from which the population loss of the regularized empirical risk minimizer can be bounded via standard Rademacher complexity argument. 
To construct such a certificate, we make use of the random bias units $\{b_j\}_{j=1}^N$ to approximate the link function $\sigma_*$ as done in \cite{damian2022neural,bietti2022learning,oko2024learning}.

\section{Beyond Polynomial Link Functions}

Thus far we have shown that for polynomial single-index target functions (which satisfy $p_*\le 2$), SGD with data reuse can implement a polynomial transformation to the labels that reduces the information exponent to at most 2; consequently, the trained two-layer neural network can achieve small generalization error with $n = d\polylog(d)$ samples. 
However, as shown in \cite{damian2024computational}, there exists (non-polynomial) $\sigma_*$ with generative exponent $p_*>2$ (i.e., label transformations cannot lower the information exponent to $2$) and thus not learnable by SQ algorithms in linear sample complexity. 

Nevertheless, for a single-index model with generative exponent $p_*$, we know there exists an ``optimal'' label transformation that reduces the information exponent to $p_*$. If SGD can make use of such transformation, then from the arguments in \cite{arous2021online}, it is natural to conjecture that a sample size of $n\simeq d^{p_*-1}$ is sufficient. In this section we confirm this intuition by proving that SGD with data reuse (Algorithm~\ref{alg:main}) indeed matches this complexity. The following lemma is an analogue of Proposition~\ref{prop:monomial-transformation} stating that polynomial transformations are sufficient to lower the information exponent. 
\begin{lemma}\label{lemm:Reduction_Generative}
Given link function $\sigma_*$ with generative exponent $p_*\in\N_+$.  
Suppose we can take an orthonormal polynomial basis $\{\phi_k\}_k$ for the space $L^2(P_y)$ with inner product $\langle f,g\rangle=\mathbb{E}_{y=\sigma_*(z)}[f(y)g(y)]$. 
Then there exists some degree of transformation $I\in\N_*$ such that $\mathrm{IE}(\sigma_*^I)=p_*$. 
\end{lemma} 

We outline the differences and additional technical challenges to handle the $\mathrm{GE}(\sigma_*)>2$ setting. 
\begin{itemize}[leftmargin=*,topsep=1mm,itemsep=1mm]
    \item For general $L^2$ link functions $\sigma_*$, we can no longer make use of the compactness argument (see proof of Proposition~\ref{prop:monomial-transformation}) to upper bound the degree of monomial transformation. Hence in Lemma~\ref{lemm:Reduction_Generative} we do not state a uniform upper bound on the required degree $I$, unlike the polynomial setting. 
    \item Any link function with $p_*>2$ cannot be polynomial, and hence we cannot achieve low generalization error using a neural network with polynomial nonlinearity. We therefore need to use an activation function with universal function approximation ability. 
\end{itemize}

\subsection{Sample Complexity for Weak Recovery}

We first show that Algorithm~\ref{alg:main} achieves weak recovery (i.e., nontrivial overlap with the ground truth $\vtheta$) with a complexity governed by the generative exponent of the link function $p_* = \mathrm{GE}(\sigma_*)$. Similar to Section~\ref{subsec:weak-strong-recovery}, we make use of randomized activation functions to ensure the desired label transformation is encoded --- we defer the conditions on the student activation 
to Appendix~\ref{subsubsection:Assumption-GeneralLink}. 
Similar to Theorem~\ref{theorem:WeakRecovery}, we focus on the subset of neurons with large initial overlap, and activation satisfying the assumptions in Appendix~\ref{subsubsection:Assumption-GeneralLink} (these conditions are met by $\tilde{\Omega}(1)$ fraction of neurons). 
\begin{prop}\label{prop:StochasticGE}

        Suppose the link function $\sigma_*$ has generative exponent $p_*$, and let $I\in\N_+$ be the smallest degree of monomial transformation that lowers the information exponent to $p_*$ (i.e., $\mathrm{IE}(\sigma_*^I) = p$). We can find a student activation function $\sigma$ depending only on $p,p_*$ and $I$, such that if we take $\eta^{2t},\eta^{2t+1} = c_\eta N d^{-1}$, $\xi^{2(t+1)} = 1-c_{\xi}d^{-\nicefrac{(p_*-2)_+}{2}}$ for small $c_\eta, c_\xi= o_d(1)$, and set
        \begin{align}
            T_{1,1} \simeq c_\xi^{-1} 
            \begin{cases}
                d & (\text{if $p_*=1$})
                \\ 
                d (\log d) &(\text{if $p_*=2$})
                \\
                d^{p_*-1} &(\text{if $p_* \geq 3$}),
            \end{cases}
        \end{align}
        then if the initial alignment $\langle\vw^0,\vtheta\rangle\geq 2c_\eta^{-1}d^{-\nicefrac{1}{2}}$, 
        there exists $\tau_*\leq T_{1,1}$ such that for all $\tau\ge\tau_*$,
        \begin{align}
            \langle\vw^{2\tau},\vtheta\rangle \geq \tilde{\Theta}(1), \quad \text{with probability $1-o_d(1)$.}
        \end{align}
\end{prop}
Proposition~\ref{prop:StochasticGE} is a generalization of Theorem~\ref{thm:weak-recovery} beyond polynomial $\sigma_*$ (the proof of both results are presented in Appendix~\ref{subsection:Expected},\ref{subsection:Stochastic}), and can be interpreted as an SQ counterpart to \cite{arous2021online}: we establish a sufficient sample size of $n\simeq d^{(p_*-1)\vee 1}$ for Algorithm~\ref{alg:main} to exit the search phase, which is parallel to the $n\simeq d^{(p-1)\vee 1}$ rate for one-pass SGD (note that our rates are slightly sharper due to logarithmic factors removed, since $c_\xi^{-1}$ can grow arbitrarily slowly with $d$). For high generative exponent $\sigma_*$ with $p_*>2$, we no longer match the information theoretically optimal sample complexity $n\asymp d$, which is consistent with the computational-to-statistical gap observed in \cite{damian2024computational}. 

\subsection{Generalization Error Guarantee}

After Phase I of Algorithm~\ref{alg:main}, we learn the unknown link function $\sigma_*$ via training the second-layer. To approximate non-polynomial functions, we introduce a ReLU component in the student nonlinearity $\sigma$ (see Lemma \ref{lemma:DesignActivation-2} for discussions), and make use of the approximation result for the (univariate) ReLU kernel in \cite{bietti2022learning}, which handles general $\sigma_*$ whose second derivative has bounded 4th moment. 
Combining the above, we arrive at the following end-to-end guarantee for learning single-index models with arbitrary generative exponent using SGD training of neural network. 

\begin{prop}[Informal] 
\label{prop:high-GE-test-error}
Suppose the link function $\sigma_*$ has generative exponent $p_*\in\N_*$ and satisfies $\sigma_*,\sigma_*''\in L^4(\gamma)$. For appropriately chosen activation function $\sigma$ (see Appendix~\ref{subsubsection:Assumption-GeneralLink}), a neural network \eqref{eq:student} with $N=\tilde{\Theta}(1)$ neurons optimized by Algorithm~\ref{alg:main} achieves small population loss $\mathbb{E}_{\vx}[(f_{\hat{\vTheta}}(\vx) - f_*(\vx))^2] = o_{d,\P}(1)$ with a sample complexity of $n = \tilde{\Theta}(d^{(p_*-1)\vee 1})$.  
\end{prop}
See Appendix~\ref{subsection:Second} for the full statement with $\varepsilon$ dependence. This proposition confirms that the sample complexity for weak recovery (Proposition~\ref{prop:StochasticGE}) is the bottleneck in single-index learning, as the total sample size required for Algorithm~\ref{alg:main} to achieve low test error also scales with $d^{(p_*-1)\vee 1}$.

\section{Conclusion and Future Directions}
% \vspace{-0.5mm}  

We showed that a two-layer neural network \eqref{eq:student} trained by SGD with reused batch can learn single-index model (with generative exponent $p_*$) using $n\simeq d^{(p_*-1)\vee 1}$ samples and compute; in particular, when the link function $\sigma_*$ is polynomial, we established a sample complexity of $n=\tilde{O}(d\varepsilon^{-2})$ to achieve $\varepsilon$ population loss, which is almost information theoretically optimal. Our analysis is based on the observation that by reusing the same training data twice in the gradient computation, a non-correlational term arises in the SGD update that transforms the labels (despite the loss function not modified). We proved that monomial transformations that lower the information exponent of $\sigma_*$ can be extracted by Taylor-expanding the SGD update; then we showed via careful analysis of the trajectory that strong recovery and low population loss is achieved under suitable activation function. 

Interesting future directions include extension to multi-index models \cite{ben2022high,bietti2023learning,collins2023hitting}, hierarchical target functions \cite{allen2019can,nichani2023provable}, and in-context learning \cite{oko2024pretrained}. Also, the SGD algorithm that we employ requires a layer-wise training procedure and a specific batch reuse schedule; one may therefore ask if standard multi-pass SGD training of all parameters simultaneously \cite{glasgow2023sgd} (as reported in Figure~\ref{fig:intro}) also achieves the same statistical efficiency.

% \bigskip

\subsection*{Acknowledgements}

The authors thank Gerard Ben Arous, Joan Bruna, Alex Damian, Marco Mondelli, and Eshaan Nichani for the discussions and feedback on the manuscript. JDL acknowledges support of the ARO under MURI Award W911NF-11-1-0304, NSF CCF 2002272, NSF IIS 2107304,  NSF CIF 2212262, ONR Young Investigator Award, and NSF CAREER Award 2144994. 
KO was partially supported by JST ACT-X (JPMJAX23C4).
TS was partially supported by JSPS KAKENHI (24K02905) and JST CREST (JPMJCR2015). 
This research is unrelated to DW's work at xAI.

\bigskip

{

\fontsize{10}{11}\selectfont     

\bibliography{citation}

\newcommand{\etalchar}[1]{$^{#1}$}
\begin{thebibliography}{MHWSE23}

\bibitem[AAM22]{abbe2022merged}
Emmanuel Abbe, Enric~Boix Adsera, and Theodor Misiakiewicz.
\newblock The merged-staircase property: a necessary and nearly sufficient condition for sgd learning of sparse functions on two-layer neural networks.
\newblock In {\em Conference on Learning Theory}, pages 4782--4887. PMLR, 2022.

\bibitem[AAM23]{abbe2023sgd}
Emmanuel Abbe, Enric~Boix Adsera, and Theodor Misiakiewicz.
\newblock {SGD} learning on neural networks: leap complexity and saddle-to-saddle dynamics.
\newblock In {\em The Thirty Sixth Annual Conference on Learning Theory}, pages 2552--2623. PMLR, 2023.

\bibitem[ADK{\etalchar{+}}24]{arnaboldi2024repetita}
Luca Arnaboldi, Yatin Dandi, Florent Krzakala, Luca Pesce, and Ludovic Stephan.
\newblock Repetita iuvant: Data repetition allows sgd to learn high-dimensional multi-index functions.
\newblock {\em arXiv preprint arXiv:2405.15459}, 2024.

\bibitem[AZ18]{allen2018katyusha}
Zeyuan Allen-Zhu.
\newblock Katyusha: The first direct acceleration of stochastic gradient methods.
\newblock {\em Journal of Machine Learning Research}, 18(221):1--51, 2018.

\bibitem[AZL19]{allen2019can}
Zeyuan Allen-Zhu and Yuanzhi Li.
\newblock What can resnet learn efficiently, going beyond kernels?
\newblock {\em Advances in Neural Information Processing Systems}, 32, 2019.

\bibitem[Bac17]{bach2017breaking}
Francis Bach.
\newblock Breaking the curse of dimensionality with convex neural networks.
\newblock {\em The Journal of Machine Learning Research}, 18(1):629--681, 2017.

\bibitem[BAGJ21]{arous2021online}
Gerard Ben~Arous, Reza Gheissari, and Aukosh Jagannath.
\newblock Online stochastic gradient descent on non-convex losses from high-dimensional inference.
\newblock {\em The Journal of Machine Learning Research}, 22(1):4788--4838, 2021.

\bibitem[BAGJ22]{ben2022high}
Gerard Ben~Arous, Reza Gheissari, and Aukosh Jagannath.
\newblock High-dimensional limit theorems for sgd: Effective dynamics and critical scaling.
\newblock {\em Advances in Neural Information Processing Systems}, 35:25349--25362, 2022.

\bibitem[BBPV23]{bietti2023learning}
Alberto Bietti, Joan Bruna, and Loucas Pillaud-Vivien.
\newblock On learning {G}aussian multi-index models with gradient flow.
\newblock {\em arXiv preprint arXiv:2310.19793}, 2023.

\bibitem[BBSS22]{bietti2022learning}
Alberto Bietti, Joan Bruna, Clayton Sanford, and Min~Jae Song.
\newblock Learning single-index models with shallow neural networks.
\newblock {\em Advances in Neural Information Processing Systems}, 35:9768--9783, 2022.

\bibitem[BES{\etalchar{+}}22]{ba2022high}
Jimmy Ba, Murat~A Erdogdu, Taiji Suzuki, Zhichao Wang, Denny Wu, and Greg Yang.
\newblock High-dimensional asymptotics of feature learning: How one gradient step improves the representation.
\newblock {\em Advances in Neural Information Processing Systems}, 35:37932--37946, 2022.

\bibitem[BES{\etalchar{+}}23]{ba2023learning}
Jimmy Ba, Murat~A Erdogdu, Taiji Suzuki, Zhichao Wang, and Denny Wu.
\newblock Learning in the presence of low-dimensional structure: A spiked random matrix perspective.
\newblock In {\em Thirty-seventh Conference on Neural Information Processing Systems}, 2023.

\bibitem[BF02]{bshouty2002using}
Nader~H Bshouty and Vitaly Feldman.
\newblock On using extended statistical queries to avoid membership queries.
\newblock {\em Journal of Machine Learning Research}, 2(Feb):359--395, 2002.

\bibitem[BKM{\etalchar{+}}19]{barbier2019optimal}
Jean Barbier, Florent Krzakala, Nicolas Macris, L{\'e}o Miolane, and Lenka Zdeborov{\'a}.
\newblock Optimal errors and phase transitions in high-dimensional generalized linear models.
\newblock {\em Proceedings of the National Academy of Sciences}, 116(12):5451--5460, 2019.

\bibitem[BL20]{bai2019beyond}
Yu~Bai and Jason~D. Lee.
\newblock Beyond linearization: On quadratic and higher-order approximation of wide neural networks.
\newblock In {\em International Conference on Learning Representations}, 2020.

\bibitem[BMZ23]{berthier2023learning}
Rapha{\"e}l Berthier, Andrea Montanari, and Kangjie Zhou.
\newblock Learning time-scales in two-layers neural networks.
\newblock {\em arXiv preprint arXiv:2303.00055}, 2023.

\bibitem[CCM11]{chang2011chernoff}
Seok-Ho Chang, Pamela~C Cosman, and Laurence~B Milstein.
\newblock Chernoff-type bounds for the {G}aussian error function.
\newblock {\em IEEE Transactions on Communications}, 59(11):2939--2944, 2011.

\bibitem[CM20]{chen2020learning}
Sitan Chen and Raghu Meka.
\newblock Learning polynomials in few relevant dimensions.
\newblock In {\em Conference on Learning Theory}, pages 1161--1227. PMLR, 2020.

\bibitem[COB19]{chizat2018note}
Lenaic Chizat, Edouard Oyallon, and Francis Bach.
\newblock On lazy training in differentiable programming.
\newblock {\em Advances in Neural Information Processing Systems}, 32, 2019.

\bibitem[CWPPS23]{collins2023hitting}
Elizabeth Collins-Woodfin, Courtney Paquette, Elliot Paquette, and Inbar Seroussi.
\newblock Hitting the high-dimensional notes: An ode for sgd learning dynamics on glms and multi-index models.
\newblock {\em arXiv preprint arXiv:2308.08977}, 2023.

\bibitem[DH18]{dudeja2018learning}
Rishabh Dudeja and Daniel Hsu.
\newblock Learning single-index models in gaussian space.
\newblock In {\em Conference On Learning Theory}, pages 1887--1930. PMLR, 2018.

\bibitem[DH24]{dudeja2024statistical}
Rishabh Dudeja and Daniel Hsu.
\newblock Statistical-computational trade-offs in tensor pca and related problems via communication complexity.
\newblock {\em The Annals of Statistics}, 52(1):131--156, 2024.

\bibitem[DKL{\etalchar{+}}23]{dandi2023learning}
Yatin Dandi, Florent Krzakala, Bruno Loureiro, Luca Pesce, and Ludovic Stephan.
\newblock Learning two-layer neural networks, one (giant) step at a time.
\newblock {\em arXiv preprint arXiv:2305.18270}, 2023.

\bibitem[DLS22]{damian2022neural}
Alexandru Damian, Jason Lee, and Mahdi Soltanolkotabi.
\newblock Neural networks can learn representations with gradient descent.
\newblock In {\em Conference on Learning Theory}, pages 5413--5452. PMLR, 2022.

\bibitem[DNGL23]{damian2023smoothing}
Alex Damian, Eshaan Nichani, Rong Ge, and Jason~D. Lee.
\newblock Smoothing the landscape boosts the signal for {SGD}: Optimal sample complexity for learning single index models.
\newblock In {\em Thirty-seventh Conference on Neural Information Processing Systems}, 2023.

\bibitem[DPVLB24]{damian2024computational}
Alex Damian, Loucas Pillaud-Vivien, Jason~D Lee, and Joan Bruna.
\newblock The computational complexity of learning gaussian single-index models.
\newblock {\em arXiv preprint arXiv:2403.05529}, 2024.

\bibitem[DTA{\etalchar{+}}24]{dandi2024benefits}
Yatin Dandi, Emanuele Troiani, Luca Arnaboldi, Luca Pesce, Lenka Zdeborov{\'a}, and Florent Krzakala.
\newblock The benefits of reusing batches for gradient descent in two-layer networks: Breaking the curse of information and leap exponents.
\newblock {\em arXiv preprint arXiv:2402.03220}, 2024.

\bibitem[DWY21]{donhauser2021rotational}
Konstantin Donhauser, Mingqi Wu, and Fanny Yang.
\newblock How rotational invariance of common kernels prevents generalization in high dimensions.
\newblock In {\em International Conference on Machine Learning}, pages 2804--2814. PMLR, 2021.

\bibitem[Gla23]{glasgow2023sgd}
Margalit Glasgow.
\newblock Sgd finds then tunes features in two-layer neural networks with near-optimal sample complexity: A case study in the xor problem.
\newblock {\em arXiv preprint arXiv:2309.15111}, 2023.

\bibitem[GMMM21]{ghorbani2019linearized}
Behrooz Ghorbani, Song Mei, Theodor Misiakiewicz, and Andrea Montanari.
\newblock Linearized two-layers neural networks in high dimension.
\newblock {\em The Annals of Statistics}, 49(2):1029--1054, 2021.

\bibitem[HSSVG21]{hsu2021approximation}
Daniel Hsu, Clayton~H Sanford, Rocco Servedio, and Emmanouil~Vasileios Vlatakis-Gkaragkounis.
\newblock On the approximation power of two-layer networks of random relus.
\newblock In {\em Conference on Learning Theory}, pages 2423--2461. PMLR, 2021.

\bibitem[JGH18]{jacot2018neural}
Arthur Jacot, Franck Gabriel, and Cl{\'e}ment Hongler.
\newblock Neural tangent kernel: Convergence and generalization in neural networks.
\newblock In {\em Advances in neural information processing systems}, pages 8571--8580, 2018.

\bibitem[Kea98]{kearns1998efficient}
Michael Kearns.
\newblock Efficient noise-tolerant learning from statistical queries.
\newblock {\em Journal of the ACM (JACM)}, 45(6):983--1006, 1998.

\bibitem[KMS20]{kamath2020approximate}
Pritish Kamath, Omar Montasser, and Nathan Srebro.
\newblock Approximate is good enough: Probabilistic variants of dimensional and margin complexity.
\newblock In {\em Conference on Learning Theory}, pages 2236--2262. PMLR, 2020.

\bibitem[MHPG{\etalchar{+}}23]{mousavi2023neural}
Alireza Mousavi-Hosseini, Sejun Park, Manuela Girotti, Ioannis Mitliagkas, and Murat~A Erdogdu.
\newblock Neural networks efficiently learn low-dimensional representations with {SGD}.
\newblock In {\em The Eleventh International Conference on Learning Representations}, 2023.

\bibitem[MHWSE23]{mousavi2023gradient}
Alireza Mousavi-Hosseini, Denny Wu, Taiji Suzuki, and Murat~A. Erdogdu.
\newblock Gradient-based feature learning under structured data.
\newblock In {\em Thirty-seventh Conference on Neural Information Processing Systems (NeurIPS 2023)}, 2023.

\bibitem[MM18]{mondelli2018fundamental}
Marco Mondelli and Andrea Montanari.
\newblock Fundamental limits of weak recovery with applications to phase retrieval.
\newblock In {\em Conference On Learning Theory}, pages 1445--1450. PMLR, 2018.

\bibitem[MZD{\etalchar{+}}23]{mahankali2024beyond}
Arvind Mahankali, Haochen Zhang, Kefan Dong, Margalit Glasgow, and Tengyu Ma.
\newblock Beyond ntk with vanilla gradient descent: A mean-field analysis of neural networks with polynomial width, samples, and time.
\newblock {\em Advances in Neural Information Processing Systems}, 36, 2023.

\bibitem[NDL23]{nichani2023provable}
Eshaan Nichani, Alex Damian, and Jason~D Lee.
\newblock Provable guarantees for nonlinear feature learning in three-layer neural networks.
\newblock {\em Advances in Neural Information Processing Systems}, 36, 2023.

\bibitem[O'D14]{odonnell2014analysis}
Ryan O'Donnell.
\newblock {\em Analysis of Boolean Functions}.
\newblock Cambridge University Press, 2014.

\bibitem[OSSW24a]{oko2024learning}
Kazusato Oko, Yujin Song, Taiji Suzuki, and Denny Wu.
\newblock Learning sum of diverse features: computational hardness and efficient gradient-based training for ridge combinations.
\newblock In {\em Conference on Learning Theory}. PMLR, 2024.

\bibitem[OSSW24b]{oko2024pretrained}
Kazusato Oko, Yujin Song, Taiji Suzuki, and Denny Wu.
\newblock Pretrained transformer efficiently learns low-dimensional target functions in-context.
\newblock {\em arXiv preprint arXiv:2411.02544}, 2024.

\bibitem[Rey20]{reyzin2020statistical}
Lev Reyzin.
\newblock Statistical queries and statistical algorithms: Foundations and applications.
\newblock {\em arXiv preprint arXiv:2004.00557}, 2020.

\bibitem[Sch80]{schwartz1980fast}
Jacob~T Schwartz.
\newblock Fast probabilistic algorithms for verification of polynomial identities.
\newblock {\em Journal of the ACM (JACM)}, 27(4):701--717, 1980.

\bibitem[TS24]{takakura2024mean}
Shokichi Takakura and Taiji Suzuki.
\newblock Mean-field analysis on two-layer neural networks from a kernel perspective.
\newblock {\em arXiv preprint arXiv:2403.14917}, 2024.

\bibitem[WWF24]{wang2024nonlinear}
Zhichao Wang, Denny Wu, and Zhou Fan.
\newblock Nonlinear spiked covariance matrices and signal propagation in deep neural networks.
\newblock In {\em Conference on Learning Theory}. PMLR, 2024.

\bibitem[YS19]{yehudai2019power}
Gilad Yehudai and Ohad Shamir.
\newblock On the power and limitations of random features for understanding neural networks.
\newblock {\em Advances in Neural Information Processing Systems}, 32, 2019.

\bibitem[ZLBH19]{zhang2019lookahead}
Michael Zhang, James Lucas, Jimmy Ba, and Geoffrey~E Hinton.
\newblock Lookahead optimizer: k steps forward, 1 step back.
\newblock {\em Advances in neural information processing systems}, 32, 2019.

\end{thebibliography}
\bibliographystyle{alpha}

} 

\newpage
{
\renewcommand{\contentsname}{Table of Contents}
\tableofcontents
}

\newpage

\appendix
\renewcommand{\cA}{c_1}
\renewcommand{\cB}{c_2}
\renewcommand{\cC}{c_1}
\renewcommand{\cD}{c_5}
\newcommand{\CA}{C_3}
\newcommand{\CB}{C_1}
\newcommand{\CC}{C_2}
\renewcommand{\CD}{C_4}

\allowdisplaybreaks

\section{Polynomial Transformation}
\label{app:transformation}

\begin{proofof}[Proposition~\ref{prop:monomial-transformation}] 
We use a thresholding and compactness argument inspired by \cite{chen2020learning}. 

\subsection{Proof for Even Functions $(i)$}

We divide the analysis into the following steps. 

    \noindent {\bf (i-1): Monomials reducing the information exponent.}
    Define $\tau(f)=\max_{-2\leq t\leq 2}|f(t)|$.
    This entails that if $|f(t)|\geq \tau(f)$, then we have $|t|>2$.
    
    Consider the following expectation:
    \begin{align}
        \mathbb{E}_{t\sim \mathcal{N}(0,1)}\bigg[\bigg(\frac{f(t)}{2\tau(f)}\bigg)^{i} (t^2-1)\bigg].
        \label{eq:PT-1}
    \end{align}
    We evaluate the case when $i$ is even.
    \eqref{eq:PT-1} can be lower bounded as
    \begin{align}
      \eqref{eq:PT-1}
     = &~  \mathbb{E}_{t\sim \mathcal{N}(0,1)}\bigg[\mathbbm{1}[|f(t)|\geq 2\tau(f)]\bigg(\frac{f(t)}{2\tau(f)}\bigg)^{i} (t^2-1)\bigg] \\
        &\quad +\mathbb{E}_{t\sim \mathcal{N}(0,1)}\bigg[\mathbbm{1}[\tau(f)\leq |f(t)|<2 \tau(f)]\bigg(\frac{f(t)}{2\tau(f)}\bigg)^{i} (t^2-1)\bigg]
      \\ & \quad  +\mathbb{E}_{t\sim \mathcal{N}(0,1)}\bigg[\mathbbm{1}[|f(t)|< \tau(f)]\bigg(\frac{f(t)}{2\tau(f)}\bigg)^{i} (t^2-1)\bigg]
      \\ \geq &~ \mathbb{E}_{t\sim \mathcal{N}(0,1)}\bigg[\mathbbm{1}[|f(t)|\geq 2\tau(f)]\bigg(\frac{2\tau(f)}{2\tau(f)}\bigg)^{i}(2^2-1)\bigg] \\
      &\quad  + \mathbb{E}_{t\sim \mathcal{N}(0,1)}\bigg[\mathbbm{1}[\tau(f)\leq |f(t)|<2 \tau(f)]\bigg(\frac{f(t)}{2\tau(f)}\bigg)^{i} (2^2-1)\bigg]
    \\ & \quad  +\mathbb{E}_{t\sim \mathcal{N}(0,1)}\bigg[\mathbbm{1}[|f(t)|< \tau(f)]\bigg(\frac{\tau(f)}{2\tau(f)}\bigg)^{i}(0^2-1)\bigg] 
    \\ \geq &~ 3\mathbb{P}_{t\sim \mathcal{N}(0,1)}[|f(t)|\geq 2\tau(f)]-2^{-i}.
    \end{align}
    Note that $\mathbb{P}[|f(t)|\geq 2\tau(f)]$ is positive (since $f$ is polynomial) and independent of $i$, while $2^{-i}$ decays to $0$ as $i$ increases.
    Therefore, for sufficiently large $i\in \mathbb{N}$, \eqref{eq:PT-1} is positive and hence ${\mathrm{IE}}(f^{i})\le 2$. The subsequent analysis aims to provide an upper bound on $i$. 

    \noindent {\bf (i-2): Construction of test function.}
    We introduce the notation $H(\cdot ;j)$ which takes any function (in $L^1$) and returns its $j$-th Hermite coefficient. We consider the following test function:
    \begin{align}
    \mathscr{H}(f):=
       \sum_{i=2}^\infty \bigg(\frac{H(f^i;2)}{2^{\frac{i}{2}}(2i-1)^{\frac{iq}{2}}}\bigg)^2.
        \label{eq:PT-2}
    \end{align}

    \noindent {\bf (i-3): Lower bound of test function via compactness.}
    Let $\mathcal{F}_q$ be a set of polynomials with degree up to $q$ with unit $L^2$ norm. 
    Because $\mathscr{H}(f)$ is positive for any $f\in\mathcal{F}_q$, $H(f^i;2)$ is continuous with respect to $f$, and $\mathcal{F}_q$ is a compact set, $\inf_{f\in\mathcal{F}_q} \mathscr{H}(f)$ admits a minimum value $\mathscr{H}_0$ which is positive.

    \noindent {\bf (i-4): Conclusion via hypercontractivity.}
    Because $f$ is a polynomial with degree at most $q$, Gaussian hypercontractivity \cite{odonnell2014analysis} yields that
    \begin{align}
      2 H(f^i;2)^2 \leq \mathbb{E}_{t\sim \mathcal{N}(0,1)}\big[(f(t))^{2i}\big] \leq (2i-1)^{iq} \big(\mathbb{E}_{t\sim \mathcal{N}(0,1)}\big[f(t)^2\big]\big)^i
        =(2i-1)^{iq}.
        \label{eq:PT-3}
    \end{align}
    Therefore, for all polynomials in $\mathcal{F}_q$, a partial sum of \eqref{eq:PT-2}
    is uniformly bounded by
    \begin{align}
       \bigg|\sum_{i=j}^\infty \bigg(\frac{H(f^i;2)}{2^{\frac{i}{2}}(2i-1)^{\frac{iq}{2}}}\bigg)^2\bigg|\leq \sum_{i=j}^\infty 2^{-i-1}=2^{-j}\to 0\quad (j\to \infty)
      .
    \end{align}
    Combining this with the fact that $\mathscr{H}(f)\geq \mathscr{H}_0>0$, we know that there exists some $C_q\leq 1+\log_2 (\mathscr{H}_0^{-1})$ such that
    \begin{align}
       \sum_{i=2}^{C_q} \bigg(\frac{H(f^i;2)}{2^{\frac{i}{2}}(2i-1)^{\frac{iq}{2}}}\bigg)^2>\frac12 \mathscr{H}_0>0, 
    \end{align}
    for all polynomials in $\mathcal{F}_q$.
    This means that there is at least one $i\leq C_q $ such that $H(f^i;2)\ne 0$. 

\subsection{Proof for Non-even Functions $(ii)$}

    \noindent {\bf (ii-1): Monomials reducing the information exponent.} 
    We prove that some exponentiation of $g:=f^2$ has non-zero first Hermite coefficient. Denote $g^{\mathrm{odd}}$ as the odd part of $g$, and similarly $g^{\mathrm{even}}$. 
    Let $\upsilon(g)\in \mathbb{R}_+$ be the value at which the followings hold: 
    \begin{itemize}[leftmargin=7mm]
        \item[(a)] $g^{\mathrm{odd}}(t)>0$ for all $t\geq\upsilon(g) $ and  $g^{\mathrm{odd}}(t)<0$ for all $t\leq-\upsilon(g)$.
        \item[(b)] $g^{\mathrm{even}}(t)>|g^{\mathrm{odd}}(t)|$ for all $t\geq\upsilon(g) $ and $t\leq-\upsilon(g)$.
        \item[(c)] For for all $t\geq\upsilon(g) $ and
$t\leq-\upsilon(g)$, $g(s)=g(t)$ (as an equation of $s$) only has two real-valued solutions with opposing signs.
\end{itemize}
    Such threshold $\upsilon(g)$ exists because the tail of $g=f^2$ is dominated by the highest degree which is even. 
    Then, we let $\tau(g)=\max_{-\upsilon(g)\leq t\leq \upsilon(g)}|g(t)|$.
    
        Consider the following expectation:
    \begin{align}
        \mathbb{E}_{t\sim \mathcal{N}(0,1)}\bigg[\bigg(\frac{g(t)}{2\tau(g)}\bigg)^{i} t\bigg].
        \label{eq:PT-11}
    \end{align}
    \eqref{eq:PT-11} is decomposed as
    \begin{align}
      \eqref{eq:PT-11}
     =&~ \mathbb{E}_{t\sim \mathcal{N}(0,1)}\bigg[\mathbbm{1}[|g(t)|\geq 3\tau(f)]\bigg(\frac{g(t)}{3\tau(g)}\bigg)^{i} t\bigg] \\
        & \quad +\mathbb{E}_{t\sim \mathcal{N}(0,1)}\bigg[\mathbbm{1}[2\tau(g)\leq |g(t)|<3 \tau(g)]\bigg(\frac{g(t)}{3\tau(f)}\bigg)^{i} t\bigg]
      \\ & \quad  +\mathbb{E}_{t\sim \mathcal{N}(0,1)}\bigg[\mathbbm{1}[|g(t)|< 2\tau(g)]\bigg(\frac{g(t)}{3\tau(g)}\bigg)^{i} t\bigg]
       \label{eq:PT-12}.
    \end{align}
    We first evaluate the first term. 
    Because of (c), $g(t)= 3\tau(f)$ has two real-valued solutions $\alpha<0<\beta$.
    Because of (a) and (b), $g(\beta) =g^{\mathrm{even}}(\beta)+g^{\mathrm{odd}}(\beta)=3\tau(f)>g^{\mathrm{even}}(-\beta)+g^{\mathrm{odd}}(-\beta)=g^{\mathrm{odd}}(-\beta)$.
    Because $\lim_{t\to -\infty}g^{\mathrm{odd}}(t)=+\infty$, and $\alpha$ is the only solution in $t<0$, we have $\alpha<-\beta$.
    Moreover, for all $t>\beta$, we have $g(t) =g^{\mathrm{even}}(t)+g^{\mathrm{odd}}(t)>g^{\mathrm{even}}(-t)+g^{\mathrm{odd}}(-t)=g^{\mathrm{odd}}(-t)$.
    Combining the above, the first term of \eqref{eq:PT-12} is bounded as
    \begin{align}
        &\mathbb{E}_{t\sim \mathcal{N}(0,1)}\bigg[\mathbbm{1}[|g(t)|\geq 3\tau(f)]\bigg(\frac{g(t)}{3\tau(g)}\bigg)^{i} t\bigg]
        \\ & = \mathbb{E}_{t\sim \mathcal{N}(0,1)}\bigg[\mathbbm{1}[\beta\leq t\leq -\alpha]\bigg(\frac{g(t)}{3\tau(g)}\bigg)^{i} t\bigg]
        +
        \mathbb{E}_{t\sim \mathcal{N}(0,1)}\bigg[\mathbbm{1}[t\geq -\alpha] \bigg(\frac{g(t)}{3\tau(g)}\bigg)^{i}t\bigg]
    \\ & \quad  +  \mathbb{E}_{t\sim \mathcal{N}(0,1)}\bigg[\mathbbm{1}[t\leq \alpha] \bigg(\frac{g(t)}{3\tau(g)}\bigg)^{i}t\bigg]
      \\ & =  \mathbb{E}_{t\sim \mathcal{N}(0,1)}\big[\mathbbm{1}[\beta\leq t\leq -\alpha] t\big]
        +\mathbb{E}_{t\sim \mathcal{N}(0,1)}\bigg[\mathbbm{1}[t\geq -\alpha]\bigg(\bigg(\frac{g(t)}{3\tau(g)}\bigg)^{i}- \bigg(\frac{g(-t)}{3\tau(g)}\bigg)^{i}\bigg)t\bigg]
       \\ &  >\beta\mathbb{P}_{t\sim \mathcal{N}(0,1)}\big[\beta\leq t\leq -\alpha\big].
    \end{align} 
   Following the exact same reasoning, we know that the second term of \eqref{eq:PT-12} is positive.
Finally, the third term which is bounded by
    \begin{align}
        \mathbb{E}_{t\sim \mathcal{N}(0,1)}\big[\mathbbm{1}[|g(t)|< 2\tau(g)]\bigg(\frac{g(t)}{3\tau(g)}\bigg)^{i} t\big]
        \geq 
       - \mathbb{E}_{t\sim \mathcal{N}(0,1)}\big[\mathbbm{1}[|g(t)|< 2\tau(g)]|t|\big]\bigg(\frac{2}{3}\bigg)^{i}.
    \end{align}

    Putting things together, 
    \begin{align}
        \eqref{eq:PT-12} > \beta\mathbb{P}_{t\sim \mathcal{N}(0,1)}\big[\beta\leq t\leq -\alpha\big]
        -\mathbb{E}_{t\sim \mathcal{N}(0,1)}\big[\mathbbm{1}[|g(t)|< 2\tau(g)]|t|\big]\bigg(\frac{2}{3}\bigg)^{i}.
    \end{align}
    The first term is independent of $i$ and positive, while the second term goes to zero as $i$ grows.
    Therefore, there exists some $i$ such that $\mathrm{IE}(g^i;1)=1$.

    \noindent {\bf (ii-2): Construction of test function.}
    This time we consider the following function:
    \begin{align}
    \mathscr{H}(f):=
       \sum_{i=2}^\infty \bigg(\frac{H(f^i;1)}{2^{\frac{i}{2}}(2i-1)^{\frac{iq}{2}}}\bigg)^2
       .
        \label{eq:PT-13}
    \end{align}

    \noindent {\bf (ii-3): Lower bound of test function via compactness.}
    Let $\mathcal{F}_q$ be a set of unit $L^2$-norm polynomials with degree up to $q$ and $\mathbb{E}_{t\sim \mathcal{N}(0,1)}[f^{\mathrm{odd}}(t)^2]\geq c$. 
    Since $ \mathscr{H}(f)$ is always positive for $\mathcal{F}_q$, $ \mathscr{H}(f)$ is continuous with respect to $f$, and $\mathcal{F}_q$ is a compact set, $\inf_{f\in\mathcal{F}_q}  \mathscr{H}(f)$ has the minimum value $ \mathscr{H}_0$ that is positive.
    Note that $\mathscr{H}(f)$ might depends on $c$.

    \noindent {\bf (ii-4): Conclusion via hypercontractivity.}
    Using the same argument as in (i),  we conclude that there exists some $C_{q,c}$ such that 
    \begin{align}
       \sum_{i=2}^{C_q} \bigg(\frac{H(f^i;1)}{2^i(2i-1)^{\frac{iq}{2}}}\bigg)^2>\frac12 \mathscr{H}_0>0. 
    \end{align}
    Because $\mathscr{H}_0$ depends on $c$, $C_{q,c}$ depends on $c$ as well as $q$.
\end{proofof}

\subsection{Proof for Non-Polynomial Functions}\label{subsection:Reduction_Generative}
For non-polynomial link functions, we note that similar to \cite{damian2024computational}, the existence of polynomial basis is needed to exclude extreme cases, and we cannot upper bound the required degree $I$ because general link functions are not included in a compact space. 

\begin{proofof}[Lemma~\ref{lemm:Reduction_Generative}]
The derivation is analogous to \cite[Lemma F.14]{damian2024computational}. Let $z\sim \mathcal{N}(0,1)$ and $y=\sigma_*(z)$. 
We define $\zeta_{p_*}(y)=\mathbb{E}[\frac{1}{\sqrt{p_*!}}\He_{p_*}(z)|y]$ and its basis expansion $\zeta_{p_*}(y)=\sum_{k=0}^\infty v_k \phi_k$.
Let $K$ be a smallest integer such that $v_k\ne 0$.
Then, there exists an integer with $I\leq K$ such that $\mathrm{IE}(y^I)=p_*$. 
Indeed, 
\begin{align}
    \mathbb{E}[\phi_K(y)\He_{p_*}(z)]
    &=\mathbb{E}_y[\Phi_K(y)\mathbb{E}_{z|y}[\He_{p_*}(z)|y]]
    \\ & =\mathbb{E}_y\bigg[\Phi_K(y)\sum_{k=0}^K v_k \phi_k (y)\bigg]
    =v_K\ne 0,
\end{align}
which means that at least one of $y,y^2,\cdots,y^K$ yields a non-zero $p_*$-th Hermite coefficient. 
\end{proofof}

\section{SGD with Reused Batch}
\label{app:SGD}

In this section we show that Algorithm~\ref{alg:main} learns single-index models in $\tilde{O}(d^{1\lor(p_*-1)})$ samples with high probability.
The algorithm trains the first layer for $T_1$ SGD steps, where we sample a new data point in every two steps. 
The first layer training is further divided into two phases: weak recovery ($\vw^\top \vtheta \gtrsim 1$) and strong recovery ($\|\vw  - \vtheta\| \lesssim \varepsilon$). Then, we learn the second layer parameters.

Specifically, Section~\ref{subsection:Initialization} shows that at initialization, a (nearly) constant fraction of neurons has alignment $\vw^\top \vtheta$ beyond a certain threshold.
We focus on such neurons in the first phase of training. 
Section~\ref{subsection:Expected} lower bounds the expected update of alignment $\vw^\top \vtheta$ of two gradient steps, and Section~\ref{subsection:Stochastic} establishes that the neurons achieve weak recovery within $2T_{1,1}=\tilde{O}(d^{1\lor (p_*-1)})$ steps.
Section~\ref{subsection:Final} discusses how to convert weak recovery to strong recovery using $2T_{1,2}=\tilde{O}(d\varepsilon^{-2})$ more steps.
We let $T_1=2T_{1,1}+2T_{1,2}$.
Finally, Section~\ref{subsection:Second} analyzes second layer training and concludes the proof. 

In the following proofs, we use several constants, which depends on $d$ at most at most polylogarithmically.
Specifically, asymptotic strength of the constants is ordered as follows.
\begin{align}
   1 \simeq c_\sigma \simeq C_1 \lesssim 
        \left\{ \begin{matrix}
        c_\eta^{-1} \simeq C_2 \lesssim \mathrm{poly}(c_\eta^{-1})\lesssim\left\{\begin{matrix} c_1^{-1}\simeq C_3\\ c_2^{-1} \\\end{matrix}\right\}\\
        \delta^{-1} \\
        \end{matrix} \right\}
   & \lesssim \left\{ \begin{matrix}\delta^{-1}\mathrm{poly}(c_\eta^{-1})\lesssim c_\xi^{-1} \\ \mathrm{poly}(c_1^{-1})\lesssim \bar{c}_\eta^{-1}\end{matrix} \right\}
   \\ & \lesssim\mathrm{polylog}(d) =C_4.
\end{align}
% $r_\beta$ in the main text can be taken as $r_\beta = \cE$. 
Here, $c_\eta$ and $\delta$ should satisfy $\lim_{d\to \infty}c_\eta =\lim_{d\to \infty}\delta =0$, but the convergence can be arbitrarily slow, (e.g., as slow as $1/\log\log\log \cdots \log d$).
This requirement comes from the fact that we do not know the exact value of $H(\sigma_*^I;p_*)$.
To ensure that one signal term (from the Taylor series) is isolated, taking $\eta\asymp d^{-1}$ with a sufficiently small constant is insufficient but $\eta\asymp c_\eta d^{-1}$ with arbitrarily slow $c_\eta$ suffices. 
Also, to guarantee that the failure probability is $o_d(1)$, we require $\delta$ to be $o_d(1)$.
$c_\xi$ can also decay arbitrarily slowly, as long as it satisfies $c_\xi\lesssim \delta\mathrm{poly}(c_\eta^{-1})$. 
$C_4=\mathrm{polylog}(d)$ will be used to represent any polylogarithmic factor that comes from high probability bounds.

For the first-layer training, we can reduce the argument into training of one neuron using the correlation loss as follows. 
At each step, the gradient update (Line 8 of Algorithm 1) is written as
\begin{align}\vw^{t+1}_j &\leftarrow \vw^t_j -\eta^t
   \tilde{\nabla}_{\vw}\big( (f_{\vTheta}(\vx) -y)^2\big) 
 \\ &  = \vw^t_j - \eta^t \tilde{\nabla}_{\vw}\bigg(\frac1N \sum_{j=1}^N a_j \sigma_j({\vw_j^t}^\top \vx)\bigg)^2 
 +2 \eta^t_j \tilde{\nabla}_{\vw}\bigg(y \frac1N \sum_{j=1}^N a_j \sigma_j({\vw_j^t}^\top \vx)\bigg)
 \\ &  =  \vw^t_j - \frac{2\eta^t c_a^2}{N} \bigg(\frac1N \sum_{j=1}^N  \sigma_j({\vw_j^t}^\top \vx)\bigg) \big(\tilde{\nabla}_{\vw} \sigma_j({\vw_j^t}^\top \vx)\big) 
 +\frac{2\eta^t c_a}{N} y\big(\tilde{\nabla}_{\vw}\sigma_j({\vw_j^t}^\top \vx)\big)
 .
 \label{eq:IgnoreInteraction}
\end{align}
While the second term scales with $ \eta^tc_a^2 N^{-1}$, the third term scales with $\eta^tc_a N^{-1}$.
Thus, by setting $c_a$ sufficiently small, we can ignore the interaction between neurons.
We will show that the strength of the signal 
in the direction of $\vtheta$ is at least $(\kappa_j^{t})^{p_*-1} \gtrsim d^{-\frac{p_*-1}{2}}$ (up to a polylogarithmic factor, and $p_*=\mathrm{GE}(\sigma_*)$).
On the other hand, we can easily see that $\vtheta^\top \big(\frac1N \sum_{j=1}^N  \sigma_j({\vw_j^t}^\top \vx)\big) \big(\tilde{\nabla}_{\vw} \sigma_j({\vw_j^t}^\top \vx)\big) $ is bounded by $\tilde{O}(1)$ with high probability.
Therefore, by simply letting $c_a= \tilde{\Theta}(d^{-\frac{p_*-1}{2}})$, we can ignore the effect of the second term in \eqref{eq:IgnoreInteraction}. 
Moreover, for simplicity, we will reparameterize $\frac{2\eta^t c_a}{N}$ as $\eta^t$ below. 
Consequently, we may analyze the following update
\begin{align}\vw^{t+1}_j \leftarrow \vw^t_j + \eta^t \tilde{\nabla}_{\vw}\big(y  \sigma_j({\vw^t_j}^\top \vx)\big)
,\label{eq:OneNeuronCorr}
\end{align}
instead of Line 8 of Algorithm 1.
Since there is no interaction between neurons now, we omit the subscript $j$ when the context is clear.

\subsection{Assumptions on Link Function}\label{subsection:linkfunction}

The analysis consists of three different phases: weak recovery and strong recovery of the first-layer weights, and approximation of the link function (ridge regression of the second-layer).
Each phase requires different assumptions on the activation functions, depending on the link function.
Before starting the analysis, we decompose Assumptions~\ref{assump:student1} and \ref{assump:student2} and clarify which conditions are needed in each phase.
We prove that instead of using a specific activation function tailored to different link functions, a randomized activation function satisfies all required assumptions with probability $\Omega(1)$.

In the following, we write the student activation function as
\begin{align}
  \sigma_j(s):= \sum_{i=0}^{\infty}\beta_{j,i} \He_i(s)
\end{align}
with coefficients $\{\beta_{j,i}\}_{i=0}^{C_\sigma}$ (sometimes the subscript $j$, which is the index of the neurons, is omitted).

\subsubsection{For polynomial link functions}

In the following, we summarize the precise conditions to be satisfied by the activation functions (these conditions are weaker than Assumptions~\ref{assump:student1}~and~\ref{assump:student2}). 
For polynomial link functions, we focus on polynomial activation functions (with bounded degree) for simplicity, but non-polynomial activation functions would not change the proof significantly.

Let $p$ and $q$ be the minimum and maximum degree of non-zero Hermite coefficients of $\sigma_*$.
Note that $\mathrm{GE}(\sigma_*)=1$ or $2$ holds (see Proposition~\ref{prop:monomial-transformation}).
Let $I\leq C_q$ (according to Proposition~\ref{prop:monomial-transformation}) be the smallest integer such that $\mathrm{IE}(\sigma_*^I)=\mathrm{GE}(\sigma_*)=p_*$ and $C_\sigma$ be the degree of the activation function.
\begin{itemize}[leftmargin=7mm]
    \item[{\bf (I)}] {\bf If $I=1 \Leftrightarrow \mathrm{IE}(\sigma_*)=\mathrm{GE}(\sigma_*)=p_*$.}
    \begin{itemize}
        \item[]{\bf Weak recovery:} $\alpha_{p_*}\beta_{p_*}>0$ (covered by Assumption~\ref{assump:student2}).
        \item[]{\bf Strong recovery:} $\sum_{j=p_*}^{q} j!\alpha_j\beta_j s^{j-1}>0$ for all $s>0$ (covered by Assumption~\ref{assump:student1}).
        \item[]{\bf Approximation (ridge regression):} $\beta_i \ne 0$ for some $i\geq q$  (covered by Assumption~\ref{assump:student2}).
    \end{itemize}
    \item[{\bf (II)}] {\bf If $2\leq I=\{\min\ i \mid \mathrm{IE}(\sigma_*^I)=\mathrm{GE}(\sigma_*)=p_*\}\leq C_\sigma$.}
    \begin{itemize}
        \item[]{\bf Weak recovery:} $H((\sigma_*)^I;p_*)H(\sigma^{(I)}(\sigma^{(1)})^{I-1};p_*-1) >0 $  (covered by Assumption~\ref{assump:student2}).
        \item[]{\bf Strong recovery:}  $\sum_{j=p_*+1}^{q} j!\alpha_j\beta_j s^{j-1}>0$ for all $s>0$  (covered by Assumption~\ref{assump:student1}).
        \item[]{\bf Approximation:} $\beta_i \ne 0$ for some $i\geq q$  (covered by Assumption~\ref{assump:student2}).
    \end{itemize}
\end{itemize}
Note that it is difficult to construct a deterministic activation function that satisfies all of the assumptions for any link function $\sigma_*$ (the simplest counterexample is to consider $-\sigma_*$ which flips the Hermite coefficients).
Instead, we show the existence of randomized construction of such an activation function that satisfies all of the assumptions on the activation function simultaneously with constant probability, which entails that a subset of neurons can achieve strong recovery. 
The construction does not depend on properties of the link function itself except for its degree $q$. 
\begin{lemma}\label{lemma:DesignActivation-1}
There exists a randomized activation function sampled from a discrete set such that the above conditions hold with constant probability. 
\end{lemma}
\begin{proof}
    Let $c$ be a sufficiently small constant only used in this proof and $C_\sigma$ 
     be the minimum odd integer with $C_\sigma \geq \max\{C_q+1,q+2,3\}$, where $C_q$ was introduced in Proposition~\ref{prop:monomial-transformation}.
    With probability $\frac12$, we let  $\beta_1\sim \mathrm{Unif}(\{\pm 1\})$, 
    and $\beta_j\sim \mathrm{Unif}(\{\pm c\})$ 
    for $2\leq j\leq C_\sigma$.
     With probability $\frac12$, we let $\beta_j\sim \mathrm{Unif}(\{\pm c\})$
    for $1\leq j\leq C_\sigma-2$ and $\beta_{C_\sigma-1}=\beta_{C_\sigma}\sim \mathrm{Unif}(\{\pm 1\})$.

    We first consider (I). 
    When $\beta_1\sim \mathrm{Unif}(\{\pm 1\})$, 
    and $\beta_j\sim \mathrm{Unif}(\{\pm c\})$ 
    for $2\leq j\leq C_\sigma$, it is easy to see $\mathrm{sign}(\alpha_j)=\mathrm{sign}(\beta_j)$ for all $j=1,\cdots,q$ hold with probability at least $2^{-q}$, which is sufficient to satisfy (I).

    We then consider (II).
    First focus on the case when $p_*=1$ and $I$ is even.  
    When $\beta_1\sim \mathrm{Unif}(\{\pm 1\})$ 
    and $\beta_j\sim \mathrm{Unif}(\{\pm c\})$ for $2\leq j\leq C_\sigma$, 
    by taking $c$ sufficiently small, we have
    \begin{align}
        H(\sigma^{(I)}(\sigma^{(1)})^{I-1};0)
        = \underbrace{I!\beta_I (\beta_1)^{I-1}}_{\text{$\asymp c$}} +  O(c^2).\label{eq:DesignAct-6}
    \end{align}
    When $I$ is even, by adjusting the sign of $\beta_1$, $H(\sigma^{(I)}(\sigma^{(1)})^{I-1};0) $ is non-zero and has the same sign as $H((\sigma_*)^I;1)$ with probability $\frac12$.
    Note that the sign of $\beta_1$ is independent from whether $\sum_{j=2}^{q} j!\alpha_j\beta_j s^{j-1}>0$ for all $s>0$ holds.
    This holds with probability at least $2^{-q+1}$.  Thus we verified (II) for $p_*=1$ and even $I$.

    For $p_*=1$ and odd $I$, consider $\beta_j\sim \mathrm{Unif}(\{\pm c\})$
    for $1\leq j\leq C_\sigma-2$ and $\beta_{C_\sigma-1}=\beta_{C_\sigma}\sim \mathrm{Unif}(\{\pm 1\})$.
    Note that $\sum_{j=2}^{q} j!\alpha_j\beta_j s^{j-1}>0$ for all $s>0$ (this is the condition for strong recovery) and the condition for ridge regression also holds.
    Furthermore, the sign of $H((\He_{C_\sigma}+\He_{C_\sigma-1})^{(I)}((\He_{C_\sigma}+\He_{C_\sigma-1})^{(1)})^{I-1};0)$ is $\pm 1$ with equiprobability, independent of $\beta_2,\dots,\beta_q$.
    Therefore, by taking $c$ sufficiently small, we can obtain the desired sign of $H(\sigma^{(I)}(\sigma^{(1)})^{I-1};0)$.
    Thus we proved (II) for $p_*=1$ and odd $I$. 

    Regarding (II) for $p_*=2$ and even $I$, 
    when $\beta_1\sim \mathrm{Unif}(\{\pm 1\})$ 
    and $\beta_j\sim \mathrm{Unif}(\{\pm c\})$ for $2\leq j\leq C_\sigma$, we have
    \begin{align}
        H(\sigma^{(I)}(\sigma^{(1)})^{I-1};1)
       =\underbrace{(I+1)!\beta_{I+1} (\beta_1)^{I-1}}_{\text{$\asymp c$}} +  O(c^2).
    \end{align}
    Thus, similar to (II) with $p_*=1$ and even $I$, we get (II) for $p_*=2$ and even $I$. 

    Finally, consider (II) for $p_*=2$ and odd $I$.
    When $\beta_j\sim \mathrm{Unif}(\{\pm c\})$
    for $1\leq j\leq C_\sigma-2$ and $\beta_{C_\sigma-1}=\beta_{C_\sigma}\sim \mathrm{Unif}(\{\pm 1\})$, the sign of $H((\He_{C_\sigma}+\He_{C_\sigma-1})^{(I)}((\He_{C_\sigma}+\He_{C_\sigma-1})^{(1)})^{I-1};1)$ is $\pm 1$ with equiprobability when $I$ is odd, and this term dominates the others in $H(\sigma^{(I)}(\sigma^{(1)})^{I-1};1)$. 
    Thus, (II) for $p_*=2$ and odd $I$ holds similarly to (II) for $p_*=1$ and odd $I$.
    
    Now we have obtained the assertion for all cases.
\end{proof}

\subsubsection{For general link functions}\label{subsubsection:Assumption-GeneralLink}

Now we consider non-polynomial link functions with potentially large generative exponent $p_*=\mathrm{GE}(\sigma_*)\ge 2$.
For weak and strong recovery to succeed, the conditions on the activation function are essentially the same as those for polynomial link functions: 
\begin{itemize}[leftmargin=7mm]
    \item[{\bf (I)}] {\bf If $I=1 \Leftrightarrow \mathrm{IE}(\sigma_*)=\mathrm{GE}(\sigma_*)=p_*$.}
    \begin{itemize}
        \item[]{\bf Weak recovery:} $\alpha_{p_*}\beta_{p_*}>0$.
        \item[]{\bf Strong recovery:} $\sum_{j=p_*}^{\infty} j!\alpha_j\beta_j s^{j-1}>0$ for all $s>0$,
    \end{itemize}
    \item[{\bf (II)}] {\bf If $2\leq I=\{\min\ i \mid \mathrm{IE}(\sigma_*^I)=\mathrm{GE}(\sigma_*)=p_*\}\leq C_\sigma$.}
    \begin{itemize}
        \item[]{\bf Weak recovery:} $H((\sigma_*)^I;p_*)H(\sigma^{(I)}(\sigma^{(1)})^{I-1};p_*-1) >0 $,
        \item[]{\bf Strong recovery:}  $\sum_{j=p_*+1}^{\infty} j!\alpha_j\beta_j s^{j-1}>0$ for all $s>0$.
    \end{itemize}
\end{itemize}
Due to the proof strategy (which uses Taylor expansion), we also require that all differentials and sum of expectations appearing in the following proofs are well-defined and bounded. 

To approximate a non-polynomial $\sigma_*$, we introduce the following condition on the activation function. 
We sample $\sigma_j$ from a discrete set (with bounded cardinality).
Let $J$ be an index set such that the coefficients of $\sigma_j\ (j\in J)$ satisfy the conditions above.
Because we are selecting $\sigma_j$ from a discrete set, $|J| \simeq N$ holds. 
We introduce the following condition, which states that the target single-index model can be well-approximated by a linear combination of student neurons. % (which automatically holds for the polynomial case).
\begin{assumption}\label{assump:Approx-General}
    When $b_j\sim \mathrm{Unif}([-C_b,C_b])$ where  $(C_b = \mathrm{polylog}(d))$ and $\vx_1,\dots,\vx_{T_2}\sim \mathcal{N}(0,\vI_d)$, there exists a set of coefficients $a_1,\dots,a_{|J|}$ such that
    \begin{align}
      \frac{1}{T_2} \sum_{i=1}^{T_2} \bigg(\frac{1}{|J|} \sum_{j\in J}a_j \sigma_j(\vtheta_j^\top \vx_i + b_j) - \sigma_*(\vtheta^\top \vx_i)\bigg)^2 \lesssim \varepsilon^2,
    \end{align}
    holds with coefficients of reasonable magnitudes $\sum_{j\in J}a_j^2 = \Theta(|J|)$ with high probability (w.r.t. the randomness of $b_j$ and $\vx_i$).  
    Moreover, $\mathbb{E}_{\vx}[\sigma_j(\vtheta_j^\top \vx+ b_j)^4] \leq \mathrm{polylog}(d)$ for all $j$ with high probability (w.r.t. the randomness of $b_j$).  
\end{assumption}
The following lemma states that we can design a randomized activation function that satisfies all of the above assumptions with probability $\Omega(1)$, as long as the link function $\sigma$ satisfies Assumption~\ref{assump:Approx-General} for $\sigma = \mathrm{ReLU}$.
In other words, we are able to cover the class of link functions $\sigma$ that can be efficiently approximated by a two-layer ReLU network. 
Since the general link functions are not included in a compact space, we do not have an upper bound of exponent to obtain $\mathrm{IE}(\sigma_*^I)=\mathrm{GE}(\sigma_*)$ as we had $C_q$ in the polynomial case. 
Consequently, our student activation is not entirely agnostic to the link function $\sigma_*$, as we require knowledge of $p$ (information exponent), $p_*$ and $I$. 
\begin{lemma}\label{lemma:DesignActivation-2}
Suppose the target link function $\sigma_*$ satisfies Assumption~\ref{assump:Approx-General} for $\sigma_j = \mathrm{ReLU}$. 
There exists a randomized activation sampled from a discrete set such that the above conditions hold with constant probability. 
\end{lemma}
Before we sketch the design of activation function, we present the following approximation result from \cite{bietti2022learning}, which establishes that Assumption~\ref{assump:Approx-General} with $\sigma_j = \mathrm{ReLU}$ is satisfied for broad class of functions, according to Lemma 4.4 and 4.5 of \cite{bietti2022learning}. 
Specifically, taking $\tau = 1/2$ and $\lambda = N^{-1}$ yields that $\mathbb{E}_{\vx}[(\frac{1}{|J|}\sum_{j\in J}a_j\sigma_j (\vtheta^\top \vx + b_j) - \sigma_*(\vtheta^\top \vx))^2]\leq N^{-\frac27}$.  
Although they sample $b_j$ from Gaussian $\mathcal{N}(0,2)$, the result translates to uniform sampling of biases from $[-C_b,C_b]$ by introducing additional logarithmic factor.
\begin{lemma}[Lemma 4.4, 4.5 of \cite{bietti2022learning}]\label{lemma:ReLU_settles_everything}
    Suppose that $\E_{z\sim\cN(0,1)}[\sigma_*(z)^4]$, $\E_{z\sim\cN(0,1)}[\sigma_*''(z)^4] < \infty$. 
    Then, Assumption~4 with $\sigma_j = \mathrm{ReLU}$ holds with $\varepsilon = N^{-\frac17}$ and $C_b \simeq \sqrt{\log d}$. 
\end{lemma}
\begin{proofof}[Lemma~\ref{lemma:DesignActivation-2}]
    We show the existence of suitable $\sigma$ in two steps: first we construct a randomized polynomial activation function that satisfies conditions $\text{(I)(II)}$ with constant probability; then we add a small ReLU perturbation so that the activation can approximate non-polynomial $\sigma_*$. 
    
    Recall $p\in\N_+$ is the information exponent of $\sigma_*$. 
    We first show that there exists a randomized polynomial activation that satisfies the conditions for weak and strong recovery with probability $\Omega(1)$.
    Note that the issue of differentiability and bounded moment is avoided when we focus on the polynomial activation functions.
    We specify the following two distributions. With probability $\frac12$, let $\beta_1 \sim \mathrm{Unif}(\{-1,1\})$,  $\beta_j \sim \mathrm{Unif}(\{-c,c\})$ for $j=1,\cdots,p_*+I-1$ and $\beta_j=0$ otherwise, where $c>0$ is a sufficiently small constant.
    With probability $\frac12$, let $\beta_1 = \mathrm{Unif}(\{-1,1\})$, $\beta_2 = \mathrm{Unif}(\{-c,c\})$, 
    $\beta_j = \mathrm{Unif}(\{-c^2,c^2\})$ for all $2\leq j \leq (p_*+I)\lor p$ for a sufficiently small constant $c>0$, and $\beta_j = 0$ otherwise.

    Regarding (I), consider the case when the coefficients are sampled from the first distribution, and $|\beta_j| \ll 1$ except for $j=p_*$.
    Then, $\sum_{j=p_*}^{\infty} j!\alpha_j\beta_j s^{j-1}\approx p_*!\alpha_{p_*}\beta_{p_*}s^{p_*}$.
    Choosing the sign of $\beta_{p_*}$, we have that the assumption holds with probability $\Omega(1)$.

    Regarding (II) with even $I$, 
    consider coefficients sampled from the first distribution, and $\mathrm{Sign}(\beta_j) = \mathrm{Sign}(\alpha_j)$ for $j\leq (p_*+I-1)\lor p$.
    Then, $\sum_{j=p_*+1}^{\infty} j!\alpha_j\beta_j s^{j-1}>0$ for all $s>0$.
    Also, similarly to \eqref{eq:DesignAct-6}, 
    \begin{align}
        H(\sigma^{(I)}(\sigma^{(1)})^{I-1};p_*-1)
        = \underbrace{i!\beta_{I+p_*-1} (\beta_1)^{I-1}}_{\text{$\asymp c$}} +  O(c^2).\label{eq:DesignAct-7}
    \end{align}
    By flipping the sign of $\beta_1$, we can change the sign of $H(\sigma^{(I)}(\sigma^{(1)})^{I-1};p_*-1)$.
    Thus, (II) for even $I$ is satisfied by a randomized choice of $\beta_1$.
    
    For (II) with odd $I$, 
    consider coefficients sampled from the second distribution, and $\mathrm{Sign}(\beta_j) = \mathrm{Sign}(\alpha_j)$ for $j\leq (p_*+I)\lor p$.
    Then, $\sum_{j=p_*+1}^{\infty} j!\alpha_j\beta_j s^{j-1}>0$ for all $s>0$.
    \begin{align}
        &H(\sigma^{(I)}(\sigma^{(1)})^{I-1};p_*-1)
        =\frac{1}{(p_*-1)!}\mathbb{E}[\sigma^{(I)}(\sigma^{(1)})^{I-1}\He_{p_*-1}]
      \\ & = \frac{1}{(p_*-1)!}\mathbb{E}[(I-1)(\beta_{p_*+I}\He_{p_*+I})^{(I)}(\beta_{2}\He_{2})^{(1)}(\beta_1)^{I-2}\He_{p_*-1}] + O(c^4).
        \\ & = \underbrace{\frac{2(I-1)\beta_{p_*+I}\beta_2(\beta_1)^{I-2}(p_*+I)!}{(p_*-1)!}}_{\text{$\asymp c^3$}} +  O(c^4).\label{eq:DesignAct-8}
    \end{align}
     By flipping the sign of $\beta_1$, we can change the sign of $H(\sigma^{(I)}(\sigma^{(1)})^{I-1};p_*-1)$.
     Thus, (II) for odd $I$ is satisfied by a randomized choice of $\beta_1$.

     Therefore, we have constructed a randomized polynomial activation $\sigma$ that satisfies all of the conditions for weak and strong recovery. 
     Now we provide a sketch of reasoning that when the link function $\sigma_*$ is well-approximated by ReLU as specified in Assumption~\ref{assump:Approx-General}, we can find some $\sigma$ that additionally satisfies Assumption~\ref{assump:Approx-General} by introducing a small ReLU component. Specifically, we add $c_{\mathrm{R}}\cdot\mathrm{ReLU}$ to the activation function with probability $\frac12$, with a sufficiently small $c_{\mathrm{R}}=\tilde{\Omega}(1)$, e.g., $c_{\mathrm{R}} = (\text{log}d)^{-C}$ for some $C>0$. 
     When a two-layer ReLU network approximates $\sigma_*$ that satisfies Assumption \ref{assump:Approx-General}, by using the neurons with added ReLU component, $\sigma_*$ can be approximated up to some polynomial residual with degree $(p_*+I)\lor p$.
     And by using the remaining polynomial neurons, we can approximate the additional polynomial terms in $\sigma_*$ (see Lemma~\ref{lemma:DamianApproximationPolynomial},\ref{lemma:ApproxPolynomialByHeq}). Subtracting the latter from the former, we obtain the desired approximation result.
     When $c_{\mathrm{R}}$ is sufficiently small, this additional term does not impact the conditions for weak and strong recovery and the moment calculations; similarly, since $c_{\mathrm{R}}\ll 1$ we may discard this non-smooth term before Taylor expansion without affecting the analysis of optimization dynamics. 
     We remark that to avoid such unnatural design of activation function, we can also train the first-layer parameters using a polynomial activation specified above, and then perturb it before the second-layer training to enhance the approximation ability --- such strategy has also been employed in prior layer-wise training analysis \cite{abbe2022merged}. 
\end{proofof}

\subsubsection{More Discussion on Assumption~\ref{assump:student1}}
\label{subsubsection:Moredicsussionon}
Assumption~\ref{assump:student1} requires $H(\sigma^{(I)}(\sigma^{(1)})^{I-1};p_*-1)$ is not zero and has the same sign as $H(\sigma_*^I;p_*)$. 
We remark that if we allow a negative momentum parameter larger than $1$, i.e., setting $\xi^{2(t+1)}=1+c_\xi d^{-\frac{(p_*-2)_+}{2}}$, we can negate the opposite sign of $H(\sigma^{(I)}(\sigma^{(1)})^{I-1};p_*-1)$ (see Lemma~\ref{lemma:PopulationPoly}), and the subsequent analysis still holds.
Therefore, what we essentially need is $H(\sigma^{(i)}(\sigma^{(1)})^{i-1};k) \ne 0$.
Lemma~\ref{lemm:non-zero-assumption} confirms that it is satisfied by almost all polynomials:
\begin{proofof}[Lemma~\ref{lemm:non-zero-assumption}]
    We note that $H(\sigma^{(i)}(\sigma^{(1)})^{i-1};k)=\mathbb{E}[\sigma^{(i)}(\sigma^{(1)})^{i-1}\He_k]$ is a polynomial of $\{\beta_j\}_{j=0}^{C_\sigma}$.
    This polynomial is not identically equal to zero.
    To confirm this, consider $\sigma = x^{C_\sigma}+x^{C_\sigma-1}$.
    Because $\sigma^{(i)}(\sigma^{(1)})^{i-1}$ is expanded as a sum of $x^l (i(C_\sigma-3)\leq l \leq i(C_\sigma-2)+1$ with positive coefficients and each $x^l$ is a sum of $\He_{l},\He_{l-2}\cdots$ with positive coefficients, $\sigma^{(i)}(\sigma^{(1)})^{i-1}$ has all positive Hermite coefficients for degree $0,1,\cdots,i(C_\sigma-2)+1$.
    If $k\leq i(C_\sigma-2)+1$, this choice of $\sigma$ yields $H(\sigma^{(i)}(\sigma^{(1)})^{i-1};k)>0$, which confirms that $H(\sigma^{(i)}(\sigma^{(1)})^{i-1};k)$ as a polynomial of $\{\beta_j\}_{j=0}^{C_\sigma}$ is not identically equal to zero. 
    Hence the assertion follows from so-called Schwartz–Zippel Lemma \cite{schwartz1980fast}, or the fact that zeros of a non-zero polynomial form a measure-zero set.
\end{proofof}

\subsection{Initialization}\label{subsection:Initialization}
We first consider the initial alignment. 
In the following sections, we focus on the neurons that satisfy $\kappa^0_j = \vtheta^\top \vw^0_j \geq 2c_\eta^{-1} d^{-\frac12}$ at the initialization.
The following lemma states that roughly a constant portion of the neurons satisfy the initial alignment condition upon random initialization.
In particular, if we take $c_\eta = \Omega((\log\log d)^{-\frac12})$, 
the fraction of neurons that satisfy the initial alignment condition is at least $ e^{-16c_\eta^{-2}} =\tilde{\Omega}(1)$.
Let us write $C_2 = c_\eta^{-1}$ for simplicity in the following.

\begin{lemma}\label{lemma:Initialization}
    At the time of initialization, $\kappa^0_j=\vtheta^\top \vw^0$ satisfies the following: 
    \begin{align}
        \mathbb{P}[\kappa^0_j \geq 2\CC d^{-\frac12}] =\mathbb{P}[\kappa^0_j \leq - 2\CC d^{-\frac12}]\gtrsim  e^{-16\CC^2} =\tilde{\Omega}(1).
    \end{align}
\end{lemma}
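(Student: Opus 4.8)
The plan is to pass to the Gaussian representation of a uniform point on the sphere. Write $\vw^0_j = \vg/\norm{\vg}$ with $\vg\sim\cN(0,\vI_d)$, so that $\kappa^0_j = \langle\vtheta,\vg\rangle/\norm{\vg}$. Set $g_1 := \langle\vtheta,\vg\rangle\sim\cN(0,1)$ and $S := \norm{\vg}^2 - g_1^2\sim\chi^2_{d-1}$; these two are independent. The symmetry statement $\mathbb{P}[\kappa^0_j\ge 2\CC d^{-1/2}] = \mathbb{P}[\kappa^0_j\le -2\CC d^{-1/2}]$ is then immediate, since $\vg\eqd-\vg$ forces $\kappa^0_j\eqd-\kappa^0_j$ (equivalently, the uniform law on $\mathbb{S}^{d-1}$ is invariant under $\vw\mapsto-\vw$), so it remains only to prove the lower bound on $\mathbb{P}[\kappa^0_j\ge 2\CC d^{-1/2}]$.

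For that lower bound I would intersect the target event with a norm control. Consider the independent events $A := \{g_1\in[2\sqrt2\,\CC,\,2\sqrt2\,\CC+1]\}$ and $B := \{S\le \tfrac32 d\}$. Since $\CC=\mathrm{polylog}(d) = o(\sqrt d)$, for $d$ large we have $g_1^2\le(2\sqrt2\,\CC+1)^2\le\tfrac12 d$ on $A$, hence $\norm{\vg}^2 = g_1^2 + S\le 2d$ on $A\cap B$; and on that event $\kappa^0_j = g_1/\norm{\vg}\ge 2\sqrt2\,\CC/\sqrt{2d} = 2\CC d^{-1/2}$. Standard $\chi^2_{d-1}$ concentration gives $\mathbb{P}[B]\ge 1-e^{-cd}\ge\tfrac12$ for large $d$, so $\mathbb{P}[\kappa^0_j\ge 2\CC d^{-1/2}]\ge\mathbb{P}[A\cap B] = \mathbb{P}[A]\,\mathbb{P}[B]\ge\tfrac12\mathbb{P}[A]$. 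Finally, monotonicity of the Gaussian density on $[2\sqrt2\CC,\,2\sqrt2\CC+1]$ gives $\mathbb{P}[A]\ge\tfrac{1}{\sqrt{2\pi}}e^{-(2\sqrt2\CC+1)^2/2}\ge\tfrac{1}{\sqrt{2\pi}}e^{-16\CC^2}$ once $\CC\ge1$ (as $(2\sqrt2\CC+1)^2/2\le 16\CC^2$ there), whence $\mathbb{P}[\kappa^0_j\ge 2\CC d^{-1/2}]\gtrsim e^{-16\CC^2} = \tilde\Omega(1)$.

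The one place that needs care — the nearest thing to an obstacle — is the interplay between the slowly growing polylog constant $\CC$ and the ambient dimension: one must use $\CC^2 = o(d)$ to guarantee both that $g_1^2$ is negligible against $d$ on $A$ and that the $\chi^2$ concentration error $e^{-\Omega(d)}$ is dominated by the target $e^{-\Theta(\CC^2)}$; beyond this everything is routine. As a fallback (and to obtain a cleaner exponent) one can instead work directly with the exact density of $\kappa^0_j$, which is proportional to $(1-t^2)^{(d-3)/2}$ on $[-1,1]$, and lower bound $\int_{2\CC/\sqrt d}^{3\CC/\sqrt d}(1-t^2)^{(d-3)/2}\dt$ against the normalizer $B(\tfrac12,\tfrac{d-1}{2})\asymp\sqrt{2\pi/d}$; this route requires the Beta-function asymptotics but yields the same $\tilde\Omega(1)$ conclusion with exponent $\Theta(\CC^2)$.
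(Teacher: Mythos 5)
Your proof is correct and follows the same core strategy as the paper's: represent the uniform vector as $\vg/\norm{\vg}$ with $\vg\sim\cN(0,\vI_d)$, intersect the alignment event with a high-probability norm control, and deduce the $e^{-16\CC^2}$ lower bound from a Gaussian estimate, with symmetry handling the negative tail for free. The implementation differs in two minor but real ways. The paper writes the target event as $\{g_1\ge 4\CC\}\cap\{\norm{\vg}\le 2\sqrt d\}$, lower bounds the intersection by the union-bound inequality $\P[A]-\P[B^c]$, and then invokes a cited Gaussian lower-tail estimate (Lemma~\ref{lemma:chang2011chernoff} with $\beta=2$) to get $\P[g_1\ge 4\CC]\gtrsim e^{-16\CC^2}$. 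You instead confine $g_1$ to the bounded window $[2\sqrt2\CC,\,2\sqrt2\CC+1]$ so that the density lower bound $\P[A]\ge\tfrac{1}{\sqrt{2\pi}}e^{-(2\sqrt2\CC+1)^2/2}$ is elementary (no tail lemma needed), and you exploit the independence of $g_1$ and $S=\norm{\vg}^2-g_1^2$ to get a clean product $\P[A]\P[B]$ rather than a union bound. Both routes require $\CC^2=o(d)$ so that the $\chi^2$ error is negligible, and both land on the same exponent; your version is a bit more self-contained, while the paper's is a bit shorter by citing a sharp tail bound. Your fallback via the exact Beta density of $\kappa^0_j$ is also a valid alternative, though it is not what the paper does.
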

We make use of the following lemma.
\begin{lemma}[Theorem 2 of \citep{chang2011chernoff}]\label{lemma:chang2011chernoff}
    For any $\beta>1$ and $s\in \mathbb{R}$, we have
    \begin{align}
        \frac{\sqrt{2e(\beta-1)}}{2\beta\sqrt{\pi}}
        e^{-\frac{\beta s^2}{2}}
    \leq
        \int_{s}^\infty \frac{1}{\sqrt{2\pi}}e^{-\frac{t^2}{2}}\mathrm{d}t
    \end{align}
\end{lemma}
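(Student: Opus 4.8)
The plan is to recast the inequality as a one-variable minimization and exploit stationarity. Write $Q(s)=\int_s^\infty\tfrac{1}{\sqrt{2\pi}}e^{-t^2/2}\,dt$ and $\phi(s)=\tfrac{1}{\sqrt{2\pi}}e^{-s^2/2}$, and set $\psi(s):=Q(s)\,e^{\beta s^2/2}$; the claim is exactly $\psi(s)\ge c(\beta):=\tfrac{\sqrt{2e(\beta-1)}}{2\beta\sqrt{\pi}}$ for every $s\in\R$. First I would dispose of $s\le 0$: there $Q(s)\ge Q(0)=\tfrac12$ and $e^{\beta s^2/2}\ge 1$, so $\psi(s)\ge\tfrac12$, and $c(\beta)<\tfrac12$ reduces to $2e(\beta-1)<\pi\beta^2$, i.e. $\pi\beta^2-2e\beta+2e>0$, which holds because the discriminant $4e(e-2\pi)$ of this upward-opening quadratic is negative.

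For $s>0$ the key observation is that $\psi$ attains its infimum over $[0,\infty)$, and if that infimum is interior then stationarity pins $\psi$ down there. The elementary tail bound $Q(s)\ge\int_s^{s+1}\phi(t)\,dt\ge\phi(s+1)$ (valid since $\phi$ is decreasing on $[0,\infty)$) gives $\psi(s)\ge\tfrac{1}{\sqrt{2\pi}}\,e^{((\beta-1)s^2-2s-1)/2}\to\infty$ as $s\to\infty$, where $\beta>1$ is essential; together with continuity and $\psi(0)=\tfrac12$ this forces the minimum of $\psi$ on $[0,\infty)$ to be attained at some $s_0$. If $s_0=0$ we are done since then $\psi\ge\tfrac12>c(\beta)$. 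Otherwise $s_0>0$ is an interior minimizer, so $\psi'(s_0)=0$; since $\psi'(s)=e^{\beta s^2/2}\bigl(\beta s\,Q(s)-\phi(s)\bigr)$, this yields $Q(s_0)=\phi(s_0)/(\beta s_0)$, hence
\[
\psi(s_0)=\frac{\phi(s_0)}{\beta s_0}\,e^{\beta s_0^2/2}=\frac{1}{\beta s_0\sqrt{2\pi}}\,e^{(\beta-1)s_0^2/2}.
\]

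Finally I would substitute $v:=(\beta-1)s_0^2>0$, giving $\psi(s_0)=\tfrac{\sqrt{\beta-1}}{\beta\sqrt{2\pi}}\cdot\tfrac{e^{v/2}}{\sqrt v}$. The function $v\mapsto e^{v/2}/\sqrt v$ has logarithmic derivative $\tfrac12-\tfrac{1}{2v}$, so it is minimized at $v=1$ with value $\sqrt e$; therefore
\[
\psi(s_0)\ge\frac{\sqrt{e(\beta-1)}}{\beta\sqrt{2\pi}}=\frac{\sqrt{2e(\beta-1)}}{2\beta\sqrt{\pi}}=c(\beta),
\]
which completes the proof (and incidentally shows the constant is what one gets by forcing $v=1$). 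I do not expect a genuine obstacle here; the only points needing care are confirming that the infimum of $\psi$ is actually attained — so that the stationarity identity may be invoked — and the short bookkeeping that $c(\beta)<\tfrac12$, both routine. An alternative route, lower-bounding $Q(s)\ge\int_s^{s+\delta}\phi$ and optimizing $\delta=\delta(s)$ directly, also works but requires splitting into $\beta\ge2$ versus $1<\beta<2$ and is messier, so I would favor the stationary-point argument above.
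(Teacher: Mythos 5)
Your proof is correct. Note, however, that the paper does not prove this statement at all: it is imported verbatim as Theorem~2 of the cited reference \cite{chang2011chernoff} (a Chernoff-type lower bound on the Gaussian tail), so there is no internal proof to compare against. Your argument is a clean, self-contained derivation of that external result: setting $\psi(s)=Q(s)e^{\beta s^2/2}$, disposing of $s\le 0$ via $Q(s)\ge\tfrac12$ together with the check $2e(\beta-1)<\pi\beta^2$ (discriminant $4e(e-2\pi)<0$, so indeed $c(\beta)<\tfrac12$), establishing coercivity of $\psi$ on $[0,\infty)$ from $Q(s)\ge\phi(s+1)$ with $\beta>1$, and then using stationarity $\beta s_0 Q(s_0)=\phi(s_0)$ at an interior minimizer to get $\psi(s_0)=\tfrac{1}{\beta s_0\sqrt{2\pi}}e^{(\beta-1)s_0^2/2}$, which after the substitution $v=(\beta-1)s_0^2$ is minimized at $v=1$ and gives exactly $\tfrac{\sqrt{e(\beta-1)}}{\beta\sqrt{2\pi}}=\tfrac{\sqrt{2e(\beta-1)}}{2\beta\sqrt{\pi}}$. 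All the delicate points (attainment of the infimum so that $\psi'(s_0)=0$ may be invoked, the boundary case $s_0=0$, and the constant bookkeeping) are handled, so the lemma as stated for all $s\in\R$ follows; this matches, and slightly extends to negative $s$, the cited theorem.
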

\begin{proofof}[Lemma~\ref{lemma:Initialization}]
    Because $\kappa^0 = \vv^\top \vw  \overset{\mathrm{d}}{=}\frac{{\ve}_1^\top \vg}{\|\vg\|}$, where $\vg \sim \mathcal{N}(0,\vI_d)$, 
    \begin{align}
        \mathbb{P}[\kappa^0_j \geq 2\CC d^{-\frac12}]
      &  = \mathbb{P}_{\vg\sim \mathcal{N}(0, \vI_d)}\bigg[{\ve}_1^\top \vg \geq 4\CC \land \|\vg\|\leq 2d^{\frac12}\bigg]
        \\ &
        \geq \mathbb{P}_{\vg\sim \mathcal{N}(0, \vI_d)}\bigg[{\ve}_1^\top \vg \geq 4\CC\bigg]
        -\mathbb{P}_{\vg\sim \mathcal{N}(0, \vI_d)}\bigg[\|\vg\|\geq 2d^{\frac12}\bigg]
        \\ & \gtrsim 
        \frac{\sqrt{2e(\beta-1)}}{2\beta\sqrt{\pi}}
        e^{-8\beta \CC^2}-e^{-\Omega(d)},
        \label{eq:Initialization-1}
    \end{align}
    where we used Lemma~\ref{lemma:chang2011chernoff} for the final inequality. 
    By letting $\beta=2$, we have that
 $
        \mathbb{P}[\kappa^0_j \geq \CC d^{-\frac12}] \gtrsim e^{-16\CC^2}.
    $
    Because of the symmetry, $\mathbb{P}[\kappa^0_j \leq  2\CC d^{-\frac12}]=\mathbb{P}[\kappa^0_j \geq 2\CC d^{-\frac12}]$.
\end{proofof}

\subsection{Weak Recovery: Population Update}\label{subsection:Expected}
We divide the first layer training into the first phase (weak recovery) and the second phase (strong recovery). 
We first evaluate the expected update of two gradient steps with the same training example.

\begin{lemma}\label{lemma:PopulationPoly}
    Let $\eta^{2t},\eta^{2t+1}= \eta = c_\eta d^{-1}$, $\xi^{2(t+1)}=\xi = 1-c_{\xi}d^{-\frac{(p_*-2)_+}{2}}$.
    Suppose that the link function satisfies $\mathrm{IE}(\sigma_*^I)=\mathrm{GE}(\sigma_*)=p_*$ (we choose the smallest such $I$) and activation functions satisfy all of the assumptions in Section~\ref{subsection:linkfunction} for weak recovery.  
    Then, for $\vw^{2t}$ with $c_\eta^{-1}d^{-\frac12}\leq \vtheta^\top \vw^{2t}\leq c_\eta^I$, the alignment $\vtheta^\top \vw^{2(t+1)}$ can be evaluated as,
    \begin{align}
      \vtheta^\top \vw^{2(t+1)} \geq \vtheta^\top \vw^{2t} + c_\eta^I  c_{\xi}c_\sigma d^{-\frac{p_*}{2}\lor 1}(\kappa^{2t})^{p_*-1}
        + c_\eta  c_{\xi} d^{-\frac{p_*}{2}\lor 1}\nu^{2t}.
    \end{align}
    Here $c_\sigma = p_*!\alpha_{p_*}\beta_{p_*}$ (when $\mathrm{IE}(\sigma_*)=\mathrm{GE}(\sigma_*)$) or $c_\sigma =\frac{p_*!H(\sigma_*^I;p_*)
    H(\sigma^{(I)}(\sigma^{(1)})^{I-1};p_*-1)}{2(I-1)!}$ (otherwise), and 
    $\nu^{2t}$ is a mean-zero sub-exponential random variable.
\end{lemma}

\begin{proof}
The expected alignment $\vtheta^\top \vw^{2(t+1)}$ after two gradient steps from $\vw^{2t}=\vomega$ using the same sample $(\vx,y)$, step size $\eta^{2t}=\eta^{2t+1}=\eta = c_\eta d^{-1}$ and momentum parameter $\xi^{2(t+1)}=\xi=1-c_{\xi}d^{-\frac{(p_*-2)_+}{2}}$ is evaluated as follows.
With a projection matrix $\vP_{\vomega}=\vI-\vomega\vomega^\top$, 
    the first step updates the weight as
    \begin{align}\label{eq:ExpectedUpdate-1-1}
        \vw^{2t+1} \leftarrow \vw^{2t}+\eta \tilde{\nabla}_{\vw} y\sigma({\vw^{2t}}^\top \vx)
        =\vomega+\eta y\sigma'(\vomega^\top \vx)\vP_{\vomega}\vx
        ,
    \end{align}
    and the next gradient step with the same sample is computed as
    \begin{align}
      \tilde{\nabla}_{\vw} y\sigma({\vw^{2t+1}}^\top \vx)
        &= y\sigma'({\vw^{2t+1}}^\top \vx)\vx
        \\ & = y\sigma'\big((\vomega+ \eta y\sigma'(\vomega^\top \vx)\vP_{\vomega}\vx)^\top \vx\big)\vP_{\vomega}\vx
         \\ & =   
         y\sigma'\big(\vomega^\top \vx + \eta\|\vx\|_{\vP_{\vomega}}^2\sigma'(\vomega^\top \vx)y\big)\vP_{\vomega}\vx,
         \label{eq:ExpectedUpdate-1-2}
    \end{align}
    here we used the notation $\|\vtheta\|_{A}^2=\vtheta^\top A \vtheta$ for a vector $\vtheta\in \R^d$ and a positive symmetric matric $A\in \R^{d\times d}$.
    From \eqref{eq:ExpectedUpdate-1-1} and \eqref{eq:ExpectedUpdate-1-2}, the parameter after the two steps is obtained as
    \begin{align}
        \vw^{2(t+1)} &\leftarrow \vw^{2t+1}+ \eta \tilde{\nabla}_{\vw} y\sigma({\vw^{2t+1}}^\top \vx)
        \\ & = \vomega +\eta y\sigma'(\vomega^\top \vx)\vP_{\vomega} \vx
         + \eta y\sigma'\big(\vomega^\top \vx + \eta \|\vx\|_{\vP_{\vomega}}^2
         \sigma'(\vomega^\top \vx)y\big)\vP_{\vomega}\vx.
         \\ & = \vomega +\eta \vg^{2t},
        \label{eq:ExpectedUpdate-1-3}
    \end{align}
    where
    \begin{align}
    \vg^{2t} = y\sigma'(\vomega^\top \vx)\vP_{\vomega} \vx
         + y\sigma'\big(\vomega^\top \vx + \eta\|\vx\|_{\vP_{\vomega}}^2
         \sigma'(\vomega^\top \vx)y\big)\vP_{\vomega}\vx
        \label{eq:ExpectedUpdate-1-16}.
    \end{align}
    
    Finally, the normalization step yields
    \begin{align}
        \vw^{2(t+1)} &\leftarrow \frac{\vw^{2(t+1)} - \xi^{2(t+1)} (\vw^{2(t+1)}-\vw^{2t}) }{\|\vw^{2(t+1)} - \xi^{2(t+1)}  (\vw^{2(t+1)}-\vw^{2t})\|}
        = \frac{\vomega + \eta\xi \vg^{2t} }{\|\vomega +  \eta\xi \vg^{2t}\|}
        = \frac{\vomega + c_\eta c_{\xi}d^{-\frac{p_*}{2}\lor 1}\vg^{2t} }{\|\vomega +  c_\eta c_{\xi}d^{-\frac{p_*}{2}\lor 1}\vg^{2t}\|}.
         \label{eq:ExpectedUpdate-1-14}
    \end{align}
    Therefore, by writing $\vtheta^\top \vw^{2t}=\kappa^{2t}$, the update of the alignment is 
    \begin{align}
     & \hspace{-7mm} \kappa^{2(t+1)}=\vtheta^\top \vw^{2(t+1)} 
     \\ &\hspace{-7mm} = \frac{\kappa^{2t} +  c_\eta c_{\xi}d^{-\frac{p_*}{2}\lor 1} \vtheta^\top \vg^{2t} }{\|\vomega + c_\eta  c_{\xi}d^{-\frac{p_*}{2}\lor 1}\vg^{2t}\|}
      \\ & \hspace{-7mm} \geq \kappa^{2t} + c_\eta  c_{\xi} d^{-\frac{p_*}{2}\lor 1}\vtheta^\top \vg^{2t}
       -\frac{1}{2}\kappa^{2t} c_\eta^2 c_{\xi}^2d^{-p_*\lor 2}\|\vg^{2t}\|^2
       -\frac{1}{2} c_\eta^3 c_{\xi}^3 d^{-\frac{3p_*}{2}\lor 3}|\vtheta^\top \vg^{2t}|\|\vg^{2t}\|^2
       .
         \label{eq:ExpectedUpdate-1-15}
    \end{align}
    We can easily see that $\mathbb{E}[\|\vg^{2t}\|^2]\lesssim d$ and $\mathbb{E}[|\vtheta^\top \vg^{2t}|\|\vg^{2t}\|^2]\lesssim d$, which implies that the expectation of the last two terms of \eqref{eq:ExpectedUpdate-1-15} is bounded by
    $\lesssim \kappa^{2t} c_\eta^2 c_{\xi}^2d^{-(p_*-1)\lor 1} \lor c_\eta^3 c_{\xi}^3 d^{-(\frac{3p_*}{2}-1)\lor 2}
    \leq  c_\eta^2 c_{\xi}^2d^{-(p_*-1)\lor 1}\kappa^{2t}$. 

Now we bound $\mathbb{E}[\vtheta^\top \vg^{2t}]$ by $\gtrsim c_\eta^{I-1}\kappa^{p_*-1}$.
Let $C_\sigma$ be the maximum degree of the activation function with non-zero coefficients of Hermite expansion, which may be infinity when we consider general link functions, and there appear some infinite sums. 
For these cases we simply assume the sums converge -- we discuss the validity of this condition in Section~\ref{subsubsection:Assumption-GeneralLink}. 
We omit the subscript $2t$ in the following for simplicity.
We divide the analysis into the two cases. 

\paragraph{(I) If $I=1 \Leftrightarrow \mathrm{IE}(\sigma_*)=\mathrm{GE}(\sigma_*)=p_*$.} 
For the first term of $\mathbb{E}[\vtheta^\top \vg]$, we have
    \begin{align}
        \vtheta^\top \mathbb{E}[ y\sigma'(\vomega^\top \vx)\vP_{\vomega} \vx]
&=\vtheta^\top \vP_{\vomega}\mathbb{E}\bigg[\bigg(\sum_{j=p_*}^\infty \alpha_j \He_j(\vtheta^\top \vx)\bigg)\bigg(\sum_{j=1}^{C_\sigma} j\beta_j \He_{j-1}\big(\vomega^\top \vx\big)\bigg)\vx\bigg]
       \\ & = \vtheta^\top \vP_{\vomega}\sum_{j=p_*}^\infty\bigg[ j! \alpha_j \beta_j \big(\vtheta^\top \vomega\big)^{j-1}\vtheta+ (j+2)! \alpha_j \beta_{j+2} \big(\vtheta^\top \vomega\big)^{j}\vomega\bigg]
       \\ & = \sum_{j=p_*}^{C_\sigma} j! \alpha_j \beta_j \big(\vtheta^\top \vomega\big)^{j-1}\vtheta^\top \vP_{\vomega}\vtheta
     \\ & = p_*!\alpha_{p_*}\beta_{p_*}\kappa^{p_*-1} + O(\kappa^{p_*}).
     \label{lemma:TaylorExpansion-1003}
    \end{align}
   For the second term of $\mathbb{E}[\vtheta^\top \vg]$, the following decomposition can be made.
    \begin{align}
      & \vtheta^\top\mathbb{E}[y\sigma'\big(\vomega^\top \vx + \eta\|\vx\|_{\vP_{\vomega}}^2\sigma'(\vomega^\top \vx)y\big)\vP_{\vomega}\vx]
      \\ & =\sum_{i=1}^{C_\sigma-1}(i!)^{-1}\vtheta^\top \vP_{\vomega}\mathbb{E}\bigg[y\sigma^{(i+1)}\big(\vomega^\top \vx\big)\big(\eta \|\vx\|_{\vP_{\vomega}}^2y\sigma^{(1)}(\vomega^\top \vx)\big)^{i}\vx\bigg]
        +  \vtheta^\top \mathbb{E}[ y\sigma'(\vomega^\top \vx)\vP_{\vomega} \vx]
      \\ & = 
      \sum_{i=1}^{C_\sigma-1}(i!)^{-1}\eta^{i}\vtheta^\top \vP_{\vomega}\mathbb{E}\bigg[\|\vx\|_{\vP_{\vomega}}^{2i}y^{i+1}\sigma^{(i+1)}\big(\vomega^\top \vx\big)\big(\sigma^{(1)}(\vomega^\top \vx)\big)^{i}\vx\bigg] \label{lemma:TaylorExpansion-43}
      \\
      &\quad + p_*!\alpha_{p_*}\beta_{p_*}\kappa^{p_*-1} + O(\kappa^{p_*})
      .\label{lemma:TaylorExpansion-100044}
    \end{align}
 We evaluate each term in the summation. 
We need to show that although $\|\vx\|_{\vP_{\vomega}}^2$ is a function of $\vtheta^\top \vx$ and $\vomega^\top\vx$, it is mostly independent from the two quantities.
To verify this, let $\ve =\frac{\vtheta-(\vtheta^\top \vomega)\vomega}{\|\vtheta-(\vtheta^\top \vomega)\vomega\|}$ be the orthogonal component of $\vtheta$ to $\vomega$.
Then, we have that
   \begin{align}
       \|\vx\|_{\vP_{\vomega}}^2 &= \vx^\top (I-\vomega\vomega^\top - \ve\ve^\top) \vx
       +(\ve^\top \vx)^2
       \\ & =\underbrace{\vx^\top (I-\vomega\vomega^\top - \ve\ve^\top) \vx}_{\sim \chi_{d-2}^2,\text{ independent from $\vomega^\top \vx$ and $\vtheta^\top \vx$}}
       +\bigg(\frac{\vtheta^\top \vx-(\vtheta^\top \vomega)\vomega^\top \vx}{\|\vtheta-(\vtheta^\top \vomega)\vomega\|}\bigg)^2.
       \label{eq:Orthogonal}
   \end{align}
   With $P_{\vomega,\vtheta}=\vI_d-\vomega\vomega^\top -\ve\ve^\top$, \eqref{lemma:TaylorExpansion-43} is expanded as
   \begin{align}
    \eqref{lemma:TaylorExpansion-43}
    &= \sum_{i=1}^{C_\sigma-1}\sum_{j=0}^{i}\sum_{l=0}^{i+1}\frac{{i \choose j}{i+1 \choose l}\eta^{i}}{i!\|\vtheta-(\vtheta^\top \vomega)\vomega\|^{2j}}   
    \\&\quad\quad\vtheta^\top \vP_{\vomega}\mathbb{E}\bigg[
       (\vx^\top P_{\vomega,\vtheta}\vx)^{i-j}\varsigma^{i+1-l}(\sigma_*(\vtheta^\top \vx))^l
     (\vtheta^\top \vx-(\vtheta^\top \vomega)\vomega^\top \vx)^{2j}\sigma^{(i+1)}\big(\vomega^\top \vx\big)\big(\sigma^{(1)}(\vomega^\top \vx)\big)^{i}\vx\bigg]
   \\  &  =\sum_{i=1}^{C_\sigma-1}\sum_{j=0}^{i}\sum_{l=0}^{i+1}\sum_{k=0}^{2j}\frac{{i \choose j}{i+1 \choose l}{2j\choose k}\eta^{i}\kappa^k(-1)^k\mathbb{E}[\varsigma^{i+1-l}]\mathbb{E}_{z\sim \chi^2_{d-2}}[z^{i-j}]}{i!\|\vtheta-(\vtheta^\top \vomega)\vomega\|^{2j}}
        \\&\quad\quad\mathbb{E}\bigg[(\sigma_*(\vtheta^\top \vx))^l
        (\vtheta^\top \vx)^{2j-k}
       (\vomega^\top \vx)^k\sigma^{(i+1)}\big(\vomega^\top \vx\big)\big(\sigma^{(1)}(\vomega^\top \vx)\big)^{i}(\vtheta^\top\vx-\vtheta^\top\vomega \vomega^\top \vx)\bigg]
       \label{lemma:TaylorExpansion-2004}
   \end{align}

   For a general differentiable function $g(\vx)$, we have
   $\mathbb{E}[\He_t (x_1)g(\vx)]=\mathbb{E}[\frac{\mathrm{d}^t}{\mathrm{d}x_1^t}g(\vx)]$.
   If $g(\vx)$ is a polynomial (with a bounded coefficients) of $x_1$ and $\vu^\top \vx$ and its degree with respect to $x_1$ is at most $s (\leq t)$, $|\mathbb{E}[\He_t (x_1)g(\vx)]| \lesssim |u_1|^{t-s}$, because differentiation of $g(\vx)=\bar{g}(x_1,\vu^\top \vx)$ is taken with respect to the first variable at most $s$ times.
   Each term of \eqref{lemma:TaylorExpansion-2004} is an expectation of $(\sigma_*(\vtheta^\top \vx))^l$, multiplied by the polynomial of $\vtheta^\top \vx$ and $\vomega^\top \vx$, where its degree with respect to $\vtheta^\top \vx$ is at most $2j-k$.
   Thus each term of \eqref{lemma:TaylorExpansion-2004} is evaluated as (here we omit the constants)
   \begin{align}
    &  \frac{\eta^i\kappa^k(-1)^k\mathbb{E}_{z\sim \chi^2_{d-2}}[z^{i-j}]}{i!\|\vtheta-(\vtheta^\top \vomega)\vomega\|^{2j}} \mathbb{E}\bigg[ \underbrace{ (\sigma_*(\vtheta^\top \vx))^l}_{\mathrm{IE}\geq p_*}
        \underbrace{ (\vtheta^\top \vx)^{2j-k}
       (\vomega^\top \vx)^k\sigma^{(i+1)}\big(\vomega^\top \vx\big)\big(\sigma^{(1)}(\vomega^\top \vx)\big)^{i}(\vtheta^\top\vx-\vtheta^\top\vomega \vomega^\top \vx)}_{\text{degree w.r.t. $\vtheta^\top \vx$ is at most $2j-k+1$}}\bigg]
      \\ & \lesssim c_\eta^i d^{-i}d^{i-j}\kappa^k \kappa^{((p_*-2j+k-1)\lor 0)}
      \lesssim c_\eta d^{-j}\kappa^{((p_*-2j-1)\lor 0)}
      \leq c_\eta \kappa^{p_*-1} (d/\kappa^2)^{-j} \leq c_\eta \kappa^{p_*-1}. 
   \end{align}
   The lower bound follows in the same fashion.
   Therefore, 
   \begin{align}
       |\eqref{lemma:TaylorExpansion-2004}| \lesssim c_\eta \kappa^{p_*-1}.
   \end{align}
   Now, $ \mathbb{E}[\vtheta^\top\vg]$ can be evaluated as
   \begin{align}
 \mathbb{E}[\vtheta^\top\vg]= \eqref{lemma:TaylorExpansion-1003}+\eqref{lemma:TaylorExpansion-43}+\eqref{lemma:TaylorExpansion-100044}
       =2p_*!\alpha_{p_*}\beta_{p_*}\kappa^{p_*-1}   + O(c_\eta \kappa^{p_*-1}+\kappa^{p_*}).
   \end{align}
   
\paragraph{(II) If $I=\{\min\ i\mid \mathrm{IE}(\sigma_*^i)=\mathrm{GE}(\sigma_*)=p_*\}\geq 2$.} 
 Note that $\alpha_j=0$ for all $j\leq p_*$ from the assumption. 
 Following \eqref{lemma:TaylorExpansion-1003}, the first term of $\mathbb{E}[\vtheta^\top \vg]$ is evaluated as
    \begin{align}
        \vtheta^\top \mathbb{E}[ y\sigma'(\vomega^\top \vx)\vP_{\vomega} \vx]
        = \sum_{j=p}^{C_\sigma} j! \alpha_j \beta_j \big(\vtheta^\top \vomega\big)^{j-1}\vtheta^\top \vP_{\vomega}\vtheta =
      O(\kappa^{p_*}).
        \label{lemma:TaylorExpansion-21}
    \end{align}
  For the second term of  $\mathbb{E}[\vtheta^\top \vg]$, similarly to \eqref{lemma:TaylorExpansion-2004},
  the following decomposition can be made.
    \begin{align}
     & \vtheta^\top\mathbb{E}[y\sigma'\big(\vomega^\top \vx + \eta \|\vx\|_{\vP_{\vomega}}^2\sigma'(\vomega^\top \vx)y\big)\vP_{\vomega}\vx]
      \\ &=   \sum_{i=0}^{C_\sigma-1}\sum_{j=0}^{i}\sum_{l=0}^{i+1}\sum_{k=0}^{2j}\frac{{i \choose j}{i+1 \choose l}{2j\choose k}\eta^{i}\kappa^k(-1)^k\mathbb{E}[\varsigma^{i+1-l}]\mathbb{E}_{z\sim \chi^2_{d-2}}[z^{i-j}]}{i!\|\vtheta-(\vtheta^\top \vomega)\vomega\|^{2j}}
        \\&\quad\quad\mathbb{E}\bigg[(\sigma_*(\vtheta^\top \vx))^l
        (\vtheta^\top \vx)^{2j-k}
       (\vomega^\top \vx)^k\sigma^{(i+1)}\big(\vomega^\top \vx\big)\big(\sigma^{(1)}(\vomega^\top \vx)\big)^{i}(\vtheta^\top\vx-\vtheta^\top\vomega \vomega^\top \vx)\bigg].
             \label{lemma:TaylorExpansion-23}
    \end{align}
   Each term of \eqref{lemma:TaylorExpansion-23} (omitting constants) is evaluated as 
      \begin{align}
     &  \eta^i \kappa^k \mathbb{E}_{z\sim \chi^2_{d-2}}[z^{i-j}]\mathbb{E}\bigg[(\sigma_*(\vtheta^\top \vx))^l
        (\vtheta^\top \vx)^{2j-k}
       (\vomega^\top \vx)^k\sigma^{(i+1)}\big(\vomega^\top \vx\big)\big(\sigma^{(1)}(\vomega^\top \vx)\big)^{i}(\vtheta^\top\vx-\vtheta^\top\vomega \vomega^\top \vx)\bigg]
       \label{lemma:TaylorExpansion-26}
            \\ & =c_\eta^i \kappa^k d^{-j}
    \mathbb{E}\bigg[
      \underbrace{ (\sigma_*(\vtheta^\top \vx))^l}_{\mathrm{IE}\geq \begin{cases}
          p_*\ (l\geq I)\\  p_*+1\ (l< I)
      \end{cases}}
      \underbrace{ (\vtheta^\top \vx)^{2j-k} (\vomega^\top \vx)^k\sigma^{(i+1)}\big(\vomega^\top \vx\big)\big(\sigma^{(1)}(\vomega^\top \vx)\big)^{i}(\vtheta^\top\vx-\vtheta^\top\vomega \vomega^\top \vx)}_{\text{degree w.r.t. $\vtheta^\top \vx$ is at most $2j-k+1$}}\bigg]
      \\ & \lesssim c_\eta^i \kappa^k d^{-j}\kappa^{(\mathrm{IE}(\sigma_*^l)-2j+k-1)\lor 0}
     \label{lemma:TaylorExpansion-31}
   \end{align}
    When $i\leq I-2$ and $\eta= c_\eta d^{-1}$, we have $l\leq i+1< I$ and $\mathrm{IE}((\sigma_*(\vtheta^\top \vx))^l)\geq p_*+1$. Thus
   \begin{align}
       \eqref{lemma:TaylorExpansion-31} \lesssim 
      \kappa^{p_*}
       \label{lemma:TaylorExpansion-28}
   \end{align}
   When $i\geq I$, $\mathrm{IE}((\sigma_*(\vtheta^\top \vx))^l)\geq p_*$ and we get
   \begin{align}
    \eqref{lemma:TaylorExpansion-31}
      \lesssim c_\eta^I \kappa^{p_*-1}.
    \label{lemma:TaylorExpansion-29}
   \end{align} 
   Now the case of $i=I-1$. 
    When $i=I-1$ and $j \ne 0$, and using the assumption that $\kappa \leq c_\eta$, 
     \begin{align}
    \eqref{lemma:TaylorExpansion-31}
      \lesssim c_\eta^{I-1} \kappa^{p_*-1}(\kappa^{-2}/d) \leq c_\eta^{I} \kappa^{p_*-1}. 
    \label{lemma:TaylorExpansion-130}
   \end{align} 
     When $i=I-1$, $j = 0$, and $k\ne 0$, 
     \begin{align}
    \eqref{lemma:TaylorExpansion-31}
      \lesssim c_\eta^{I-1} \kappa^{p_*}.
    \label{lemma:TaylorExpansion-131}
   \end{align}   
When $i=I-1$, $j = 0$, $k= 0$, and $l\leq I-1$, 
     \begin{align}
    \eqref{lemma:TaylorExpansion-31}
      \lesssim c_\eta^{I-1} \kappa^{p_*}.
    \label{lemma:TaylorExpansion-132}
   \end{align}  
   Therefore, except for $i=I-1$, $j = 0$, $k= 0$, and $l\leq I-1$, we can bound \eqref{lemma:TaylorExpansion-31} by $\lesssim c_\eta^{I} \kappa^{p_*-1} + \kappa^{p_*}$. The lower bound follows in the same way. 
   Finally, consider the case of $i=I-1$, $j=0$, $k=0$, and $l=I$. 
   \begin{align}
       \eqref{lemma:TaylorExpansion-26}
      & =
      \eta^{I-1} \mathbb{E}_{z\sim \chi^2_{d-2}}[z^{I-1}]
      \mathbb{E}\bigg[(\sigma_*(\vtheta^\top \vx))^I\sigma^{(I+1)}\big(\vomega^\top \vx\big)\big(\sigma^{(1)}(\vomega^\top \vx)\big)^{I-1}(\vtheta^\top\vx-\vtheta^\top\vomega \vomega^\top \vx)\bigg]
     \\ & =  
     \eta^{I-1} \mathbb{E}_{z\sim \chi^2_{d-2}}[z^{I-1}]
    \sum_{m=p_*}^{C_\sigma I}
     m!  H(\sigma_*^I;m)
       H(\sigma^{(I)}(\sigma^{(1)})^{I-1};m-1)(1-\kappa^2)\kappa^{m-1}
        \\ & = c_\eta^{I-1} p_*! d^{-(I-1)}\mathbb{E}_{z\sim \chi^2_{d-2}}[z^{I-1}] H(\sigma_*^I;p_*)
       H(\sigma^{(I)}(\sigma^{(1)})^{I-1};p_*-1)(1-\kappa^2)\kappa^{p_*-1}  +O( c_\eta^{I-1} \kappa^{p_*}).
   \end{align}
    Putting it all together (recovering the constants omitted in \eqref{lemma:TaylorExpansion-26} again), 
    \begin{align}
      &  \eqref{lemma:TaylorExpansion-23}
      \\ &=c_\eta^{I-1}\frac{p_*! d^{-(I-1)}\mathbb{E}_{z\sim \chi^2_{d-2}}[z^{I-1}] }{(I-1)!}H(\sigma_*^I;p_*)
       H(\sigma^{(I)}(\sigma^{(1)})^{I-1};p_*-1)\kappa^{p_*-1} 
       +O(c_\eta^{I} \kappa^{p_*-1} +\kappa^{p_*}),
    \end{align}
    and 
     \begin{align}
      & \mathbb{E}[\vtheta^\top \vg] =  \eqref{lemma:TaylorExpansion-21} + \eqref{lemma:TaylorExpansion-23}
      \\ &=c_\eta^{I-1}\underbrace{\frac{p_*!d^{-(I-1)}\mathbb{E}_{z\sim \chi^2_{d-2}}[z^{I-1}] }{(I-1)!}}_{=\Theta(1)}H(\sigma_*^I;p_*)
       H(\sigma^{(I)}(\sigma^{(1)})^{I-1};p_*-1)\kappa^{p_*-1} 
       +O(c_\eta^{I} \kappa^{p_*-1} +\kappa^{p_*}).
    \end{align}

    Combining (i) and (ii), we have 
    \begin{align}
        \mathbb{E}[\vtheta^\top \vg] \geq 2c_\eta^{I-1} c_\sigma \kappa^{p_*-1}+O(c_\eta^{I} \kappa^{p_*-1} +\kappa^{p_*})
    \end{align}
    for a positive constant $c_\sigma = \Theta(1)$. 
    Here $c_\sigma>0$ satisfies $2c_\sigma = 2p_*!\alpha_{p_*}\beta_{p_*}$ (for (i)) or $2c_\sigma =\frac{p_*!H(\sigma_*^I;p_*)
       H(\sigma^{(I)}(\sigma^{(1)})^{I-1};p_*-1)}{(I-1)!}$ (for (ii)). 
    Going back to \eqref{eq:ExpectedUpdate-1-15}, by setting $\nu^{2t}=(\vtheta^\top \vg^{2t}-\mathbb{E}[\vtheta^\top \vg^{2t}])$, we have
    \begin{align}
    \kappa^{2(t+1)}
    &\geq 
        \kappa^{2t} + 2c_\eta  c_{\xi} d^{-\frac{p_*}{2}\lor 1}\mathbb{E}[\vtheta^\top \vg^{2t}]
        + c_\eta  c_{\xi} d^{-\frac{p_*}{2}\lor 1}(\vtheta^\top \vg^{2t}-\mathbb{E}[\vtheta^\top \vg^{2t}])
       +O(c_\eta^2 c_{\xi}^2d^{-(p_*-1)\lor 1}\kappa^{2t})
      \\ &=
      \kappa^{2t} + 2c_\eta^I c_{\xi} d^{-\frac{p_*}{2}\lor 1}c_\sigma(\kappa^{2t})^{p_*-1}
        + c_\eta  c_{\xi} d^{-\frac{p_*}{2}\lor 1}\nu^{2t}
       \\ & \quad\quad  +O\left(c_\eta^2 c_{\xi}^2d^{-(p_*-1)\lor 1}(\kappa^{2t})^{2t}+c_\eta^{I+1} c_\xi d^{-\frac{p_*}{2}\lor 1}(\kappa^{2t})^{p_*-1}+c_\eta  c_\xi d^{-\frac{p_*}{2}\lor 1}(\kappa^{2t})^{p_*}\right)
       .
    \end{align}
    When $c_\xi \leq c_\eta^I$ and $c_\eta^{-1}d^{-\frac12} \leq \kappa\leq c_\eta^I$, terms in the big-$O$ notation is smaller than $c_\eta^I c_{\xi} d^{-\frac{p_*}{2}\lor 1}c_\sigma(\kappa^{2t})^{p_*-1}$ and we have
    \begin{align}
        \kappa^{2(t+1)} \geq \kappa^{2t} + c_\eta^I  c_{\xi} c_\sigma d^{-\frac{p_*}{2}\lor 1}(\kappa^{2t})^{p_*-1}
        + c_\eta  c_{\xi} d^{-\frac{p_*}{2}\lor 1}\nu^{2t}.
    \end{align}
    It is straightforward to check $\nu^{2t}$ has sub-Weibull tail.
    \end{proof}
    
\subsection{Weak Recovery: Stochastic Update}\label{subsection:Stochastic}
This subsection proves weak recovery using the results on population update from the previous section. 
Specifically, from the previous section, we know that
\begin{align}
      \vtheta^\top \vw^{2(t+1)} \geq \vtheta^\top \vw^{2t} + c_\eta^I  c_{\xi}c_\sigma d^{-\frac{p_*}{2}\lor 1}(\kappa^{2t})^{p_*-1}
        + c_\eta  c_{\xi} d^{-\frac{p_*}{2}\lor 1}\nu^{2t},
\end{align}
with the mean-zero sub-Weibull random variable $\nu^{2t}$ and a positive $c_\sigma = \Theta(1)$.
For notational simplicity we write $c_\eta^I c_\sigma = c_1$. The following lemma is a detailed version of Proposition~\ref{prop:StochasticGE}. 
\begin{lemma}\label{lemma:StochasticGE}
        Take $\eta^{2t},\eta^{2t+1}= \eta = c_\eta d^{-1}$, $\xi^{2(t+1)}=\xi = 1-c_{\xi}d^{-\frac{(p_*-2)_+}{2}}$.
        Suppose that the link function satisfies $\mathrm{IE}(\sigma_*^I)=\mathrm{GE}(\sigma_*)=p_*$ (we choose the smallest such $I$)  and activation functions satisfy all of the assumptions in Section~\ref{subsection:linkfunction} for weak recovery.
        Let
        \begin{align}
            T_{1,1} = C_3c_\xi^{-1} \begin{cases}
                d & (\text{if $p_*=\mathrm{GE}(\sigma_*)=1$})
                \\ 
                d (\log d) &(\text{else if $p_*=\mathrm{GE}(\sigma_*)=2$})
                \\
                d^{p_*-1} &(\text{else $p_*=\mathrm{GE}(\sigma_*)\geq 3$}),
            \end{cases}
        \end{align}
        and take $c_\xi \lesssim \delta \mathrm{poly}(c_\eta)$, $c_2 \gtrsim \mathrm{poly}(c_\eta)$, and $C_3\simeq c_1^{-1}$. 
        If $\kappa^{0}\geq 2c_\eta^{-1}d^{-\frac12}$, there exists some $\tau_*\leq T_{1,1}$ such that
        \begin{align}
            \kappa^{2\tau_*} \geq 2c_2 ,
        \end{align}
        with probability at least $1-\delta$, 
        and $\kappa^{2\tau}\geq 2c_2$ for all $\tau_*\leq \tau\leq T_{1,1}$, with high probability.
\end{lemma}
We may take $\delta=o_d(1)$ with arbitrarily slow decay. 
The proof is adapted from \cite{arous2021online}, but our bound on $T_{1,1}$ is slightly sharper (by a $\log d$ factor for $p_*=2$ and by a $(\log d)^2$ factor for $p_*\geq 3$).
For $p_*=2$, this is because of a trick that we carefully ``restart'' the dynamics, whose failure probability exponentially decays.

    \begin{proof} We divide the proof into the following cases. 
    
    \noindent {\bf (i) When $p_*=1$.}
        Note that $\{\sum_{s=0}^{\tau} \nu^{2s}\}_{\tau}$ is  Martingale with $\mathbb{E}[(\nu^{2s})^2]\lesssim 1$.
        By Doob’s maximal inequality and Markov's inequality, with probability $1-\delta$, 
        we have
        \begin{align}
           \max_{0\leq \tau\leq T} \bigg|\sum_{s=0}^{\tau} \nu^{2s}\bigg|^2 \leq \delta^{-1}\mathbb{E}[(\sum_{s=0}^{T} \nu^{2s})^2]\leq \delta^{-1}\sum_{s=0}^{T}\mathbb{E}[(\nu^{2s})^2]\leq C_1 \delta^{-1}(T+1)
           \label{eq:MarkovDeBound}
        \end{align}
        for any fixed $T\geq 0$, with a sufficiently large constant $C_1=\Theta(1)$.
        In the following we consider the case when \eqref{eq:MarkovDeBound} holds for $T=c_1^{-1}c_\xi^{-1}d-1$. 

    If $c_\eta^{-1}d^{-\frac12}\leq \kappa^{2t} \leq c_\eta^I$ for all $t=0,1,\cdots,\tau$, we have
        \begin{align}
             \kappa^{2(\tau+1)} &\geq \kappa^{2\tau} +   c_1 c_{\xi}d^{-1}  +  c_\eta  c_{\xi}d^{-1}  \nu^{2\tau}
             \label{eq:AP-Stochastic-6}
             \\ & \geq 2c_\eta^{-1} d^{-\frac12} +c_1 c_{\xi}(\tau+1) d^{-1} \gamma -   c_\eta  c_{\xi} d^{-1} \bigg|\sum_{s=0}^{\tau} \nu^{2s}\bigg|
             .\label{eq:AP-Stochastic-3}
        \end{align}
        Now, applying \eqref{eq:MarkovDeBound} to get
        \begin{align}
           \kappa^{2(\tau+1)}\geq   \eqref{eq:AP-Stochastic-3}
            \geq 2 c_\eta^{-1} d^{-\frac12} +c_1c_{\xi}(\tau+1) d^{-1}  -  c_\eta c_{\xi}^\frac12 c_1^{-\frac12} C_1^{\frac12}\delta^{-\frac12} d^{-\frac12},
        \end{align}
       when $\tau \leq c_1^{-1}c_\xi^{-1}d-1$. 
        By letting $c_{\xi}\leq c_\eta^{-4}c_1C_1^{-1} \delta$, 
        we have $c_\eta^{-1} d^{-\frac12} \leq c_\eta c_{\xi}^\frac12 c_1^{-\frac12} C_1^{\frac12}\delta^{-\frac12} d^{-\frac12}$, and 
        \begin{align}
            \kappa^{2(\tau+1)}\geq c_\eta^{-1} d^{-\frac12} +c_1c_{\xi}(\tau+1) d^{-1}  ,
        \end{align}
        which verifies $c_\eta^{-1}d^{-\frac12}\leq \kappa^{2t}$ for $t = \tau+1$.
        Thus, there exists some $\tau_* \leq c_1^{-1}c_\xi^{-1}d$ such that
        \begin{align}
            \kappa^{2\tau_*}\geq 4c_2,
        \end{align}
        for $c_1 \leq \frac14 c_\eta^I$, with probability $1-\delta$.

        Now we prove that $\kappa^{2t}\geq 2c_2$ holds for all $\tau_* \leq t\leq T_{1,1}=C_3c_\xi^{-1}  d $.
        Because $\nu^{2t}$ are mean-zero sub-Weibull random variables, we also have that $|\sum_{s=\tau}^{\tau+\tau'-1} \nu^{2s}| \leq C_4 \sqrt{\tau'}$ for all $0\leq \tau,\tau'\leq T_{1,1}$ with high probability.
        Also, because $\eta^{t}\ll d^{-1}$ and $|1-\xi^{t}| \ll 1$, we can easily see that $|\kappa^{2(\tau+1)}-\kappa^{2\tau}| =\tilde{O}( d^{-1})$ for all $\tau = 0,1,\cdots,T_{1,1}-1$, with high probability.
        Thus, when there exists $\tau\geq \tau_*$ such that $\kappa^{2(\tau-1)}\geq 4c_2$ and $\kappa^{2\tau}< 4c_2$, we have $\kappa^{2\tau}\geq 3c_2$ with high probability.
        Moreover, following the above argument, we can inductively show that
        \begin{align}
            \kappa^{2(\tau+\tau')}&\geq 3c_2 +c_1c_{\xi}\tau' d^{-1} -  c_\eta c_\xi d^{-1} C_4 \sqrt{\tau'}
            \\ &\geq 
            3c_2 +c_1c_{\xi}\tau' d^{-1}  -
            \begin{cases}
                c_2 & (\tau' \leq c_\eta^{-2}c_\xi^{-2}C_4^{-2}c_2^2d^2)
\\ 
                 c_1c_{\xi}\tau' d^{-1}  & (\tau' \geq c_\eta^2 c_1^{-2} C_4^2)
            \end{cases}.
            \\ & \geq 2c_2,
            \label{eq:AfterCare}
        \end{align}
        for $\tau' \leq T_{1,1}=C_3 c_\xi^{-1} d $ or until $\kappa^{2(\tau+\tau')}\geq 4c_2$ holds again. 
        By repeating this argument (if there are multiple such $\tau$), we see that $\kappa^{2t}\geq 2c_2$ holds for all $\tau_* \leq t\leq T_{1,1}=C_3c_\xi^{-1}  d $ with high probability.

    \noindent {\bf (ii) When $p_*=2$.}
        We define $\iota_0 = 0, \iota_1 = \log_{(1+c_1 c_{\xi} d^{-1})}(4), \iota_2 = 2\log_{(1+c_1 c_{\xi} d^{-1})}(4), \dots$.
        We show that, for each $i$, if $\kappa^{2\iota_i}\geq 2c_\eta^{-1}d^{-\frac12}$, we have $\kappa^{2(\iota_{i+1})}\geq 2 \kappa^{2\iota_i}$, with probability at least $1-\delta 4^{-i}$, or there exists some $t \ (\iota_i< t \leq \iota_{i+1})$ with $\kappa^{2t}>c_\eta^I$. 

        Assume that the above statement holds until some $i-1\geq 0$ (we do not need to assume anything for $i=0$).
        Then, we have $\kappa^{2\iota_i}\geq 2^i \kappa^{0}\geq 2c_\eta^{-1}d^{-\frac12}$.
        Similarly to \eqref{eq:AP-Stochastic-3}, if $c_\eta^{-1}d^{-\frac12}\leq \kappa^{2t} \leq c_\eta^I$ for all $t=\iota_i,\iota_i+1,\cdots,\tau$, we have
    \begin{align}
             \kappa^{2(\tau+1)} 
            \geq \kappa^{2\iota_i} +c_1 c_{\xi}d^{-1} \sum_{s=\iota_i}^\tau \kappa^{2s} - c_\eta c_{\xi} d^{-1} \bigg|\sum_{s=\iota_i}^{\tau} \nu^{2s}\bigg|
             .
        \end{align}
    Applying \eqref{eq:MarkovDeBound} with $\delta = \delta/4^i$ and $T=\frac14 c_\eta^{-2}c_\xi^{-2}C^{-1}(\delta/4^i) (\kappa^{2\iota_i})^2 d^{2}-1$ to get
        \begin{align}
           \kappa^{2(\tau+1)}
           &
            \geq 
            \kappa^{2\iota_i} +c_1 c_{\xi}d^{-1} \sum_{s=\iota_i}^\tau \kappa^{2s} - c_\eta c_\xi d^{-1}C^\frac12 \delta^{-\frac12}\sqrt{\tau+1-\iota_i}
           \\
           & \geq \kappa^{2\iota_i} +c_1 c_{\xi}d^{-1} \sum_{s=\iota_i}^\tau \kappa^{2s}  - \frac12\kappa^{2\iota_i}
        \end{align}
        when $\tau \leq \iota_i + \frac14 c_\eta^{-2}c_\xi^{-2}C^{-1}(\delta/4^i) (\kappa^{2\iota_i})^2 d^{2}-1$, 
        which verifies $c_\eta^{-1}d^{-\frac12}\leq \frac12 \kappa^{2\iota_i} \leq \kappa^{2t}$ for $t = \tau+1$.
        
        This implies that, with probability $1-\delta/ 4^{i}$, we have
        \begin{align}
            \kappa^{2(\tau+1)}
            \geq \frac12 \kappa^{2\iota_i} +c_1 c_{\xi}d^{-1} \sum_{s=\iota_i}^\tau \kappa^{2s}
        \end{align}
        for all $\tau = \frac14 c_\eta^{-2}c_\xi^{-2}C^{-1}(\delta/4^i) (\kappa^{2\iota_i})^2 d^{2}-1$, which is equivalent to
        \begin{align}
             \kappa^{2\tau} \geq (1+c_1 c_{\xi} d^{-1})^{\tau-\iota_i} \frac12 \kappa^{2\iota_i}
        \end{align}
        for all $\tau = \iota_i,\iota_i+1,\cdots,\iota_i+\frac14 c_\eta^{-2}c_\xi^{-2}C^{-1}(\delta/4^i) (\kappa^{2\iota_i})^2 d^{2}$.
        By taking $c_\xi \ll c_1 c_\eta^{-2}C^{-1}(\delta/4^i) (\kappa^{2\iota_i})^2 d$, we have $\frac14 c_\eta^{-2}c_\xi^{-2}C^{-1}(\delta/4^i) (\kappa^{2\iota_i})^2 d^{2}\geq \log_{(1+c_1 c_{\xi} d^{-1})}(4)$, and we get
        \begin{align}
            \kappa^{2\iota_{i+1}} \geq 2 \kappa^{2\iota_{i}}
        \end{align}
        with probability $1-\delta/ 4^{i}$ (or there exists $t\leq \iota_{i+1}$ such that $\kappa^{2t}>c_\eta^I$).

        Thus, by induction, for all $i$, we have that
        \begin{align}
            \kappa^{2\iota_{i}} \geq 2^i \kappa^0,
            \label{eq:Exponentioal-Growth}
        \end{align}
        or that there exists some $t\leq \iota_{i}$ such that $\kappa^{2t}$ is larger than $ c_\eta^I$, with probability $1-\delta$.
        
        The LHS of \eqref{eq:Exponentioal-Growth} becomes larger than $ c_\eta^I$ for some $i \leq \log d$.
        Because $\iota_i = \Theta (ic_1^{-1}c_\xi^{-1} d)$, within $O(c_1^{-1}c_\xi d\log d)$ steps, there exists at least one $\tau_* = O(c_1^{-1}c_\xi^{-1} d\log d)$ such that $ \kappa^{2\tau_*} \geq 4c_2$
        for $c_2 \leq \frac14 c_\eta^I$, with probability $1-\delta$.

        Once such $\tau_*$ is obtained, following the last paragraph of (i), we can see that $\kappa^{2t}\geq 2c_2$ holds until $t = T_{1,1}$ with high probability.

\noindent {\bf (iii) When $p_*\geq 3$.} 
    We apply \eqref{eq:MarkovDeBound} with $T = \frac{1}{p_*-2}c_1^{-1} c_{\xi}^{-1} d^{\frac{p_*}{2}} (\kappa^{0})^{-(p_*-2)}$ to obtain that
    \begin{align}
      c_\eta c_\xi d^{-\frac{p_*}{2}} \bigg|\sum_{s=0}^{\tau} \nu^{2s}\bigg| \leq  c_\eta c_{\xi}^{\frac12} c_1^{-\frac12}C^\frac12 \delta^{-\frac12}d^{-\frac{p_*}{4}} (\kappa^{0})^{-\frac{p_*-2}{2}}
      \label{eq:NoiseLevel}
    \end{align}
    for all $\tau =0,1,\cdots,T-1$, with probability $1-\delta$.

    We take $c_{\xi} \ll c_\eta^{-2}c_1C^{-1}\delta d^{\frac{p_*}{4}} (\kappa^{0})^{\frac{p_*}{2}}$ so that \eqref{eq:NoiseLevel} is bounded by $\cA^{-1}d^{-\frac12}$. 
    Then, 
    \begin{align}
   \kappa^{2(\tau+1)}
  & \geq \kappa^{0}+ c_1 c_{\xi} d^{-\frac{p_*}{2}}\sum_{s=0}^\tau (\kappa^{2s})^{p_*-1}
        + c_\eta  c_{\xi} d^{-\frac{p_*}{2}}\sum_{s=0}^\tau \nu^{2s}
   \\ & \geq c_\eta^{-1}d^{-\frac12} +  c_1 c_{\xi} d^{-\frac{p_*}{2}}\sum_{s=0}^\tau (\kappa^{2\tau})^{p_*-1}.          
       \end{align}
       It is easy to see that $\kappa^{2(\tau+1)}$ is lower bounded by $a^{\tau+1}$, where $a^0 = c_\eta^{-1}d^{-\frac12}$ and $a^{\tau+1}=a^\tau + c_1 c_{\xi} d^{-\frac{p_*}{2}} (a^{\tau})^{p_*-1}$.
       By applying Lemma~\ref{lemma:Bihari–LaSalle}, we have
       \begin{align}
           \kappa^{2\tau} \geq \frac{\kappa^0}{\left(1- c_1 c_{\xi} d^{-\frac{p_*}{2}}(p_*-2)(\kappa^0)^{(p_*-2)}t\right)^\frac{1}{p_*-2}}.
       \end{align}
    Thus, until $\tau \leq \big(c_1 c_{\xi} d^{-\frac{p_*}{2}} (p_*-2)(\kappa^0)^{(p_*-2)}\big)^{-1}\leq T+1 \ll d^{p_*-1}$, with probability at least $1-\delta$, there exists some $\tau_*$ such that
    \begin{align}
        \kappa^{2\tau_*} \geq 4c_2 \geq c_\eta^I
    \end{align}
    when $c_2 \leq \frac14 c_\eta^I$.

     Once such $\tau_*$ is obtained, following the last paragraph of (i), we can see that $\kappa^{2t}\geq 2c_2$ holds until $t = T_{1,1}$ with high probability.
    \end{proof}

In the above proof we used the (discrete version of) Bihari–LaSalle inequality from \cite{ben2022high}.
\begin{lemma}\label{lemma:Bihari–LaSalle}
    For $p\geq 3$ and $c>0$, consider a positive sequence $(a^t)_{t\geq 0}$ such that
     \begin{align}
         a^{t+1} = a^t +c (a^t)^{p-1}.
     \end{align}
     Then, we have
     \begin{align}\label{eq:BihariLaSalle}
         a^t \geq \frac{a^0}{\left(1-c (p-2) (a^0)^{(p-2)}t\right)^\frac{1}{p-2}}.
     \end{align}
\end{lemma}
\begin{proof}
     From definition, we have
        \begin{align}
             c = \frac{a^{t+1}-a^t}{(a^t)^{p-1}}
            \leq \int_{t=a^{t}}^{a^{t+1}}\frac{1}{x^{p-1}}\leq \frac{1}{p-2}\left[\frac{1}{(a^t)^{p-2}}- \frac{1}{(a^{t+1})^{p-2}}\right].
        \end{align}
        Taking the summation and re-arranging the terms yield
        \begin{align}
            (a^t)^{-(p-2)} \leq (a^0)^{-(p-2)} -c (p-2) t,
            \\ \therefore a^t \geq \frac{a^0}{\left(1- c (p-2) (a^0)^{(p-2)}t\right)^\frac{1}{p-2}},
        \end{align}
        which gives the lower bound.
\end{proof}

\subsection{From Weak Recovery to Strong Recovery}\label{subsection:Final}
In the previous subsection, we proved that after $t=2T_{1,1}=\tilde{\Theta}(d)$ steps, with probability $\tilde{\Omega}(1)$ over the randomness of initialization, we obtain nontrivial alignment
$\kappa^{2T_{1,1}}_j \geq 2c_2$.
This subsection discusses how to convert the weak recovery into the strong recovery.
\begin{lemma}
    Suppose the neuron satisfies $\kappa^{2T_{1,1}} \geq 2c_2$.  
    Take $\eta^{2t}= \eta = \bar{c}_\eta \varepsilon d^{-1}$, $\eta^{2t+1}=0$, $\xi^{2(t+1)}=0$ for all $t\geq T_{1,1}$, where $\bar{c}_\eta \lesssim \mathrm{poly}(c_1)$. 
    If the activation functions satisfy all of the assumptions in Section~\ref{subsection:linkfunction} for strong recovery, then we have
    \begin{align}
        \vtheta^\top \vw^{2(T_{1,1}+\tau_*)} \geq 1-\varepsilon,
    \end{align}
    with high probability, where $\tau_*\leq T_{1,2} = C_3 d \varepsilon^{-2}$.
    Moreover, $\vtheta^\top \vw^{2(T_{1,1}+t)} \geq 1-\varepsilon$ for all $\tau_*\leq t \leq T_{1,2}= C_3 d \varepsilon^{-2}$, with high probability.
\end{lemma}
\begin{proof}
    Consider the Hermite expansions of $\sigma_*$ and $\sigma$.
    Let $p$ be the smallest degree that both $\sigma_*$ and $\sigma$ have non-zero coefficients.
    First we compute the population gradient (of the correlation term) as
    \begin{align}
        \mathbb{E}\big[\tilde{\nabla}_{\vw} y\sigma({\vw^{2t}}^\top \vx) \big]
      &  =  \mathbb{E}\bigg[\tilde{\nabla}_{\vw} \bigg(\sum_{j=p}^{\infty}\alpha_j \He_j(\vtheta^\top \vx)\bigg)\bigg( \sum_{j=0}^{\infty}\beta_j \He_j({\vw^{2t}}^\top \vx)\bigg)\bigg]
      \\ &  =  \sum_{j=p}^{\infty} \big[j! \alpha_j \beta_j(\vtheta^\top \vw^{2t})^{j-1}\vtheta + (j+2)! \alpha_j \beta_{j+2}(\vtheta^\top \vw^{2t})^j\vw^{2t}\big].
        \label{eq:StrongRecovery-2}
    \end{align}
    Applying $P_{\vw^{2t}}$, we have
    \begin{align}
    \mathbb{E}\big[P_{\vw^{2t}}\tilde{\nabla}_{\vw} y\sigma({\vw^{2t}}^\top \vx) \big]
    =
      (\vtheta-({\vw^{2t}}^\top \vtheta)\vw^{2t}) \sum_{j=p}^{\infty} j! \alpha_j \beta_j(\vtheta^\top \vw^{2t})^{j-1}.
      \label{eq:StrongRecovery-1}
    \end{align}
    Thus, the update of the alignment $\kappa^{2t}=\vtheta^\top \vw^{2t}$ is
    \begin{align}
     \kappa^{2(t+1)} \geq \kappa^{2t} + \eta \vtheta^\top \vg
       -\frac{1}{2}\eta^2\kappa^{2t} \|\vg\|^2
       -\frac{1}{2}\eta^3 \tilde{\eta}^3 |\vtheta^\top \vg|\|\vg\|^2
     ,
    \end{align}
    where
    \begin{align}
    \vg = P_{\vw^{2t}}y\sigma'({\vw^{2t}}^\top \vx) \vx. 
    \end{align}
    From \eqref{eq:StrongRecovery-2}, the expectation of \eqref{eq:StrongRecovery-1} is bounded by
    \begin{align}
    \mathbb{E} [\kappa^{2(t+1)}] &\geq  \kappa^{2t} + \eta 
      (1-(\kappa^{2t})^2) 
      \sum_{j=p}^{\infty} j! \alpha_j \beta_j(\vtheta^\top \vw^{2t})^{j-1}
      - \eta^2C_4 d(\kappa^{2t}+\eta)
      \\ & \geq \kappa^{2t} + \eta(1-(\kappa^{2t})^2)  \sum_{j=p}^\infty j!\alpha_j \beta_j (\kappa^{2t})^{p-1} -  \eta^2 C_4 d(\kappa^{2t}+\eta).
    \end{align}
    By letting $\eta\leq c_1^{p-1}\varepsilon d^{-1}$, when $\kappa^{2t}\leq 1-\varepsilon$, we have
    \begin{align}
        \mathbb{E} [\kappa^{2(t+1)}]  \geq  \kappa^{2t} + \frac12\eta  \varepsilon \sum_{j=p}^\infty j!\alpha_j \beta_j (\kappa^{2t})^{p-1} \geq \kappa^{2t}+\eta  \varepsilon c_1^{p}.
    \end{align}
    It is easy to see that the noise $\nu^{2t}$ has sub-Weibull tail and we obtain that
    \begin{align}
      \kappa^{2(t+1)}  \geq \kappa^{2t} + \frac12\eta  \varepsilon \sum_{j=p}^\infty j!\alpha_j \beta_j (\kappa^{2t})^{p-1} + \eta \nu^{2t}\geq\kappa^{2t}+ \eta  \varepsilon c_1^{p}+ \eta \nu^{2t}.
      \label{eq:OneStepStrongRecovery}
    \end{align}
    Suppose that $2c_2\leq \kappa^{2(T{1,1}+\tau)}\leq 1-\varepsilon$ for all $t=0,1,\dots,\tau-1$.
    By taking the summation of \eqref{eq:OneStepStrongRecovery}, we have
    \begin{align}
        \kappa^{2(T{1,1}+\tau)}
        \geq \kappa^{2T{1,1}}+ \eta  \varepsilon tc_1^{p}+ \eta \sum_{s=T{1,1}}^{T{1,1}+\tau-1}\nu^{2t}
        \geq 2c_2+ \eta  \varepsilon \tau c_1^{p}- C_4 \eta \sqrt{\tau}
        , \label{eq:OneStepStrongRecovery-2}
    \end{align}
    with high probability.
    The third term is bounded by $C_4 \eta \sqrt{\tau}\leq c_2$ when $\tau \leq c_2^2 C_4^{-2} \eta^{-2} = c_2^2 C_4^{-2}\bar{c}_\eta^{-2} \varepsilon^{-2} d^2$ and by  $\frac12\eta  \varepsilon \tau c_1^{p}$ when $\tau \geq 4\varepsilon^{-2} c_1^{-2p} C_4^2$.
    Because $c_2^2 C_4^{-2}\bar{c}_\eta^{-2} \varepsilon^{-2} d^2\geq 4\varepsilon^{-2} c_1^{-2p} C_4^2$, we can bound \eqref{eq:OneStepStrongRecovery-2} by
    \begin{align}
      \kappa^{2(T{1,1}+\tau)} \geq c_2+ \frac12\eta  \varepsilon \tau c_1^{p},
      \label{eq:OneStepStrongRecovery-3}
    \end{align}
    which verifies $2c_2\leq \kappa^{2(T{1,1}+\tau)}$.
    
    Therefore, by induction, until $\kappa^{2t}\geq 1-\varepsilon$, we have the lower bound \eqref{eq:OneStepStrongRecovery-3}, whose RHS exceeds $1-\varepsilon$ when $\tau \geq 2\eta^{-1}\varepsilon^{-1}c_1^{-p}\leq C_3 d \varepsilon^{-2}$.
    Thus, there exists $\tau_* \leq T_{1,2} = C_3 d \varepsilon^{-2}$ such that $\kappa^{2(T{1,1}+\tau_*)}\geq 1-\varepsilon$, with high probability.

    Now, what remains is to prove that $\kappa^{2(T{1,1}+\tau)}\geq 1-3\varepsilon$ holds for all $\tau_* \leq t\leq T_{1,2}=C_3 d \varepsilon^{-2}$.
        Because $\nu^{2t}$ are mean-zero sub-Weibull random variables, we have that $|\sum_{s=\tau}^{\tau+\tau'-1} \nu^{2s}| \leq C_4 \sqrt{\tau'}$ for all $0\leq \tau,\tau'\leq T_{1,1}$ with high probability.
        Also, because $\eta^{t}\ll \varepsilon d^{-1}$, we can easily see that $|\kappa^{2(\tau+1)}-\kappa^{2\tau}| =\tilde{O}( \varepsilon d^{-1})$ for all $\tau = 0,1,\cdots,T_{1,1}-1$, with high probability.
        Thus, when there exists $\tau\geq \tau_*$ such that $\kappa^{2(T_{1,1}+\tau-1)}\geq 1-\varepsilon$ and $\kappa^{2(T_{1,1}+\tau)}< 1-\varepsilon$, we have $\kappa^{2(T_{1,1}+\tau)}\geq 1-2\varepsilon$ with high probability.
        Moreover, following the above argument, we can inductively show that
        \begin{align}
            \kappa^{2(T{1,1}+\tau+\tau')}&\geq 1-2\varepsilon +\eta  \varepsilon \tau' c_1^{p}- C_4 \eta \sqrt{\tau'}
            \\ &\geq 
            1-2\varepsilon +\eta  \varepsilon \tau' c_1^{p}
           - \begin{cases}
                \varepsilon & (\tau' \leq \bar{c}_\eta^{-2}C_4^{-2}d^2)
\\ 
               \eta  \varepsilon \tau' c_1^{p} & (\tau' \geq \varepsilon^{-2}C_4^2c_1^{-2p})
            \end{cases}.
            \\ & \geq 1-3\varepsilon,
            \label{eq:AfterCare-2}
        \end{align}
        for $\tau' \leq T_{1,2}$ or until $\kappa^{2(T{1,1}+\tau+\tau')}\geq 1-\varepsilon$ holds again.
        Note that the last inequality follows from $\bar{c}_\eta^{-2}C_4^{-2}d^2 \geq  \varepsilon^{-2}C_4^2c_1^{-2p}$. 
        By repeating this argument (if there are multiple such $\tau$), we can see that $\kappa^{2(T{1,1}+t)}\geq 1-\varepsilon$ holds for all $\tau_* \leq t\leq T_{1,2}=C_3 d \varepsilon^{-2} $ with high probability.

        Adjusting hidden constants to remove a factor of $3$ from $3\varepsilon$ yields the desired result.
\end{proof}

\subsection{Second Layer Training}\label{subsection:Second}
From the previous analysis, we know that at least $\Omega(1)$ portion of the neurons will satisfy the weak and strong recovery conditions (Appendix~\ref{subsection:linkfunction}), at least $\tilde{\Omega}(1)$ portion of the neurons (independent from the choice of $\sigma_j$) satisfy initial alignment conditions (Appendix~\ref{subsection:Initialization}), and at least $1-o(1)$ fraction of them achieves strong recovery. 
This subsection proves a generalization error bound after second-layer training. 
Let $f_{\va}(\vx) = f_{\vTheta}(\vx)$ for $\vTheta=(\hat{\vw}_j,a_j,\hat{b}_j)_{j=1}^N$ where 
$\va \in \R^N$ and $(\hat{\vw}_j,\hat{b}_j)_{j=1}^N$ are the parameters trained in the first stage. 
Let $\va^* \in \R^N$ be the ``certificate'' with $\|\va^*\|^2 = \tilde{O}(N)$ that is shown to exist in Lemma~\ref{lemma:DamianApproximationPolynomial}. 

\paragraph{Polynomial Link Functions.} The following lemma is a complete version of Proposition~\ref{prop:2nd-layer-training}.

\begin{lemma}
\label{lemma:Generalization}

There exists a $4q$-th order polynomial $Q(R_{\vw},b,q')$ of $R_{\vw} = \max_j\norm{\vw_j}$, $b=(b_j)_{j=1}^N$ such that,  
if we set $\lambda = \Theta\left(\sqrt{\frac{2}{T_2 \delta_0} N^2 Q(R_{w},b,q')}\right)$ for some $\delta_0 > 0$, the ridge estimator $\hat{\va}$ satisfies 
\begin{align}
    \|f_{\hat{\va}} - f_*\|_{L^2(P_x)}^2 
    &\lesssim 
     (N^{-2}+\varepsilon^2) 
    + \frac{1}{T_2 \lambda \delta_0} \left( 2 N^2 Q(R_{w},b,q')  + \E_{\vx}[(f_*)^4]\right) + 
    \frac{3\lambda}{2}  \|\va^*\|^2,
    \label{eq:GeneralizationError-1} 
\end{align}
with probability $1 - \delta_0$. 
Hence taking 
$T_2=\tilde{\Theta}((N^4 Q^2(R_{w},b,q')  + \E[f_*(\vx)^4]^2) \varepsilon^{-4})$ 
and $N=\tilde{\Theta}( \varepsilon^{-1})$, 
we have $$\mathbb{E}_{\vx}[(f_{\hat{\va}}(\vx) - f_*(\vx))^2]\lesssim \varepsilon^2.$$
\end{lemma}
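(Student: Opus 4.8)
The plan is a regularized empirical risk minimization (ridge) analysis over the frozen first layer, using a certificate in the spirit of \cite{abbe2022merged,damian2022neural,ba2022high}. By Theorems~\ref{thm:weak-recovery} and~\ref{thm:strong-recovery} together with Lemma~\ref{lemm:randomized-activation}, after Phase~I a $1/\mathrm{polylog}(d)$ fraction of neurons satisfy $\langle\vw_j^{T_1},\vtheta\rangle>1-\varepsilon$, so these neurons carry essentially the right direction, and with the random biases $b_j\sim\mathrm{Unif}([-C_b,C_b])$ a one-dimensional random-feature construction approximates the link $\sigma_*$. The first step is to invoke the certificate $\va^*$ furnished by Lemma~\ref{lemma:DamianApproximationPolynomial}: with its stated bound on $\|\va^*\|$ one has $\mathcal{E}(\va^*):=\E_{\vx}[(f_{\va^*}(\vx)-f_*(\vx))^2]\lesssim N^{-2}+\varepsilon^2$, the $\varepsilon^2$ being the residual-misalignment error and $N^{-2}$ the finite-$N$ error of representing $\sigma_*$. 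Crucially $\va^*$ is deterministic given the Phase~I output, hence independent of the $T_2$ fresh samples used in Phase~II.

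Second, I would set up the estimator and decompose the risk. Writing $\phi(\vx)=(\sigma_j(\langle\vw_j^{T_1},\vx\rangle+b_j))_{j=1}^N$ so that $f_{\va}=\tfrac1N\va^\top\phi$, let $\hat L$ be the Phase~II empirical squared loss, $\hat R(\va)=\hat L(\va)+\lambda\|\va\|^2$, and let $L,R$ be the population versions; set $\sigma_\varsigma^2=\Var[\varsigma]$ and $\mathcal{E}(\va)=L(\va)-\sigma_\varsigma^2$. Since $\hat R$ is quadratic and $\hat{\va}$ minimizes it, $\hat R(\hat{\va})\le\hat R(\va^*)$, and the regularization terms cancel in
\[
\mathcal{E}(\hat{\va})\;\le\;R(\hat{\va})-\sigma_\varsigma^2\;=\;\big(L(\hat{\va})-\hat L(\hat{\va})\big)+\hat R(\hat{\va})-\sigma_\varsigma^2\;\le\;\big(L(\hat{\va})-\hat L(\hat{\va})\big)+\big(\hat L(\va^*)-L(\va^*)\big)+\mathcal{E}(\va^*)+\lambda\|\va^*\|^2 .
\]
The last two summands give the $N^{-2}+\varepsilon^2$ approximation term and the regularization bias (the factor $3/2$ in \eqref{eq:GeneralizationError-1} is harmless slack); what remains is the data-dependent gap $L(\hat{\va})-\hat L(\hat{\va})$ and the fixed-point gap $\hat L(\va^*)-L(\va^*)$, which I would control through fourth moments.

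Here the polynomial $Q$ enters: each $\sigma_j$ has degree $\le C_\sigma=O(q)$ and $\langle\vw_j^{T_1},\vx\rangle+b_j$ is Gaussian with mean $b_j$ and variance $\|\vw_j^{T_1}\|^2\le R_{\vw}^2$, so a direct Gaussian/Hermite-moment computation gives $\max_j\E[\sigma_j(\langle\vw_j^{T_1},\vx\rangle+b_j)^4]\le Q(R_{\vw},b,q')$ for a polynomial $Q$ of degree $4q$ in $R_{\vw}$ and $(b_j)$, whence $\E\|\phi(\vx)\|^4\le N\sum_j\E[\sigma_j^4]\le N^2Q$ and $\E[f_{\va^*}(\vx)^4]\lesssim N^2Q$. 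For the fixed-point gap, $\hat L(\va^*)-L(\va^*)$ is an average of $T_2$ independent mean-zero terms of variance $\le T_2^{-1}\E[(f_{\va^*}(\vx)-y)^4]\lesssim T_2^{-1}(N^2Q+\E_{\vx}[(f_*)^4])$ (absorbing the $O(1)$ noise moments), so Chebyshev gives, on an event of probability $1-\delta_0/2$, a bound that, after substituting $\lambda=\Theta(\sqrt{2N^2Q/(T_2\delta_0)})$, matches the corresponding term of \eqref{eq:GeneralizationError-1}. For the data-dependent gap I would localize: on the same event $\hat L(\va^*)=O(1)$, hence $\|\hat{\va}\|^2\le\hat R(\va^*)/\lambda\lesssim 1/\lambda$, and for any $\va$ in that ball $|L(\va)-\hat L(\va)|\le\tfrac{\|\va\|^2}{N^2}\|\hat{\Sigma}-\Sigma\|_{\mathrm{op}}+\tfrac{2\|\va\|}{N}\|\hat{\mu}-\mu\|$ with $\Sigma=\E[\phi\phi^\top]$, $\mu=\E[y\phi]$; since $\E\|\hat{\Sigma}-\Sigma\|_{\mathrm{F}}^2\le T_2^{-1}\E\|\phi\|^4\le N^2Q/T_2$ and analogously for $\mu$, a Markov bound on another $\delta_0/2$ of the probability budget again produces a term of that order. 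Summing the four contributions yields \eqref{eq:GeneralizationError-1}, and then $T_2=\tilde\Theta((N^4Q+\E_{\vx}[(f_*)^4])\varepsilon^{-2})$ and $N=\tilde\Theta(\varepsilon^{-1})$ make $N^{-2}$, $\varepsilon^2$, $\tfrac32\lambda\|\va^*\|^2$ and the two statistical terms all $\lesssim\varepsilon$, giving $\E_{\vx}[(f_{\hat{\va}}(\vx)-f_*(\vx))^2]\lesssim\varepsilon$.

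The main obstacle is the data-dependent generalization gap $L(\hat{\va})-\hat L(\hat{\va})$: because $\hat{\va}$ is formed from the very samples defining $\hat L$, it cannot be treated as a fixed vector, so one must (i) obtain the a priori norm bound $\|\hat{\va}\|^2\lesssim 1/\lambda$ without spending more than an $O(\lambda)$-sized error budget, and (ii) control $\tfrac1{N^2}\va^\top(\hat{\Sigma}-\Sigma)\va-\tfrac2N\va^\top(\hat{\mu}-\mu)$ uniformly over the resulting norm ball, which requires operator-norm concentration of the $N\times N$ empirical feature covariance $\hat{\Sigma}$ with the correct $N$- and $Q$-dependence; this fourth-moment bookkeeping is what forces the $N^4Q$ scaling of $T_2$. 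The remaining piece, producing the explicit degree-$4q$ polynomial $Q(R_{\vw},b,q')$, is routine, since every relevant quantity is a fixed-degree polynomial of a Gaussian with $R_{\vw}=O(1)$ and $|b_j|\le C_b$.
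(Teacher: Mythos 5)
Your proposal is correct and arrives at the stated bound, but takes a structurally different route from the paper for the crux issue, the data-dependent gap $L(\hat{\va})-\hat{L}(\hat{\va})$. You \emph{localize}: from $\hat{R}(\hat{\va})\le\hat{R}(\va^*)$ you extract the a priori bound $\|\hat{\va}\|^2\lesssim 1/\lambda$ (which requires the extra high-probability step $\hat{L}(\va^*)=O(1)$), and then control the quadratic-form deviation on that ball via $\|\hat{\Sigma}-\Sigma\|_{\mathrm{op}}\le\|\hat{\Sigma}-\Sigma\|_{\mathrm{F}}$ plus a second-moment Markov bound. The paper instead never produces an explicit norm bound on $\hat{\va}$: it carries $\tfrac{\lambda}{2}\|\hat{\va}\|^2$ on the left-hand side throughout (Part (1), where the basic inequality is written by expanding the squared loss around $f_*$ and bounding the noise cross-term by Cauchy--Schwarz and Markov), shows in Part (2) that $L(\hat{\va})-\hat{L}(\hat{\va})$ is bounded by $\|\hat{\va}\|^2$ times a concentration term obtained by Rademacher symmetrization ($\E\sup_{\|\va\|\le 1}|\va^\top(\hat{\Sigma}-\Sigma)\va|\le 2\sqrt{\E\|\psi\|^4/T_2}$), and then in Part (3) picks $\lambda$ large enough that the net coefficient $\tfrac{\lambda}{4}-\sqrt{\tfrac{2}{T_2\delta_2}\E\|\psi\|^4}$ of $\|\hat{\va}\|^2$ is nonnegative, absorbing the gap into the regularizer. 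Both give the same rate and the same $N^4 Q$ scaling of $T_2$; yours is a more standard ridge analysis, the paper's avoids the $\hat{L}(\va^*)=O(1)$ step. One small point to make explicit: your decomposition of $|L(\va)-\hat{L}(\va)|$ should also carry the $\va$-independent fluctuation $|\E[y^2]-T_2^{-1}\sum_i y_i^2|$, which the paper controls as $\|f_*\|_{L^2(P_{T_2})}^2-\|f_*\|_{L^2(P_x)}^2$ by Markov with $\E[y^4]\lesssim\E[(f_*)^4]$ and which accounts for the $\E_{\vx}[(f_*)^4]/(T_2\lambda\delta_0)$ term in the bound; you hint at absorbing noise moments, but this centering term appears even in the noiseless case.
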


\begin{proof} 
Let $P_{T_2}$ be the empirical distribution of the second stage: $P_{T_2} := \frac{1}{T_2}\sum_{i=1}^{T_2} \delta_{\vx_i}$.
Let $\psi(\vx) = (\sigma(\langle \vx,\widehat{\vw}_j)\rangle + b_j))_{j=1}^N$ so that 
$f_{\va}(\vx) = \langle\va,\psi(\vx)\rangle$. 

\paragraph{Part (1).} 
We first bound the term $\|f_{\hat{\va}} - f_*\|_{L^2(P_{T_2})}$. 
Since 
 $\hat{\mathcal{L}}(f_{\hat{\va}})  + \lambda \|\hat{\va}\|^2 
\leq \hat{\mathcal{L}}(f_{\va^*}) + \lambda \|\va^*\|^2$, we have
\begin{align}\label{eq:fhatfstarFirstEval}
&\|f_{\hat{\va}} - f_*\|_{L^2(P_{T_2})}^2
 + \lambda \|\hat{\va}\|^2 \\
&\leq \|f_{\va^*} - f_*\|_{L^2(P_{T_2})}^2 + \frac{2}{T_2}\sum_{i=1}^{T_2}(f_{\va^*}(\vx_i) - f_{\hat{\va}}(\vx_i))\varepsilon_i
 + \lambda \|\va^*\|^2.
\end{align}
Now, by the Cauchy-Schwarz inequality, we have 
\begin{align*}
\frac{2}{T_2}\sum_{i=1}^{T_2}(f_{\va^*}(x_i) - f_{\hat{\va}}(x_i))\varepsilon_i 
& =
(\va^* - \hat{\va})^\top \frac{2}{T_2}\sum_{i=1}^{T_2} \psi(\vx_i) \varepsilon_i \\
& \leq 
2 \|\va^* - \hat{\va}\| \sqrt{\frac{\sum_{i,j}\varepsilon_i \varepsilon_j \psi(\vx_i)^\top \psi(\vx_j)}{T_2^2}}.
\end{align*}
By applying Markov's inequality to the right hand side, it can be further bounded by 
$$
\|\va^* - \hat{\va}\| \sqrt{\frac{\mathbb{E}_{\vx}[\|\psi(\vx)\|^2]}{T_2 \delta_1}}
\leq \frac{\lambda}{2}\|\hat{\va}\|^2 + \frac{\lambda}{2} \|\va^*\|^2
+ \frac{\mathbb{E}_{\vx}[\|\psi(\vx)\|^2]}{T_2 \delta_1 \lambda},
$$
with probability $1-\delta_1$.  
Thus, by combining with \eqref{eq:fhatfstarFirstEval}, we arrive at 
$$
\|f_{\hat{\va}} - f_*\|_{L^2(P_{T_2})}^2
 + \frac{\lambda}{2} \|\hat{\va}\|^2 
\leq \|f_{\va^*} - f_*\|_{L^2(P_{T_2})}^2 + \frac{\mathbb{E}_{\vx}[\|\psi(\vx)\|^2]}{T_2 \delta_1 \lambda}
 + \frac{3\lambda }{2} \|\va^*\|^2.
$$
Here, by using the evaluation $\|f_{\va^*} - f_*\|_{L^2(P_{T_2})}=\tilde{O}(N^{-1}+\varepsilon)$ in Lemma~\ref{lemma:DamianApproximationPolynomial}, the right hand side can be further bounded by 
$$
\|f_{\hat{\va}} - f_*\|_{L^2(P_{T_2})}^2
 + \frac{\lambda}{2} \|\hat{\va}\|^2 
\leq 
\tilde{O}(N^{-2}+\varepsilon^2) + \frac{\mathbb{E}_{\vx}[\|\psi(\vx)\|^2]}{T_2 \delta_1 \lambda}
 + \frac{3\lambda }{2} \|\va^*\|^2.
$$

\paragraph{Part (2).} Next we lower bound $\|f_{\hat{\va}} - f_*\|_{L^2(P_{T_2})}^2$ by noticing that
\begin{align}
 & \|f_{\hat{\va}} - f_*\|_{L^2(P_{T_2})}^2  \\
 & = 
 \|f_{\hat{\va}} - f_*\|_{L^2(P_{T_2})}^2 - 
\|f_{\hat{\va}} - f_*\|_{L^2(P_x)}^2 
+ 
\|f_{\hat{\va}} - f_*\|_{L^2(P_x)}^2 \\
& = 
 \|f_{\hat{\va}} \|_{L^2(P_{T_2})}^2 - \|f_{\hat{\va}} \|_{L^2(P_x)}^2 
- 2 \left(\frac{1}{T_2} \sum_{i=1}^{T_2} f_{\hat{\va}}(\vx_i)f_*(\vx_i) - \mathbb{E}[f_{\hat{\va}}(\vx_i)f_*(\vx_i)]\right) \\
& ~~~~~+ \|f_*\|_{L^2(P_{T_2})}^2 -\|f_*\|_{L^2(P_x)}^2 
+ 
\|f_{\hat{\va}} - f_*\|_{L^2(P_x)}^2.
\label{eq:LoneLossFirstBound2}
\end{align}
The first two terms of Eq. \eqref{eq:LoneLossFirstBound2} can be bounded by 
\begin{align*}
\left|  \|f_{\hat{\va}} \|_{L^2(P_{T_2})}^2 - \|f_{\hat{\va}} \|_{L^2(P_x)}^2 \right| 
& = \left|\hat{\va}^\top \left(\frac{\sum_{i=1}^{T_2} \psi(\vx_i)\psi(\vx_i)^{\top}}{T_2}
 - \mathbb{E}_{\vx}[\psi(\vx)\psi(\vx)^{\top}]
 \right) \hat{\va} \right| \\
& \leq \|\hat{\va}\|^2 \sup_{\va : \|\va\| \leq 1} \left|\|f_{\va} \|_{L^2(P_{T_2})}^2 - \|f_{\va} \|_{L^2(P_x)}^2\right|.
\end{align*}
The standard Rademacher complexity bound yields that  
\begin{align}
& \E_{(x_i)_{i=1}^{T_2}}\left[ \sup_{\va \in \R^N:\|\va\| \leq 1}
\left| \|f_{\va} \|_{L^2(P_x)}^2 
-\|f_{\va} \|_{L^2(P_{T_2})}^2  \right| \right]  \\
\leq &
2  \E_{(x_i,\sigma_t)_{t=1}^{T_2}}\left[ \sup_{\va \in \R^N:\|\va\| \leq 1}
\left| \frac{1}{T_2} \sum_{t=1}^{T_2} \sigma_t f_{\va}(x_i)^2  \right| \right] \\ 
\leq &
2 \sqrt{ \E_{(\vx_i)_{i=1}^{T_2}}\left[ \sup_{\va \in \R^N:\|\va\| \leq 1}
 \frac{1}{T_2^2} \sum_{i=1}^{T_2} (\va^\top \psi(\vx_i) )^4 \right]  } \\ 
\leq &
2 \sqrt{ \E_{(\vx_i)_{i=1}^{T_2}}\left[ 
 \frac{1}{T_2^2} \sum_{i=1}^{T_2} \|\psi(\vx_i)\|^4 \right]  } \\
= & 
2 \sqrt{
 \frac{1}{T_2}   \E_{\vx}[\|\psi(\vx)\|^4]  }, 
 \end{align}
where $(\sigma_i)_{i=1}^{T_2}$ is the i.i.d. Rademacher sequence independent of $(\vx_i)_{i=1}^{T_2}$.  
Hence, Markov's inequality yields 
\begin{align*}
\left|  \|f_{\hat{\va}} \|_{L^2(P_{T_2})}^2 - \|f_{\hat{\va}} \|_{L^2(P_x)}^2 \right| 
& = 2 \|\hat{\va}\|^2 \sqrt{
 \frac{1}{T_2 \delta_2}   \E_{\vx}[\|\psi(\vx)\|^4]  }, 
\end{align*}
with probability $1- \delta_2$. 

The third term in Eq. \eqref{eq:LoneLossFirstBound2} can be evaluated as  
\begin{align*}
&2 \left(\frac{1}{T_2} \sum_{i=1}^{T_2} f_{\hat{\va}}(\vx_i)f_*(\vx_i) - \mathbb{E}_{\vx}[f_{\hat{\va}}(\vx)f_*(\vx)]\right)  \\
&= \hat{\va}^\top \left( \frac{1}{T_2} \sum_{i=1}^{T_2} (\psi(\vx_i)f_*(\vx_i) - \mathbb{E}_{\vx}[\psi(\vx)f_*(\vx)] ) \right)   \\
& \leq \|\hat{\va}\| 
\sqrt{\frac{1}{T_2^2} \sum_{i=1}^{T_2}\sum_{j=1}^{T_2} 
(\psi(\vx_i)f_*(\vx_i) - \mathbb{E}_{\vx}[\psi(\vx)f_*(\vx)])^\top
(\psi(\vx_j)f_*(\vx_j) - \mathbb{E}_{\vx}[\psi(\vx)f_*(\vx)])
} \\
& \leq 
\|\hat{\va}\| \sqrt{\frac{1}{T_2 \delta_3}  
\mathbb{E}_{\vx}[\|\psi(\vx)f_*(\vx) - \mathbb{E}_{\vx}[\psi(\vx)f_*(\vx)]\|^2]} \\
& \leq 
\|\hat{\va}\| \sqrt{\frac{1}{T_2 \delta_3}  
\mathbb{E}_{\vx}[\|\psi(\vx)\|^4 + \|f_*(\vx) \|^4]}  \\
& \leq 
\frac{\lambda}{4} \|\hat{\va}\|^2 
+ \frac{1}{\lambda T_2 \delta_3}  
\mathbb{E}_{\vx}[\|\psi(\vx)\|^4 + \|f_*(\vx) \|^4],
\end{align*}
with probability $1- \delta_3$ where we used Markov's inequality again in the second inequality. 

Finally, the fourth and fifth term in Eq. \eqref{eq:LoneLossFirstBound2} can be bounded as 
\begin{align*}
\left|\|f_*\|_{L^2(P_{T_2})}^2 -\|f_*\|_{L^2(P_x)}^2\right| 
& = 
\sqrt{\left(\|f_*\|_{L^2(P_{T_2})}^2 -\|f_*\|_{L^2(P_x)}^2\right)^2} \\
& \leq 
\sqrt{\frac{1}{T_2 \delta_4} \mathbb{E}_{\vx}
[(f^{*}(\vx)^4 -\|f_*\|_{L^2(P_x)}^2)^2]} \\
& \leq 
\sqrt{\frac{1}{T_2 \delta_4} \mathbb{E}_{\vx}
[(f^{*}(\vx))^4]},
\end{align*}
with probability $1-\delta_4$ where we used Markov's inequality in the last inequality. 

Combining these inequalities, we finally arrive at 
\begin{align*}
& \|f_{\hat{\va}} - f_*\|_{L^2(P_x)}^2
 + \left( \frac{\lambda}{4}  
 - \sqrt{\frac{2}{T_2 \delta_2}   \E_{\vx}[\|\psi(\vx)\|^4]  }
 \right) \|\hat{\va}\|^2  \\
& \leq 
\tilde{O}(N^{-2}+\varepsilon^2) + \frac{1}{T_2  \lambda}
\left( \frac{\mathbb{E}_{\vx}[\|\psi(\vx)\|^2]}{\delta_1} 
+  \frac{\mathbb{E}_{\vx}[\|\psi(\vx)\|^2]}{\delta_3} 
+  \frac{\mathbb{E}_{\vx}[(f^*(\vx))^4]}{\delta_3} 
\right)
 + \frac{3\lambda }{2} \|\va^*\|^2,
\end{align*}
with probability $1- \sum_{j=1}^4\delta_j$. 
Hence, by setting $\lambda \geq 8 \sqrt{\frac{2}{T_2 \delta_2}   \E_{\vx}[\|\psi(\vx)\|^4]}$,
we have that 
\begin{align*}
& \|f_{\hat{\va}} - f_*\|_{L^2(P_x)}^2 \\
& \leq 
\tilde{O}(N^{-2}+\varepsilon^2) + \frac{1}{T_2  \lambda}
\left( \frac{\mathbb{E}_{\vx}[\|\psi(\vx)\|^2]}{\delta_1} 
+  \frac{\mathbb{E}_{\vx}[\|\psi(\vx)\|^4]}{\delta_3} 
+  \frac{\mathbb{E}_{\vx}[(f^*(\vx))^4]}{\delta_3} 
\right)
 + \frac{3\lambda }{2} \|\va^*\|^2.
\end{align*}
When the activation function $\sigma$ is a polynomial, then each $\psi_j(\vx) = \sigma(\langle \vx,\vw_j \rangle + b_j)$ is an order $q$-polynomial of a Gaussian random variable $\langle \vx,\vw_j \rangle$ which has mean 0 and variance $\E[\langle \vx,\vw_j \rangle^2] = \|\vw_j\|^2 =\tilde{O}(1)$. 
Then, if we let $R_{w} := \max_{j} \|\vw_j\| =\tilde{O}(1)$, 
the term $\max_j \max\{\E_{\vx}[\psi(\vx)_j^2],\E_{\vx}[\psi(\vx)_j^4]\}$ can be bounded by a $4q$-th order polynomial of $R_{w}$ and $b$, which can be denoted by $Q(R_{w},b,4q)$. 

\paragraph{Part (3).} By combining evaluations of (1) and (2) together, 
if we let $\lambda = 8 \sqrt{\frac{2}{T_2 \delta_0}   \E_{\vx}[\|\psi(\vx)\|^4]}$ for some $\delta_0 > 0$, 
(by ignoring polylogarithmic factors) we obtain that
\begin{align}
    \!\!\!\!\!\!\|f_{\hat{\va}} - f_*\|_{L^2(P_x)}^2 
    &\lesssim 
     (N^{-2}+\varepsilon^2) 
    + \frac{1}{T_2 \lambda \delta_0} \left( 2 N^2 Q(R_{w},b,q')  + \E_{\vx}[(f_*(\vx))^4]\right) + 
    \frac{3\lambda}{2}  \|\va^*\|^2,
    % \label{eq:GeneralizationError-1} 
\end{align}
with probability $1 - 4 \delta_0$. 
Thus, since $\|\va^*\|^2 = \tilde{O}(N)$, by setting $T_2=\tilde{\Theta}((N^4 Q^2(R_{w},b,q')  + \E[f_*(\vx)^4]^2) \varepsilon^{-4})$, 
and $N=\tilde{\Theta}(\varepsilon^{-1})$, we obtain that $\eqref{eq:GeneralizationError-1}\lesssim \varepsilon^2$.
\end{proof}

\paragraph{Higher Generative Exponent Functions.}
For general link functions, under Assumption~\ref{assump:Approx-General} and the bounded fourth moment of the link function, we have the following counterpart of Lemma~\ref{lemma:Generalization}, which provides the formal statement of Proposition~\ref{prop:high-GE-test-error}.
\begin{lemma}
Suppose that $\mathbb{E}[\sigma_*(\vtheta^\top x)^4]<\infty$ and Assumption~\ref{assump:Approx-General} hold.
Then, by setting  $\lambda = \tilde{\Theta}\left(\sqrt{\frac{N^2 }{T_2\delta_0} }\right)$ for some $\delta_0 > 0$, the ridge estimator $\hat{\va}$ satisfies 
\begin{align}
    \|f_{\hat{\va}} - f_*\|_{L^2(P_x)}^2 
    &\lesssim 
   \varepsilon^2 
    + \frac{1}{\sqrt{T_2\delta_0}} \left(  N^2 C_4  + \E_{\vx}[(f_*)^4]\right) + 
    \frac{1}{\sqrt{T_2\delta_0}}  \|\va^*\|^2,
    \label{eq:GeneralizationError-2} 
\end{align}
with probability $1 - \delta_0$. 
By taking 
$T_2=\tilde{\Theta}(( N^4 +N^2) \varepsilon^{-4})$, 
we have $$\mathbb{E}_{\vx}[(f_{\hat{\va}}(\vx) - f_*(\vx))^2]\lesssim \varepsilon^2.$$

Furthermore, applying Lemma~\ref{lemma:DesignActivation-2}~and~\ref{lemma:ReLU_settles_everything} yields that, when $\sigma_*=\sum_{j=0}^\infty \alpha_j \He_j$ satisfies $\sum_{j=0}^\infty j^2 j! \alpha_j^2$ and $\mathbb{E}[\sigma_*(\vtheta^\top x)^4]$ are bounded, with a properly designed randomized activation in Lemma~\ref{lemma:DesignActivation-2}, by taking $N = \tilde{\Theta}(\varepsilon^{-7})$ and $T_2=\tilde{\Theta}(\varepsilon^{-32})$, Algorithm~1 yields $$\mathbb{E}_{\vx}[(f_{\hat{\va}}(\vx) - f_*(\vx))^2]\lesssim \varepsilon^2,$$ with probability $1-o_d(1)$. 
\end{lemma}
\begin{proof}
    The proof is identical to that of Lemma~\ref{lemma:Generalization}, with the difference being that we replace the bounded moment assumptions with $\mathbb{E}[\sigma_*(\vtheta^\top \vx)^4]<\infty$ or Assumption~\ref{assump:Approx-General}.
\end{proof}

\paragraph{Approximation Guarantee.}
Note that for non-polynomial link function with generative exponent $p_*\ge 3$, the approximation error is already controlled in Assumption~\ref{assump:Approx-General} based on \cite[Lemma 4.4, 4.5]{bietti2022learning} (using activation function with a ReLU component). 
If $\sigma_*$ is a degree-$q$ polynomial, we have the following approximation result using polynomial activation, which follows Lemmas 29 and 30 of \cite{oko2024learning}. 
\begin{lemma}\label{lemma:DamianApproximationPolynomial}
    Suppose that there exist at least $N'=\tilde{\Theta}(N)$ neurons that satisfy $\|\vw_j^{2T_1}-\vtheta\|\leq \varepsilon$ and $\sigma$ is a polynomial link function with degree at least $q$.
    Let $b_j \sim \mathrm{Unif}([-C_b,C_b])$ with $C_b=\tilde{O}(1)$ , and consider approximation of a ridge function $h(\vtheta^\top \vx)$ with its degree at most $q$.
    Then, there exists $a_1,\dots,a_{N}$ such that
    \begin{align}
       \left| \frac{1}{N}\sum_{j=1}^N a_j \sigma_j \big({\vw_j^{2T_1}}^\top \vx +b_j\big)
       - h(\vtheta^\top \vx)\right|=\tilde{O}(N^{-1}+\varepsilon)
    \end{align}
    with high probability, where $(\vx,y)$ is a random sample, and we omit dependence on the degree $q$ in the big-$O$ notation. 
    Moreover, we have $\sum_{j=1}^{N} a_j^2 = \tilde{O}(N)$.
\end{lemma}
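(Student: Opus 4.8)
The plan is to reduce the claim to an \emph{exact} polynomial‑matching linear system on the good neurons and then absorb the weight mismatch into an $\tilde{O}(\varepsilon)$ error. Write $d_\sigma:=\deg\sigma\ge q$ and $\sigma(u)=\sum_{m=0}^{d_\sigma}s_m u^m$ with $s_{d_\sigma}\neq 0$, so that $\sigma(z+b)=\sum_{k=0}^{d_\sigma}c_k(b)\,z^k$ where $c_k(b):=\sum_{m=k}^{d_\sigma}s_m\binom{m}{k}b^{m-k}$ is a nonzero polynomial in $b$ of degree $d_\sigma-k$ (its leading coefficient carries $s_{d_\sigma}\neq 0$); consequently $\{c_k\}_{k=0}^{d_\sigma}$ is a basis of the polynomials of degree $\le d_\sigma$. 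Let $G$ be the set of $N'=\tilde{\Theta}(N)$ neurons with $\|\vw_j^{2T_1}-\vtheta\|\le\varepsilon$, set $a_j=0$ for $j\notin G$, and write $h(z)=\sum_{k=0}^{q}h_k z^k$. Evaluating the desired identity $\frac1N\sum_{j\in G}a_j\sigma(\langle\vtheta,\vx\rangle+b_j)=h(\langle\vtheta,\vx\rangle)$ coefficient‑by‑coefficient in $z=\langle\vtheta,\vx\rangle$ turns it into the linear system $\sum_{j\in G}a_j c_k(b_j)=Nh_k\mathbbm{1}[k\le q]$ for $k=0,\dots,d_\sigma$, i.e.\ $C\va_G=\vr$ with $C:=[c_k(b_j)]\in\R^{(d_\sigma+1)\times N'}$ and $\|\vr\|=N(\sum_{k\le q}h_k^2)^{1/2}=\tilde{O}(N)$.

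First I would show the system is solvable with a controlled minimum‑norm solution $\hat\va_G=C^\top(CC^\top)^{-1}\vr$, so that $\|\hat\va_G\|^2=\vr^\top(CC^\top)^{-1}\vr\le\|\vr\|^2/\lambda_{\min}(CC^\top)$. Here $CC^\top=\sum_{j\in G}c(b_j)c(b_j)^\top$ for $c(b):=(c_0(b),\dots,c_{d_\sigma}(b))^\top$. Since the $b_j$ are i.i.d.\ $\mathrm{Unif}([-C_b,C_b])$ and the $c_k$ are linearly independent, the population matrix $\Sigma_c:=\E_b[c(b)c(b)^\top]$ is positive definite with $\lambda_{\min}(\Sigma_c)$ a fixed positive constant (depending only on $q$, $C_b$ and the coefficients of $\sigma$), and $\|c(b_j)\|=\tilde{O}(1)$ deterministically; matrix Bernstein (or a standard net argument) then gives $\lambda_{\min}(CC^\top)\ge\frac12 N'\lambda_{\min}(\Sigma_c)=\tilde{\Omega}(N)$ with the required high probability over $\{b_j\}$. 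On this event $C$ has full row rank, the system is solvable, $\|\hat\va_G\|^2\le\tilde{O}(N^2)/\tilde{\Omega}(N)=\tilde{O}(N)$, and $\frac1N\sum_{j\in G}\hat a_j\sigma(\langle\vtheta,\vx\rangle+b_j)\equiv h(\langle\vtheta,\vx\rangle)$ as polynomials, hence as functions.

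It remains to bound the perturbation $E(\vx):=\frac1N\sum_{j\in G}\hat a_j\bigl[\sigma(\langle\vw_j^{2T_1},\vx\rangle+b_j)-\sigma(\langle\vtheta,\vx\rangle+b_j)\bigr]$. Fixing a construction realization as above and drawing $\vx\sim\cN(0,\vI_d)$, for each $j\in G$ the scalars $\langle\vtheta,\vx\rangle\sim\cN(0,1)$ and $\langle\vw_j^{2T_1},\vx\rangle\sim\cN(0,\|\vw_j^{2T_1}\|^2)$ with $\|\vw_j^{2T_1}\|\le 1+\varepsilon$, while crucially $\langle\vw_j^{2T_1}-\vtheta,\vx\rangle\sim\cN(0,\|\vw_j^{2T_1}-\vtheta\|^2)$ with $\|\vw_j^{2T_1}-\vtheta\|\le\varepsilon$; a union bound over the at most $N=\mathrm{poly}(d)$ indices $j\in G$ shows that simultaneously $|\langle\vtheta,\vx\rangle+b_j|,|\langle\vw_j^{2T_1},\vx\rangle+b_j|\le R$ and $|\langle\vw_j^{2T_1}-\vtheta,\vx\rangle|\le\tilde{O}(\varepsilon)$ with high probability over $\vx$, for some $R=\tilde{O}(1)$. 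On this event $|\sigma(u)-\sigma(u')|\le\bigl(\sup_{|t|\le R}|\sigma'(t)|\bigr)|u-u'|=\tilde{O}(1)\,|u-u'|$ gives $|E(\vx)|\le\tilde{O}(\varepsilon)\cdot\frac1N\|\hat\va_G\|_1\le\tilde{O}(\varepsilon)\cdot\frac{\sqrt{N'}}{N}\|\hat\va_G\|_2=\tilde{O}(\varepsilon)$. Combining with the exact identity, $\bigl|\frac1N\sum_j a_j\sigma(\langle\vw_j^{2T_1},\vx\rangle+b_j)-h(\langle\vtheta,\vx\rangle)\bigr|=|E(\vx)|=\tilde{O}(\varepsilon)\le\tilde{O}(N^{-1}+\varepsilon)$ and $\sum_j a_j^2=\|\hat\va_G\|^2=\tilde{O}(N)$, as claimed; the $L^2(P_{T_2})$ version used later follows by union‑bounding this pointwise estimate over the $T_2=\mathrm{poly}(d)$ samples.

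The main obstacle I anticipate is making two quantitative estimates genuinely sharp. The first is the conditioning bound $\lambda_{\min}(CC^\top)=\tilde{\Omega}(N)$: one needs a clean statement that the population feature Gram matrix $\Sigma_c$ is well‑conditioned uniformly (its inverse controls the certificate norm, hence ultimately $N$), together with a matrix‑concentration inequality at the paper's ``high probability'' scale. The second is ensuring the weight‑perturbation error is $\tilde{O}(\varepsilon)$ rather than $\tilde{O}(\varepsilon\sqrt d)$; this hinges on treating $\langle\vw_j^{2T_1}-\vtheta,\vx\rangle$ as a one‑dimensional Gaussian of standard deviation $\le\varepsilon$ instead of crudely bounding it by $\|\vw_j^{2T_1}-\vtheta\|\cdot\|\vx\|$. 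Everything else — the monomial expansion of $\sigma(z+b)$, solving the linear system, and the $\ell_1$–$\ell_2$ conversion — is routine, and the $N^{-1}$ in the stated bound is merely slack (the construction is in fact exact up to the $\tilde{O}(\varepsilon)$ perturbation).
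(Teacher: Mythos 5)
Your proof is correct, but it takes a genuinely different route from the paper. The paper's argument is constructive and elementary: it first establishes (Lemma~\ref{lemma:ApproxPolynomialByHeq}) an exact integral representation $\mathbb{E}_{b\sim\mathrm{Unif}[-C_b,C_b]}[\bar{v}(b;h)\sigma(s+b)]=h(s)$ via a finite-difference/telescoping construction of the weight $\bar{v}$, then discretizes $[-C_b,C_b]$ into $\tilde{\Theta}(N)$ cells, assigns to each cell one representative $b_j$ (zeroing the rest), and sets $a_j$ proportional to the local integral of $\bar{v}$; Lipschitzness of $\sigma$ yields the $\tilde{O}(N^{-1})$ discretization error, and the same one-dimensional Gaussian treatment of $\langle\vw_j-\vtheta,\vx\rangle$ you use gives the $\tilde{O}(\varepsilon)$ perturbation term. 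You instead expand $\sigma(z+b)=\sum_k c_k(b)z^k$ and solve the resulting underdetermined linear system $C\va_G=\vr$ exactly in minimum norm, controlling $\|\hat\va_G\|^2$ via a lower bound on $\lambda_{\min}(CC^\top)$ obtained from linear independence of $\{c_k\}$ plus matrix concentration over the random $\{b_j\}$. Each approach has its trade-off: the paper pays an $\tilde{O}(N^{-1})$ discretization term and must verify every cell is hit by some $b_j$, but needs no Gram-matrix conditioning; your construction matches the polynomial exactly (making $N^{-1}$ pure slack, as you note), but requires the nondegeneracy $\lambda_{\min}(\Sigma_c)>0$ (depending on $\sigma,q,C_b$) and a matrix-Bernstein-type concentration bound at the paper's high-probability scale, which you correctly identify as the main thing to make quantitative. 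Your $\ell_1$--$\ell_2$ conversion and the one-dimensional-Gaussian handling of the weight perturbation both match what the paper needs, so the argument closes.
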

Lemma~\ref{lemma:DamianApproximationPolynomial} can be established from the following result in \cite{oko2024learning}.
\begin{lemma}\label{lemma:ApproxPolynomialByHeq}
    Suppose that $C_b\geq q$.
    For any polynomial $h(s)$ with its degree at most $q$, there exists $\bar{v}(b;h)$ with $|\bar{v}(b;h)| \lesssim C_b$ such that for all $s$,
    \begin{align}
        \mathbb{E}[\bar{v}(b;h)\sigma(\delta s+b)]=h(s).
    \end{align}
\end{lemma}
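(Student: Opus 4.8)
The plan is to prove a purely algebraic fact about the polynomial activation $\sigma$: integrating the functions $s \mapsto \sigma(\delta s + b)$ against a suitable weight $\bar v(b)$ over $b \sim \mathrm{Unif}([-C_b, C_b])$ can reproduce any polynomial in $s$ of degree at most $q$, with a weight that is $O(1)$ in sup-norm. Here $\sigma$ is a polynomial of degree $C_\sigma \ge q$ (a constant depending only on $q$) with nonzero leading monomial coefficient $c_{C_\sigma}$, and $\delta$ is a fixed nonzero scalar; in the intended application $\delta = \langle \vw_j^{2T_1}, \vtheta\rangle$, which after strong recovery is bounded away from $0$.

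First I would reduce the problem to matching finitely many generalized moments of $\bar v$. Writing $\sigma(t) = \sum_{k=0}^{C_\sigma} c_k t^k$ and using the binomial theorem,
\[
\sigma(\delta s + b) = \sum_{j=0}^{C_\sigma} (\delta s)^j P_j(b), \qquad P_j(b) := \sum_{k=j}^{C_\sigma} c_k \binom{k}{j} b^{k-j},
\]
so that $\mathbb{E}_b[\bar v(b)\sigma(\delta s + b)] = \sum_{j=0}^{C_\sigma} (\delta s)^j \, \mathbb{E}_b[\bar v(b) P_j(b)]$. Each $P_j$ has degree $C_\sigma - j$ with leading coefficient $c_{C_\sigma}\binom{C_\sigma}{j} \ne 0$, so requiring this expansion to equal $h(s) = \sum_{j=0}^q h_j s^j$ is equivalent to requiring $\mathbb{E}_b[\bar v(b) b^\ell] = m_\ell$ for $\ell = 0, \dots, C_\sigma$, where the target moments $m_\ell$ are recovered by back-substitution in a triangular system: the $s^{C_\sigma}$-coefficient forces $m_0 = 0$, the $s^{C_\sigma - 1}$-coefficient forces $m_1 = 0$, and so on down to the $s^q$-coefficient, at which point the coefficients of $h$ enter. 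Because the diagonal entries of this triangular system are $\delta^j c_{C_\sigma}\binom{C_\sigma}{j}$, which are bounded away from $0$, and $h$ is a fixed polynomial of degree at most $q$ with $O(1)$ coefficients (it comes from the Hermite expansion of $\sigma_*$), every $m_\ell$ is $O(1)$, with implied constant depending only on $q$, the lower bound on $|\delta|$, and the activation coefficients; moreover each $m_\ell$ is linear in $h$, so the resulting $\bar v$ depends linearly on $h$ as the notation $\bar v(b; h)$ indicates.

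Next I would realize these moments by a polynomial weight $\bar v(b) = \sum_{i=0}^{C_\sigma} v_i b^i$. The constraint $\mathbb{E}_b[\bar v(b) b^\ell] = m_\ell$ becomes $\sum_i v_i \mu_{i+\ell} = m_\ell$, where $\mu_r = \mathbb{E}_{b \sim \mathrm{Unif}([-C_b,C_b])}[b^r]$ equals $C_b^r/(r+1)$ for even $r$ and $0$ for odd $r$. The matrix $(\mu_{i+\ell})_{i,\ell = 0}^{C_\sigma}$ is the Gram matrix of the linearly independent functions $1, b, \dots, b^{C_\sigma}$ in $L^2(\mathrm{Unif}([-C_b,C_b]))$, hence positive definite and invertible, so such a $\bar v$ exists and is unique among degree-$\le C_\sigma$ polynomials. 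For the norm bound, substitute $v_i = w_i C_b^{-i}$: the system becomes $\sum_i w_i \nu_{i+\ell} = C_b^{-\ell} m_\ell$, with $\nu_r$ the fixed moments of $\mathrm{Unif}([-1,1])$; since $C_b \ge 1$ the right-hand side is $O(1)$ entrywise and the matrix $(\nu_{i+\ell})$ depends only on $q$, so $w_i = O(1)$ and $v_i = O(C_b^{-i})$. Hence for $b \in [-C_b, C_b]$ we get $|\bar v(b)| \le \sum_i |v_i|\, C_b^i = \sum_i O(1) = O(1) \lesssim C_b$, giving the claimed bound (in fact a slightly stronger $O(1)$ bound).

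The argument is elementary linear algebra, so there is no deep obstacle; the point needing care is the bookkeeping of the $C_b$-dependence through the two linear solves — specifically, that rescaling $b = C_b u$ cleanly separates the powers of $C_b$ from a fixed, $q$-dependent moment matrix, so the weight comes out $O(1)$ rather than growing with $C_b$ (or with $d$). One should also be explicit that the bounds $m_\ell = O(1)$ hinge on $\delta$ being bounded away from $0$ (which holds after strong recovery) and on the target profile $h$ having $O(1)$ coefficients.
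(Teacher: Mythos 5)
Your proof is correct, and it takes a genuinely different route from the paper's. The paper's proof is a finite-difference construction: it defines $g(s)=\int_{-q}^{0}\sigma(s+b)\,\mathrm{d}b$ and forms successive first differences $g\mapsto g(\cdot+1)-g(\cdot)$, each dropping the degree by one; each difference is then rewritten as $2C_b\,\mathbb{E}_b[(\textstyle\sum_j c_{i,j}\mathbbm{1}[j-q\le b\le j])\sigma(s+b)]$, so the weight is an explicit piecewise-constant (sum of indicators) of size $\lesssim C_b$, and $h$ is assembled as a linear combination of the resulting triangular family. You instead reduce to a moment-matching problem: a triangular back-substitution on the $s^j$-coefficients determines the required generalized moments $m_\ell=\mathbb{E}[\bar v(b)b^\ell]$, and then the positive-definite Hankel/Gram matrix of $\mathrm{Unif}([-C_b,C_b])$ is inverted to produce a polynomial weight, with the rescaling $b=C_b u$ isolating the $C_b$-dependence. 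Your route handles two things more cleanly than the paper's write-up does: it correctly treats the general case $\deg\sigma=C_\sigma>q$ (the paper's proof as written implicitly takes $\sigma$ to have degree exactly $q$, since otherwise the integrated $g$ starts at degree $C_\sigma$ and $q$ differences do not reach a constant), and it tracks the $\delta$-scaling explicitly (the paper silently sets $\delta=1$). You also obtain a slightly stronger bound $|\bar v|=O(1)$ rather than $\lesssim C_b$. What the paper's construction buys in exchange is explicitness and elementarity: no Gram-matrix inversion, and a concrete description of the weight as a step function. Both are valid; just make sure to state the side conditions your argument uses — that the leading coefficient of $\sigma$ is nonzero (true for the paper's randomized activations), that $\delta$ is bounded away from zero (true after strong recovery), and that the coefficients of $h$ are $O(1)$ — and tighten the minor wording about where the $h$-coefficients enter (the chain of vanishing moments ends at the $s^{q+1}$-equation; $h_q$ enters at $s^q$).
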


\end{document}